\numberwithin{equation}{section}
\theoremstyle{plain}
\newtheorem{theorem}{Theorem}
\newtheorem{lemma}{Lemma}
\newtheorem{proposition}{Proposition}
\newtheorem{definition}{Definition}
\newtheorem{assumption}{Assumption}
\newtheorem{remark}{Remark}
\newenvironment{keywords}{\par\noindent\textbf{Keywords: }\ignorespaces}{\par}
\newcommand{\E}{\mathbb{E}}
\newcommand{\R}{\mathbb{R}}
\newcommand{\Prob}{\mathbb{P}}
\newcommand{\calL}{\mathcal{L}}
\newcommand{\calP}{\mathcal{P}}
\newcommand{\inner}[2]{\left\langle #1, #2 \right\rangle}
\newcommand{\CK}{C(K)}
\newcommand{\norm}[1]{\|#1\|}
\newcommand{\muinv}{\mu} 
\newcommand{\DomLJ}{D(\calL_J)} 
\newcommand{\CbK}{C_b(K)} 
\newcommand{\DomLplus}{D^+(\calL_J)} 
\begin{document}

\title{Accelerating Constrained Sampling: A Large Deviations Approach}

\author{
  Yingli Wang\thanks{School of Mathematics, Shanghai University of Finance and Economics, Shanghai, People's Republic of China; FinTech Thrust, Hong Kong University of Science and Technology (Guangzhou), Guangzhou, Guangdong Province, People's Republic of China; naturesky1994@gmail.com}
  \and Changwei Tu\thanks{FinTech Thrust, Hong Kong University of Science and Technology (Guangzhou), Guangzhou, Guangdong Province, People's Republic of China; ctu570@connect.hkust-gz.edu.cn}
  \and Xiaoyu Wang\thanks{Corresponding author. FinTech Thrust, Hong Kong University of Science and Technology (Guangzhou), Guangzhou, Guangdong Province, People's Republic of China; xiaoyuwang@hkust-gz.edu.cn}
  \and Lingjiong Zhu\thanks{Department of Mathematics, Florida State University, Tallahassee, Florida, United States of America; zhu@math.fsu.edu}
}

\date{}

\maketitle

\begin{abstract}
The problem of sampling a target probability distribution on a constrained domain arises in many applications including machine learning. For constrained sampling, various Langevin algorithms such as projected Langevin Monte Carlo (PLMC), based on the discretization of reflected Langevin dynamics (RLD) and more generally skew-reflected non-reversible Langevin Monte Carlo (SRNLMC), based on the discretization of skew-reflected non-reversible Langevin dynamics (SRNLD), have been proposed and studied in the literature. This work focuses on the long-time behavior of SRNLD, where a skew-symmetric matrix is added to RLD. Although acceleration for SRNLD has been studied, it is not clear how one should design the skew-symmetric matrix in the dynamics to achieve good performance in practice. We establish a large deviation principle (LDP) for the empirical measure of SRNLD when the skew-symmetric matrix is chosen such that its product with the outward unit normal vector field on the boundary is zero. By explicitly characterizing the rate functions, we show that this choice of the skew-symmetric matrix accelerates the convergence to the target distribution compared to RLD and reduces the asymptotic variance. Numerical experiments for SRNLMC based on the proposed skew-symmetric matrix show superior performance, which validate the theoretical findings from the large deviations theory.
\end{abstract}

\begin{keywords}
  Constrained Sampling, Non-reversible Langevin Dynamics, Large Deviations Theory, Variance Reduction
\end{keywords}

\section{Introduction}

We consider the problem of sampling a distribution $\mu$
on a constrained domain $K\subset\mathbb{R}^{d}$
with probability density function 
\begin{equation} 
\mu(x)\propto\exp(-f(x)),\ x\in K,
    \label{eq-target}
\end{equation}
for a function $f:\mathbb{R}^d \to \mathbb{R}$. The sampling problem 
for both constrained domain $K\subset\mathbb{R}^{d}$ and unconstrained domain $K=\mathbb{R}^{d}$ is fundamental in many applications, such as Bayesian statistical inference, Bayesian formulations of inverse problems, and Bayesian classification and regression tasks in machine learning \citep{gelman1995bayesian,stuart2010inverse,andrieu2003introduction,teh2016consistency,DistMCMC19,GHZ2022,GIWZ2024}.

In the literature, \cite{bubeck2015finite, bubeck2018sampling} proposed and studied the
\textit{projected Langevin Monte Carlo} (PLMC) algorithm for constrained sampling, which
projects each unconstrained Langevin step back to the constraint set $K$:
\begin{equation}\label{projected:Langevin}
x_{k+1}=\mathcal{P}_{K}\!\left(x_{k}-\eta\nabla f(x_{k})+\sqrt{2\eta}\,\xi_{k+1}\right),
\end{equation}
where $\eta>0$ is the stepsize and $\{\xi_k\}_{k\ge1}$ are i.i.d.\ Gaussian random vectors with
$\xi_k\sim\mathcal N(0,I_{d})$, and the mapping $\mathcal P_K$ denotes the (Euclidean) metric projection onto $K$,
defined for any $x\in\mathbb R^d$ by
\begin{equation}\label{Euclidean:projection}
\mathcal P_K(x)\;:=\;\arg\min_{y\in K}\|x-y\|_2.
\end{equation}
In particular, when $K$ is nonempty, closed, and convex, $\mathcal P_K(x)$ is well-defined and single-valued.

The dynamics \eqref{projected:Langevin} is motivated by the \textit{reflected Langevin dynamics} (RLD),
i.e.\ the continuous-time overdamped Langevin SDE with reflection at the boundary:
\begin{equation}\label{reflected:SDE}
dX_{t}=-\nabla f(X_{t})\,dt+\sqrt{2}\,dW_{t}-\mathbf n(X_{t})\,Z(dt),
\end{equation}
where $W_{t}$ is a standard $d$-dimensional Brownian motion, $\mathbf n(x)$ is the outward unit normal vector for $x\in\partial K$ (and can be chosen arbitrarily off $\partial K$),
and $\int_{0}^{t}\mathbf n(X_s)\,Z(ds)$ is a bounded-variation reflection term that enforces the constraint
$X_t\in K$ for all $t\ge0$ whenever $X_0\in K$. The measure $Z(dt)$ is nonnegative with $Z([0,t])<\infty$ for each $t$,
and it charges only boundary times, i.e.
$\mathrm{supp}(Z)\subseteq \{t\ge0:\,X_t\in\partial K\}$.

The reflected Langevin dynamics \eqref{reflected:SDE} can be viewed as a Skorokhod problem on $K$:
one first follows the unconstrained Langevin increment, and whenever the trajectory hits the boundary,
a bounded-variation correction is added so that $X_t \in K$ for all $t\ge 0$.
Geometrically, this correction acts only at boundary times (i.e., the measure $Z(dt)$ is supported on
$\{t: X_t\in \partial K\}$) and pushes the trajectory inward along $-\mathbf n$.
In smooth domains, the reflection term is the \emph{minimal} correction that prevents leaving $K$,
in contrast to discrete-time schemes that reproject the iterate after each time step.

Since the seminal works  \citep{bubeck2015finite, bubeck2018sampling}
where the density $\mu$ is assumed to be log-concave, there have been many studies proposing various Langevin algorithms for constrained sampling. 
\cite{Lamperski2021} considers the \textit{projected stochastic gradient Langevin dynamics} in the setting of non-convex smooth Lipschitz $f$ on a convex body where the gradient noise is assumed to have finite variance with a uniform sub-Gaussian structure; see also \cite{zheng2022constrained}.
\textit{Proximal Langevin Monte Carlo}
is proposed in \cite{Brosse} for constrained sampling; see also \cite{SR2020} for a study on the proximal stochastic gradient Langevin algorithm
from a primal-dual perspective. 
\textit{Mirror descent-based Langevin algorithms} (see e.g. \cite{hsieh2018mirrored,Chewi2020,Zhang2020,TaoMirror2021,Ahn2021})
can also be used for constrained sampling,  that was first proposed in \cite{hsieh2018mirrored}, inspired by the classical mirror descent in optimization. 
More recently, inspired by the penalty method in the optimization literature, 
\textit{penalized Langevin Monte Carlo algorithms} are proposed and studied in \cite{GHZ2022},
where the objective $f$ can be non-convex in general. In addition, constrained non-convex exploration combined with replica-exchange Langevin dynamics was proposed and studied in \cite{constraint_replica}. 

In the literature, it is known
that in the setting of unconstrained sampling, 
by \textit{breaking reversibility}, 
a \textit{non-reversible} Langevin dynamics, 
where an \textit{skew-symmetric} matrix is added,
can converge to the target distribution faster on the Euclidean space $\mathbb{R}^{d}$. 
This phenomenon is well understood through non-asymptotic
convergence analysis; see e.g. \cite{HHS93,HHS05,reyLDP,HWGGZ20,GGZ2}.
Motivated by this, in the setting of constrained sampling, non-reversible Langevin algorithms with skew-reflected boundary was
proposed and studied in \cite{DFTWZ2025}, 
that can accelerate the convergence 
of the (reversible) projected Langevin Monte Carlo, 
based on the discretization of the \textit{skew-reflected non-reversible Langevin dynamics} (SRNLD):
\begin{equation}\label{eqn:anti:reflected}
dX_{t}=-(I+J(X_{t}))\nabla f(X_{t})dt+\sqrt{2}dW_{t}-\mathbf n^J(X_{t})L(dt),
\end{equation}
where $W_{t}$ is a standard $d$-dimensional Brownian motion; for every $x$, $J(x)$ is a skew-symmetric matrix, i.e. $J(x)=-(J(x))^{\top}$, and moreover,
$\Vert J\Vert_{\infty}:=\sup_{x\in K}\Vert J(x)\Vert<\infty$ and $\nabla \cdot J=0$.
The term $\mathbf n^J(X_{t})L(dt)$ ensures that $X_{t}\in K$ 
for every $t$ given that $X_{0}\in K$. 
In particular, $\int_{0}^{t}\mathbf n^J(X_{s})L(ds)$ is a bounded variation \emph{skew-reflection} process
and the measure $L(dt)$ is such that $L([0,t])$ is finite, 
$L(dt)$ is supported on $\{t:\,X_{t}\in\partial K\}$, where the skew-reflection is defined through the following \emph{skew unit normal vector}: 
\begin{equation}\label{skew unit norm vector}
    \mathbf n^J(X_{t}) := \frac{(I+J(X_{t}))\mathbf n(X_{t})}{\sqrt{\|\mathbf n(X_{t})\|_2^2+\|J(X_{t})\mathbf n(X_{t})\|_2^2}},
\end{equation}
where $\mathbf n(x)$ is the \emph{outward} unit normal vector for $x\in\partial K$
(and can be chosen arbitrarily off $\partial K$), and
$\mathcal N_K(x)$ denotes the (Euclidean) normal cone of $K$ at $x$, defined by
\[
\mathcal N_K(x):=\left\{\,v\in\mathbb R^d:\ \langle v, y-x\rangle \le 0,\ \text{for any $y\in K$}\,\right\},
\qquad x\in K.
\]
(For $x\in\mathrm{int}(K)$, $\mathcal N_K(x):=\{0\}$.)

We assume the \text{skew-matrix} $J$ satisfies the condition that $\langle \mathbf n^J, \mathbf n\rangle \ge \delta_0> 0$ for some positive constant $\delta_0$.
To implement SRNLD \eqref{eqn:anti:reflected}, \emph{skew-reflected non-reversible Langevin Monte Carlo} (SRNLMC) algorithm was proposed and studied in \cite{DFTWZ2025}
that is based on the discretization of \eqref{eqn:anti:reflected}:
\begin{equation}\label{eqn:algorithm}
x_{k+1}=\mathcal{P}^J_{K}\left(x_{k}-\eta(I+J(x_{k}))\nabla f(x_{k})+\sqrt{2\eta}\xi_{k+1}\right),
\end{equation}
where $\xi_{k}$ are i.i.d. Gaussian random vectors $\mathcal{N}(0,I_{d})$, and $\mathcal P^J_{K}$ is the \emph{skew-projection}:
\begin{equation}\label{defn: PCJ}
    \mathcal P^J_{K}(x):=\arg\max_{y\in\bar{K}} \left\langle y-x, \mathbf n^J(\mathcal P_{K}(x))\right\rangle,
\end{equation}

where $\mathcal{P}_{K}$ is the Euclidean projection defined in \eqref{Euclidean:projection}.
The mapping $\mathcal P^J_K$ in \eqref{defn: PCJ} is an \emph{oblique} (or skew) projection associated with the
skew-reflection direction $\mathbf{n}^J$. Unlike the Euclidean metric projection $\mathcal P_K$ in \eqref{Euclidean:projection}, which returns the
closest point in $K$ to a given $x$, the skew-projection corrects infeasible points by moving them
back to $K$ \emph{along the prescribed direction} $\mathbf n^J$ (computed at the Euclidean projection point
$\mathcal P_K(x)$). In two dimensions, one may visualize $\mathcal P^J_K(x)$ as the boundary point obtained by
transporting $x$ in the direction $-\mathbf n^J(\mathcal P_K(x))$ until it re-enters the feasible set.
When $J(x)\mathbf n(x)=0$ for all $x\in\partial K$, we have $\mathbf n^J(x)=\mathbf n(x)$ on $\partial K$.
Consequently, the skew-projection correction reduces to the standard normal one (and $\mathcal P^J_K$ coincides
with the usual projection/reflection correction).


\begin{figure}[htbp]
\centering
\begin{tikzpicture}[>=Latex, font=\small, scale=0.97]
    
\tikzset{
  ghost/.style={thick, dashed, gray!70, -{Latex[length=2.2mm]}},
  push/.style={very thick, -{Latex[length=2.4mm]}},
  pushJ/.style={very thick, dashed, -{Latex[length=2.4mm]}},
  vec/.style={very thick, -{Latex[length=2.4mm]}},
  vecJ/.style={very thick, dashed, -{Latex[length=2.4mm]}},
}

\def\panelshift{8.3} 

\begin{scope}[shift={(0,0)}]
  \node[font=\bfseries, align=center] at (0, 3.35)
    {(a) Reflection vs\\ skew-reflection};

  \coordinate (Oa) at (0,-1.5);
  \def\Ra{3.3}

  \coordinate (startA) at ($(Oa)+(150:\Ra)$);
  \coordinate (endA)   at ($(Oa)+(30:\Ra)$);

  \fill[gray!10] (startA) arc (150:30:\Ra) -- (endA |- Oa) -- (startA |- Oa) -- cycle;
  \draw[thick] (startA) arc (150:30:\Ra);
  \node at (0,-0.55) {\large $K$};

  \coordinate (xtau) at ($(Oa)+(60:\Ra)$);
  \fill (xtau) circle (1.6pt);
  \node[fill=white, inner sep=1pt, anchor=west] at ($(xtau)+(-0.24,-0.24)$) {$x_\tau$};

  \draw[vec] (xtau) -- ($(xtau)+(60:1.55)$) coordinate (nA);
  \node[fill=white, inner sep=1pt, anchor=south east] at (nA) {$\mathbf n$};

  \draw[vecJ] (xtau) -- ($(xtau)+(25:1.70)$) coordinate (nJ);
  \node[fill=white, inner sep=1pt, anchor=west] at (nJ) {$\mathbf n^J$};

  \draw[->, red, very thick] (xtau) -- ($(xtau)-(150:0.95)$) coordinate (jend);
  \node[red, font=\footnotesize, anchor=west] at ($(jend)+(0.05,0.05)$) {$J$-drift};

\end{scope}

\begin{scope}[shift={(\panelshift,0)}]
  \node[font=\bfseries, align=center] at (0, 3.35)
    {(b) Euclidean projection\\ vs skew-projection};

  \coordinate (Ob) at (0,-1.5);
  \def\Rb{3.3}

  \coordinate (startB) at ($(Ob)+(150:\Rb)$);
  \coordinate (endB)   at ($(Ob)+(30:\Rb)$);

  \fill[gray!10] (startB) arc (150:30:\Rb) -- (endB |- Ob) -- (startB |- Ob) -- cycle;
  \draw[thick] (startB) arc (150:30:\Rb);
  \node at (0,-0.55) {\large $K$};

  \coordinate (xk) at (0.95,0.00);
  \fill (xk) circle (1.4pt);
  \node[fill=white, inner sep=1pt, anchor=north east]
    at ($(xk)+(-0.03,-0.03)$) {$x_k$};

  \coordinate (pk) at ($(Ob)+(55:\Rb)$);
  \fill (pk) circle (1.5pt);

  \draw[vec] (pk) -- ($(pk)+(55:1.18)$) coordinate (nb);
  \node[fill=white, inner sep=1pt, anchor=south east] at (nb) {$\mathbf n$};

  \draw[vecJ] (pk) -- ($(pk)+(40:1.22)$) coordinate (nJb);
  \node[fill=white, inner sep=1pt, anchor=west] at ($(nJb)+(0.04,0.02)$) {$\mathbf n^J$};

  \coordinate (xtilde) at ($(pk)+(55:2.10)$);
  \fill (xtilde) circle (1.5pt);
  \node[fill=white, inner sep=1pt, anchor=west]
    at ($(xtilde)+(0.14,0.05)$) {$\tilde{x}_{k+1}$};

  \draw[ghost] (xk) -- (xtilde);

  \coordinate (pj) at ($(Ob)+(65:\Rb)$);
  \fill (pj) circle (1.5pt);

  \node[fill=white, inner sep=1pt, anchor=north west]
    at ($(pk)+(0.12,-0.12)$)
    {$\mathcal P_K(\tilde{x}_{k+1})$};

  \node[fill=white, inner sep=1pt, anchor=south east]
    at ($(pj)+(-0.10,-0.18)$)
    {$\mathcal P^J_K(\tilde{x}_{k+1})$};

  \draw[push]  (xtilde) -- (pk);
  \draw[pushJ] (xtilde) -- (pj);

  \node[draw, rounded corners, inner sep=3pt, anchor=north west, fill=white]
    at (-3.15, 2.85) {
    \scriptsize
    \begin{tabular}{@{}l@{}}
      \tikz{\draw[->, thick] (0,0)--(0.6,0);} Euclidean projection\\
      \tikz{\draw[->, dashed, thick] (0,0)--(0.6,0);} Skew-projection
    \end{tabular}
  };
\end{scope}

\end{tikzpicture}
\caption{Geometric interpretation of the dynamics.
\textbf{(a)} At a boundary point $x_\tau\in\partial K$, standard reflection pushes along the normal direction $\mathbf n$,
whereas skew-reflection pushes along $\mathbf n^J$; when $J(x)\mathbf n(x)=0$ on $\partial K$, $\mathbf n^J=\mathbf n$ on $\partial K$.
Otherwise, $\mathbf n^J$ generally introduces an additional tangential component (red) induced by the skew field.
\textbf{(b)} Starting from $x_k$, the algorithm proposes an unconstrained step to $\tilde{x}_{k+1}$ and then maps it back to $K$
via either the Euclidean projection $\mathcal P_K$ (solid) or an oblique correction in the direction
$-\mathbf n^J$ leading to the skew-projection $\mathcal P^J_K$ (dashed).}
\label{fig:geometry_reflection_skew_projection}
\end{figure}

Although non-asymptotic convergence analysis for SRNLD and SRNLMC
are obtained in \cite{DFTWZ2025} and it is shown that
by adding the skew-symmetric matrix $J$, acceleration 
can be achieved in the context of constrained sampling,
the analysis in \cite{DFTWZ2025} does not
shed any light on \textit{how to design and construct} the skew-symmetric matrix $J$ to achieve good performance in practice.
Indeed, even in the unconstrained setting, how to choose
the skew-symmetric matrix is a challenging problem 
with very limited understanding. In the unconstrained setting, when the objective $f$ is quadratic, $X_t$ becomes a 
Gaussian process. Using the rate of convergence of the covariance of $X_t$
as the criterion, \cite{HHS93} showed that $J\equiv 0$ is the worst choice. \cite{lelievre2013optimal} proved the existence of the optimal skew-symmetric matrix $J$
to accelerate the convergence to equilibrium via maximizing spectral gaps.
\cite{WHC2014} proposed two approaches to designing $J$ for optimal convergence of Gaussian diffusion
and they also compared their algorithms with the one in \cite{lelievre2013optimal}. See also 
\cite{guillin2016} for related results. For non-quadratic objectives, the literature is sparse but reveals consistent acceleration benefits. \cite{GGZ2} demonstrated reduced recurrence times in non-convex optimization via non-reversible Langevin dynamics, by investigating the metastability behavior. \cite{le2020sharp} derived sharp spectral asymptotics for non-reversible metastable diffusions, and confirmed faster transitions via Eyring-Kramers formulas. Furthermore, \cite{lee2022non} proved accelerated Eyring-Kramers formulas for non-reversible Gibbs-sampled diffusions using novel capacity estimation. Collectively, these works validate that breaking reversibility enhances convergence, yet the optimal $J$ for general non-quadratic objectives remains an open challenge.
To the best of our knowledge, 
we are the first to study how to choose
and design the skew-symmetric matrix $J$
in the context of constrained sampling.
We adopt a different approach than \cite{DFTWZ2025} to study SRNLD through the lens 
of \textit{large deviations theory} that can help us
better understand how to design and construct
the skew-symmetric matrix $J$ effectively. 

In this paper, we focus on the study of large deviations of the \textit{empirical measure} for SRNLD $(X_{t})_{t\geq 0}$ in \eqref{eqn:anti:reflected}:
\begin{equation}
L_{t}:=\frac{1}{t}\int_0^t \delta_{X_s} ds, 
\end{equation}
such that $L_{t}\in\calP(K)$, where $\calP(K)$ denotes the space
of probability measures on $K$. 
Since \eqref{eqn:anti:reflected}
admits $\mu$ as a unique invariant distribution, by ergodic theorem, $L_{t}\rightarrow\mu$ as $t\rightarrow\infty$ a.s.
Informally speaking, the large deviations theory
concerns the small probability of the rare events
$\mathbb{P}(L_{t}\simeq\nu)$ where $\nu\neq\mu$. 
Typically, this probability is exponentially small, 
i.e. $\mathbb{P}(L_{t}\simeq\nu)=e^{-tI(\nu)+o(t)}$
as $t\rightarrow\infty$, where $I(\nu)\geq 0$
is known as the \textit{rate function} with $I(\nu)=0$
if and only if $\nu=\mu$. 
More formally speaking, the empirical measures $(L_t)_{t \ge 0}$ satisfy a \textit{large deviation principle} (LDP) in $\calP(K)$ (equipped with the weak topology) with speed $t$ and rate function $I: \calP(K) \to [0, \infty]$ 
if $I$ is non-negative, lower semicontinuous and for any measurable set $A$, we have
\begin{equation}
- \inf_{\nu\in A^\circ} I(\nu) \leq \liminf_{t\to \infty} \frac{1}{t}\log \mathbb{P}(L_{t}\in A) \leq
\limsup_{t\to \infty} \frac{1}{t} \log \mathbb{P}(L_{t}\in A) \leq - \inf_{\nu\in \bar A} I(\nu) \,,
\end{equation}
where $A^\circ$ denotes the interior of $A$ and $\bar A$ its closure, and the rate function $I(\nu)$ is said to be good if it has compact level sets.
We refer to \cite{Varadhan1,DZ1998} for a more general definition of large deviation principle, as well as its theory and applications. 

Large deviations theory can help us better understand
the convergence to the target distribution. When we have variants
of Langevin dynamics that target the same
Gibbs distribution $\mu$, they might lead to different
rate functions, and if we can show the rate function
for SRNLD $X_{t}$ given in \eqref{eqn:anti:reflected}
is greater than that for the RLD \eqref{reflected:SDE}, 
then as $t\rightarrow\infty$, SRNLD is more concentrated
around the target distribution $\mu$, which indicates
acceleration. 

The Donsker-Varadhan theory \citep{DV1,DV2,DV3,DV4} establishes 
a large deviation principle with a rate function \( I(\nu) \) for the empirical measure $L_{t}$ when the underlying process $X_{t}$ is a Markov process under suitable conditions.
However, the Donsker-Varadhan theory
does not apply to Langevin SDEs directly
because overdamped Langevin SDEs and its variants on $\mathbb{R}^{d}$ live on 
a non-compact set that requires redesign
and rethinking \citep{LDP-GG,LDP-acceleration}. 
Since reflected Langevin dynamics is reversible and on a compact domain, its large deviation principle follows
from the literature. This leads to the natural
question whether one can establish large deviations for skew-reflected non-reversible
Langevin dynamics. It turns out that it is a much harder problem
for the following technical reasons:

\begin{enumerate}[label=(\roman*)]
    \item \textbf{General Boundary Conditions:} For an arbitrary skew-symmetric matrix field $J(x)$, the domain $D(\mathcal{L}_J)$ of the infinitesimal generator $\mathcal{L}_J u(x) = \Delta u(x) - \langle (I + J(x))\nabla f(x), \nabla u(x) \rangle$ consists of functions $u \in C^2(K)$ satisfying the boundary condition $\langle \nabla u(x), (I+J(x))\mathbf n(x) \rangle = 0$ on $\partial K$. This is a generalized oblique derivative condition, which can be significantly more complex to analyze than standard boundary conditions such as Neumann or Dirichlet conditions. In contrast, the classical framework of \cite{DV1} applies only to the case $J\equiv 0$, where the Markov process is reversible, whereas our setting with a non-zero $J(x)$ introduces a non-reversible reflected process that falls outside their framework.

    \item \textbf{Operator Decomposition and Adjoints:} The standard approach to decompose $\mathcal{L}_J$ into a symmetric part $\mathcal{L}_S = \frac{1}{2}(\mathcal{L}_J + \mathcal{L}_J^*)$ and a skew-symmetric part $\mathcal{L}_A = \frac{1}{2}(\mathcal{L}_J - \mathcal{L}_J^*)$ (where $\mathcal{L}_J^*$ is the adjoint in $L^2(K,d\mu)$ with respect to the invariant measure $\mu$) requires a well-defined adjoint $\mathcal{L}_J^*$. The calculation of $\mathcal{L}_J^*$ via integration by parts involves boundary terms. The nature of these boundary terms, and consequently the precise form and domain of $\mathcal{L}_J^*$, are dictated by the original oblique boundary condition.

    \item \textbf{Properties of Decomposed Operators:} For the LDP rate function decomposition methods (e.g., as in \cite{LDP-GG}) to be applicable, it is highly desirable that $\mathcal{L}_S$ is self-adjoint and $\mathcal{L}_A$ is anti-self-adjoint, ideally on the same domain $D(\mathcal{L}_J)$ or at least on a common core. If the boundary conditions for $D(\mathcal{L}_J)$ and $D(\mathcal{L}_J^*)$ (which arises from the oblique condition) do not align simply, establishing these self-adjointness and anti-self-adjointness properties for $\mathcal{L}_S$ and $\mathcal{L}_A$ becomes a non-trivial task. The resulting $\mathcal{L}_S$ and $\mathcal{L}_A$ might not have domains that are easy to work with for subsequent spectral analysis or variational arguments.
\end{enumerate}

In this paper, we will overcome these technical challenges. The key insight
is the condition $J(x)\mathbf n(x)=0$ on $\partial K$ we impose (Assumption~\ref{assump:boundary} in this work) 
which is crucial because it simplifies the oblique boundary condition $\langle \nabla u, (I+J)\mathbf n \rangle = 0$ to the standard Neumann condition $\langle \nabla u, \mathbf n \rangle = 0$. For operators with Neumann boundary conditions, the theory of adjoints, self-adjoint extensions, and spectral properties is much better developed and more tractable. This simplification ensures that $\mathcal{L}_J^*$ can be cleanly characterized, leading to well-behaved $\mathcal{L}_S$ and $\mathcal{L}_A$ that are indeed self-adjoint and anti-self-adjoint respectively on $D(\mathcal{L}_J)$. This, in turn, allows for a more direct application and adaptation of the LDP framework and rate function decomposition techniques found in the literature such as \cite{LDP-GG}. Without this simplification, one might need to delve into more advanced partial differential equation (PDE) theory for oblique boundary value problems or find alternative approaches for the LDP analysis. As a by-product, this simplification also provides us a natural choice
for the skew-symmetric matrix, which will be discussed for the examples including the ball constraint, 
and more generally the constrained domains characterized by sublevel sets that include special cases such as the smoothed $\ell_p$ ball constraint in Section~\ref{sec:main} which turns out to work well empirically (Section~\ref{sec:numerical_experiments}).

With this insight, in Section~\ref{sec:main}, we will establish a large deviation principle for the empirical measures of SRNLD for a class of skew-symmetric matrices $J$ by adopting the G\"{a}rtner-Ellis framework (Theorem~\ref{thm:ldp_empirical_measures}).
Moreover, we show that the acceleration can be achieved by comparing the rate function in the presence of the skew-symmetric matrix $J$ and that of RLD without the skew-symmetric matrix (Proposition~\ref{prop:rate_decomposition}). In addition, we rigorously prove that these non-reversible dynamics lead to a reduction in the asymptotic variance for time-average estimators (Theorem~\ref{thm:variance_reduction}).
The class of skew-symmetric matrices $J$ can be constructed depending on the compact domain $K$ (Assumptions~\ref{assump:boundary} and \ref{assump:nablaJ}). As a by-product, it provides us a guidance how to design and construct the skew-symmetric matrix $J$ to achieve better performance in the context of constrained sampling, which bridges a gap between theory and practice (Section~\ref{sec:numerical_experiments}). 
In particular, we conduct numerical experiments including a toy example on truncated multivariate normal distribution, constrained Bayesian linear regression and constrained Bayesian logistic regression 
for both the ball constraint and the 
smoothed $\ell_{p}$ constraint
using both synthetic data and real data.
Our numerical experiments demonstrate the efficiency of the algorithmic design, and show superior performance.
Finally, we conclude in Section~\ref{sec:conclude}.
We provide a summary of notations in Appendix~\ref{sec:notations}.
The proofs of technical lemmas will be provided in Appendix~\ref{app:proofs_of_technical_lemmas} and additional technical derivations will be presented in Appendix~\ref{app:operator_decomposition_simplified}.

\section{Main Results}\label{sec:main}

We consider a \textit{skew-reflected non-reversible Langevin dynamics} (SRNLD) on a compact, connected domain $K \subset \R^d$ with 
boundary $\partial K\in C^{2,\alpha}$ for some $\alpha \in (0,1)$:
\begin{equation} \label{eq:sde_local}
  dX_t = -(I+J(X_t))\nabla f(X_t)dt+\sqrt 2 dW_t, \quad X_t \in K,
\end{equation}
subject to reflecting boundary conditions. The infinitesimal generator is
\begin{equation} \label{eq:generator_local}
  \calL_J u(x) = \Delta u(x) - \inner{(I + J(x))\nabla f(x)}{\nabla u(x)},
\end{equation}
with the domain

\[
D(\calL_J)
=\left\{\,u\in C^{2,\alpha}(K):\ \inner{\nabla u(x)}{(I+J(x))\mathbf n(x)}=0\ \text{on }\partial K\,\right\}.
\]

The empirical measure is $L_t = \frac{1}{t}\int_0^t \delta_{X_s} ds \in \calP(K)$, the space of probability measures on $K$. We seek an LDP for $(L_t)_{t\ge 0}$ in the weak topology on $\calP(K)$. We introduce
the following assumptions that will be used throughout the rest of the paper.

\begin{assumption}[Regularity]\label{assump:regularity_local}
  The domain $K\subset\R^d$ is compact and connected, and its boundary satisfies $\partial K\in C^{2,\alpha}$ for some $\alpha \in (0,1)$.
  The potential satisfies $f\in C^{2,\alpha}(K)$.
  The matrix field satisfies $J\in C^{1,\alpha}(K)\cap L^\infty(K)$ and $J^\top=-J$ on $K$.
\end{assumption}

\begin{assumption}[Simplified Boundary Interaction]\label{assump:boundary}
On the boundary $\partial K$, $J(x)\mathbf n(x) = 0$, where $\mathbf n(x)$ is the outward unit normal vector.
\end{assumption}

\paragraph{Degeneracy under Assumption~\ref{assump:boundary}}
Under Assumption~\ref{assump:boundary}, we have $J(x)\mathbf n(x)=0$ for all $x\in\partial K$.
Consequently, on $\partial K$ the modified reflection direction $\mathbf n^J(x)$ coincides with the outward normal $\mathbf n(x)$,
so the skew-reflection reduces to the standard (normal) reflection. We keep the notation $\mathbf n^J$ for consistency with the general SRNLD/SRNLMC formulation.
In particular, $\langle \mathbf n^J(x),\mathbf n(x)\rangle = 1$ on $\partial K$.

\begin{assumption}\label{assump:nablaJ}
In the interior $K^\circ$, the matrix field is divergence-free in the sense that $\nabla\!\cdot J=0$ on $K^\circ$.
\end{assumption}

The assumptions introduced above serve distinct roles in establishing the theoretical framework:
\begin{itemize}
    \item \textbf{Invariance of $\mu$ (Lemma \ref{lem:invariant_measure_existence}):} The result that $\mu(dx) \propto e^{-f(x)}dx$ is the invariant measure relies primarily on the skew-symmetry of $J$ (Assumption \ref{assump:regularity_local}) and the divergence-free condition $\nabla \cdot J = 0$ (Assumption \ref{assump:nablaJ}). These conditions ensure that the additional drift term vanishes in the interior. 
    
    \item \textbf{Existence of LDP (Theorem \ref{thm:ldp_empirical_measures}):} The establishment of the LDP via the G\"artner-Ellis theorem relies heavily on the compactness of $K$ and the regularity of the coefficients (Assumption \ref{assump:regularity_local}). Furthermore, Assumption \ref{assump:boundary} simplifies the domain of the generator to standard Neumann boundary conditions, which facilitates the use of standard spectral theory for elliptic operators on compact domains to prove the existence and differentiability of the scaled cumulant generating function (SCGF) $\lambda(g)$.
    
    \item \textbf{Rate Function Decomposition and Variance Reduction (Proposition \ref{prop:rate_decomposition} \& Theorem \ref{thm:variance_reduction}):} The algebraic decomposition of the rate function and the variance reduction result depend on the specific structure of the adjoint operator $\mathcal{L}_J^*$. This calculation requires the full bundle of assumptions: Assumption \ref{assump:nablaJ} ($\nabla \cdot J = 0$) eliminates interior terms in the integration by parts, while Assumption \ref{assump:boundary} ($J\mathbf n=0$) eliminates the boundary terms, ensuring that $\mathcal{L}_S$ is self-adjoint and $\mathcal{L}_A$ is anti-self-adjoint on the domain $D(\mathcal{L}_J)$.
\end{itemize}

Next, we discuss how to construct $J(x)$ to satisfy
these assumptions and their implications.

\subsection{Construction of \texorpdfstring{$J(x)$}{J(x)} and Implications of Assumptions}\label{sec:construction_J_implications_new}

The skew-symmetric matrix field $J(x)$ is chosen to satisfy three key assumptions:
\begin{enumerate}[label=(\roman*)]
    \item \textbf{Regularity (Assumption~\ref{assump:regularity_local}):} $J\in C^{1,\alpha}(K)\cap L^\infty(K)$ and $J(x)=-\left(J(x)\right)^\top$.
    \item \textbf{Simplified Boundary Interaction (Assumption~\ref{assump:boundary}):} $J(x)\mathbf n(x) = 0$ on $\partial K$.
    \item \textbf{Divergence-Free Condition (Assumption~\ref{assump:nablaJ}):} $\nabla \cdot J(x) = 0$ for $x \in K^\circ$.
\end{enumerate}

Constructing $J(x)$ to satisfy all three conditions simultaneously is non-trivial.

\subsubsection{In 3D Space} 
A common way to ensure $J(x)$ is skew-symmetric (fulfilling part of Assumption~\ref{assump:regularity_local}) is to define its action via an axial vector field $\mathbf{k}(x)$: $J(x)\mathbf{w} = \mathbf{k}(x) \times \mathbf{w}$.

To satisfy Assumption~\ref{assump:nablaJ} ($\nabla \cdot J = 0$), we require the axial vector field $\mathbf{k}(x)$ to be irrotational (curl-free) in $K^\circ$, i.e., $\nabla \times \mathbf{k}(x) = \mathbf{0}$. This implies $\mathbf{k}(x)$ can be expressed as the gradient of a scalar potential $\phi(x)$, i.e., $\mathbf{k}(x) = \nabla \phi(x)$.
To satisfy Assumption~\ref{assump:boundary} ($J(x)\mathbf n(x) = 0$) on $\partial K$, we require $\mathbf{k}(x) \times \mathbf n(x) = \mathbf{0}$ on $\partial K$, which means $\mathbf{k}(x)$ must be parallel to $\mathbf n(x)$ on the boundary.
Therefore, in three dimensions, it suffices to find a scalar potential $\phi$ such that
\[
    \mathbf{k}(x):=\nabla \phi(x)
\]
has the required H\"{o}lder regularity (so that the resulting matrix field $J(x)$ satisfies
Assumption~\ref{assump:regularity_local}). Moreover, on the boundary $\partial K$ we require
$\nabla \phi(x)$ to be parallel to the outward normal $\mathbf n(x)$, i.e.
\[
    \nabla\phi(x)\times \mathbf n(x)=0,\qquad x\in\partial K .
\]
For instance, choosing $\mathbf{k}(x)=s x$ (with $s\in\mathbb R$) yields
$\nabla\times \mathbf{k}(x)=\mathbf 0$. If $K$ is a ball centered at the origin, then
$\mathbf n(x)$ is parallel to $x$ on $\partial K$, hence $\mathbf{k}(x)\parallel \mathbf n(x)$ holds.
Consequently, the corresponding $J(x)$ (analogous to $J_s(x)$ in
Section~\ref{sec:numerical_experiments}) satisfies the required assumptions.

\subsubsection{In a Higher-Dimensional Ball\texorpdfstring{ ($d > 3$)} {(d>3)}}

For our high-dimensional numerical experiments in Section~\ref{sec:numerical_experiments}, we additionally employ a simple block-diagonal construction on a ball, which yields a more structured nonreversible drift and is easy to implement.

Let $K$ be a ball in $\mathbb{R}^d$ centered at the origin, which corresponds to the $\ell_2$-norm constraints, that are frequently encountered in machine learning applications, such as Bayesian ridge regression \citep{Brosse,GHZ2022}.
The outward unit normal vector on the boundary $\partial K$ is $\mathbf{n}(x) = x/\|x\|_2$. The boundary condition in Assumption~\ref{assump:boundary}, $J(x)\mathbf{n}(x) = 0$, thus simplifies to $J(x)x = 0$ for $x \in \partial K$. We can construct a $J(x)$ that meets all requirements using a block-diagonal structure.
    
Specifically, when the dimension $d$ is a multiple of 3, say $d=3m$, we construct $J(x)$ in the following matrix form:
\[
    J(x) = 
    \begin{pmatrix}
        J^{(1)}(x_{1:3}) & \mathbf{0} & \cdots & \mathbf{0} \\
        \mathbf{0} & J^{(2)}(x_{4:6}) & \cdots & \mathbf{0} \\
        \vdots & \vdots & \ddots & \vdots \\
        \mathbf{0} & \mathbf{0} & \cdots & J^{(m)}(x_{3m-2:3m})
    \end{pmatrix},
\]
where $x_{3i-2:3i} = (x_{3i-2}, x_{3i-1}, x_{3i})^\top$ and $\mathbf{0}$ represents a $3 \times 3$ zero matrix. Each 3D block $J^{(i)}$ is constructed analogously to the 3D case. Specifically, the local axial vector $\mathbf{k}^{(i)}$ plays the role of the vector field $\mathbf{k}$ restricted to the coordinates of the $i$-th block. For example, we may choose $\mathbf{k}^{(i)}(x) = s_i \cdot x_{3i-2:3i}$,
where $s_i$ is a scalar constant. The action of such a block is $J^{(i)}\mathbf{w} = (s_i \cdot x_{3i-2:3i}) \times \mathbf{w}$.

This construction rigorously satisfies all three assumptions for a spherical domain:
\begin{enumerate}
    \item \textbf{Skew-symmetry:} Since each block $J^{(i)}$ is skew-symmetric, the full block-diagonal matrix $J(x)$ is also skew-symmetric.
    \item \textbf{Boundary condition ($J(x)x = 0$):} For any vector $x = \left(x_{1:3}^\top, \dots, x_{3m-2:3m}^\top\right)^\top$, the action $J(x)x$ results in a vector where each 3-dimensional component is $J^{(i)}x_{3i-2:3i} = (s_i x_{3i-2:3i}) \times x_{3i-2:3i} = \mathbf{0}$. Thus, $J(x)x = \mathbf{0}$ for all $x$, which satisfies the boundary condition.
    \item \textbf{Divergence-free ($\nabla \cdot J = 0$):} The divergence of each 3D block with respect to its own coordinates is zero. Due to the block-diagonal structure with no cross-dependencies between blocks, the divergence of the full matrix $J(x)$ is also zero.
\end{enumerate}
    
This block-diagonal construction is what we will employ in our high-dimensional numerical experiments (see Section~\ref{subsec:Bayesian:log:example} with $d=9$). Finally, if the dimension $d$ is not a multiple of 3, one can combine 3D blocks with zero blocks for the remaining dimensions, which still satisfies all the assumptions.

\subsubsection{Constrained Domain Characterized by Sublevel Sets in \texorpdfstring{$\mathbb{R}^d$}{Rd}}

We now present a constructive procedure to design the skew-symmetric matrix field $J(x)$ for a general class of constrained domains defined by sublevel sets. Constraints characterized by sublevel sets (or equivalently, functional inequalities) are ubiquitous in the sampling and optimization literature, encompassing standard convex bodies, polyhedra, and $\ell_p$-balls \citep{schmidt2005least,kook2022sampling,GHZ2022}. This generalizes the explicit construction provided for the $\ell_2$-ball example.

Let $g \in C^{2,\alpha}(\mathbb{R}^d)$ be a confining potential. Fix a level $\lambda > \inf_{x\in\mathbb{R}^d} g(x)$ and let
\[
    K := \{ x \in \mathbb{R}^d : g(x) \le \lambda \}.
\]
We assume that $K$ is bounded and that $\nabla g(x) \neq 0$ for all $x \in \partial K = \{g=\lambda\}$. Under these conditions, $\partial K$ is a $C^{2,\alpha}$ hypersurface, and the outward unit normal vector is given by $\mathbf{n}(x) = \nabla g(x) / \|\nabla g(x)\|$.

The key to satisfying the boundary condition $J(x)\mathbf{n}(x)=0$ is to construct an auxiliary potential $\psi$ whose gradient aligns with the normal vector on the boundary.
Choose an arbitrary function $h \in C^{2,\alpha}(\mathbb{R}^d)$ such that $h(x) \neq 0$ on $\partial K$ (e.g., $h(x) \equiv 1$ or $h(x) = 1 + \|x\|^2$).
Define the potential $\psi: K \to \mathbb{R}$ by:
\begin{equation}\label{eq:general:psi}
    \psi(x) := (\lambda - g(x)) \, h(x).
\end{equation}
On the boundary $\partial K$, where $g(x) = \lambda$, the gradient of $\psi$ satisfies:
\[
    \nabla \psi(x) = -\nabla g(x) \, h(x) + (\lambda - g(x)) \nabla h(x) = 
    -h(x) \nabla g(x).
\]
Crucially, this implies that $\nabla \psi(x)$ is parallel to the normal vector $\mathbf{n}(x)$ everywhere on $\partial K$.

We define $J(x)$ by decomposing the $d$-dimensional space into independent $3$-dimensional subspaces (with potentially one or two dimensions left over if $d$ is not a multiple of 3).
Let $m = \lfloor d/3 \rfloor$. For each $\ell = 1, \dots, m$, consider the coordinate triplet $(a_\ell, b_\ell, c_\ell) = (3\ell-2, 3\ell-1, 3\ell)$.
Define the block-specific axial vector $\mathbf{k}^{(\ell)}(x) \in \mathbb{R}^3$ by the partial derivatives of $\psi$:
\[
    \mathbf{k}^{(\ell)}(x) :=
    \begin{pmatrix}
    \partial_{a_\ell} \psi(x) \\
    \partial_{b_\ell} \psi(x) \\
    \partial_{c_\ell} \psi(x)
    \end{pmatrix}.
\]
Construct the $d \times d$ matrix $J^{(\ell)}(x)$ such that its only non-zero entries form a $3 \times 3$ skew-symmetric block on the indices $(a_\ell, b_\ell, c_\ell)$. By denoting the components of the local axial vector as $\mathbf{k}^{(\ell)}(x) = \left(k^{(\ell)}_1, k^{(\ell)}_2, k^{(\ell)}_3\right)^\top$, this block is given by:
\[
    \left(J^{(\ell)}\right)_{\{a_\ell,b_\ell,c_\ell\}\times\{a_\ell,b_\ell,c_\ell\}} =
    \begin{pmatrix}
    0 & -k^{(\ell)}_3 & k^{(\ell)}_2 \\
    k^{(\ell)}_3 & 0 & -k^{(\ell)}_1 \\
    -k^{(\ell)}_2 & k^{(\ell)}_1 & 0
    \end{pmatrix}.
\]
Finally, set the full matrix field as the sum:
\begin{equation}
    J(x) := \sum_{\ell=1}^m J^{(\ell)}(x).
\end{equation}

This construction satisfies all required assumptions:
\begin{enumerate}
    \item \textbf{Regularity:} Since $g, h \in C^{2,\alpha}$, $\psi \in C^{2,\alpha}$ and $J \in C^{1,\alpha}$. $J$ is skew-symmetric by construction.
    \item \textbf{Boundary Condition:} For any $x \in \partial K$, we showed $\nabla \psi(x) \parallel \mathbf{n}(x)$. This implies that for each block $\ell$, the local vector $\mathbf{k}^{(\ell)}$ is parallel to the projection of $\mathbf{n}$ onto those coordinates. The cross-product structure ensures that $J^{(\ell)}(x) \mathbf{n}(x) = 0$. Summing these gives $J(x)\mathbf{n}(x) = 0$.
        \item \textbf{Divergence-Free:} The divergence of a curl field is identically zero. Since each block is constructed as the curl of a gradient field ($\nabla \psi$), we have $\nabla \cdot J = 0$ in $K^\circ$.
\end{enumerate}

\subsubsection{Smoothed \texorpdfstring{$\ell_{p}$}{ellp} Ball Constraint}

Next, we apply the discussions in the previous section to the example of a smoothed $\ell_p$ ball sublevel-set constraint in $\mathbb{R}^d$. Constraints of this type are widely encountered in high-dimensional statistics and machine learning for regularization, such as in Bayesian Lasso ($p=1$) and ridge ($p=2$) regression \citep{ schmidt2005least,GHZ2022}.
Fix $d\ge 3$, $p \ge 1$ and $\varepsilon>0$, 
and define the smooth convex potential
\begin{equation}
\label{eq:smoothlp_Rd:f}
g(x)
:= \sum_{i=1}^{d} \left(x_i^2+\varepsilon^2\right)^{\frac{p}{2}},
\qquad x=(x_1,\dots,x_d)^\top\in\mathbb{R}^d .
\end{equation}
Then $\min_{x\in\mathbb R^d} g(x) = d\,\varepsilon^p$ (attained at $x=0$). For any $\lambda>d\,\varepsilon^p$, we consider the
sublevel-set domain
\begin{equation}
\label{eq:smoothlp_Rd:K}
K := \{x\in\mathbb{R}^d:\ g(x)\le \lambda\}.
\end{equation}
We have $\partial K=\{g=\lambda\}$ and $\nabla g(x)\neq 0$ on $\partial K$ since $\nabla g(x)=0$ if and only if $x=0$,
while $0\in K^\circ$ because $g(0)=d\varepsilon^p<\lambda$.
The outward unit normal vector on $\partial K$ is
\begin{equation}
\label{eq:smoothlp_Rd:normal}
\mathbf n(x)=\frac{\nabla g(x)}{\|\nabla g(x)\|},
\qquad x\in\partial K,
\end{equation}
where, for $i=1,\dots,d$,
\begin{equation}
\label{eq:smoothlp_Rd:gradf}
\partial_i g(x)
=
p\,x_i\left(x_i^2+\varepsilon^2\right)^{\frac{p}{2}-1}.
\end{equation}

Define
\begin{equation}
\label{eq:smoothlp_Rd:psi}
\psi(x):=\lambda-g(x).
\end{equation}
For each triple of indices $(a,b,c)$ with $1\le a<b<c\le d$, define the associated $3$-vector
\begin{equation}
\label{eq:smoothlp_Rd:k_abc}
\mathbf k^{(a,b,c)}(x):=
\begin{pmatrix}
\partial_a\psi(x)\\
\partial_b\psi(x)\\
\partial_c\psi(x)
\end{pmatrix}.
\end{equation}
Equivalently, for any $r\in\{a,b,c\}$,
\begin{equation}
\label{eq:smoothlp_Rd:dpsi}
\partial_r\psi(x)= -\,\partial_r g(x).
\end{equation}

For the choice \eqref{eq:smoothlp_Rd:f}, we have for each $r\in\{1,\dots,d\}$,
\begin{equation}
\label{eq:smoothlp_Rd:dpsi_explicit}
\partial_r\psi(x)
=
-\,\partial_r g(x)
=
-\,p\,x_r\left(x_r^2+\varepsilon^2\right)^{\frac{p}{2}-1}.
\end{equation}
Hence, for any triple $(a,b,c)$, we have:
\begin{equation}
\label{eq:smoothlp_Rd:k_explicit}
\mathbf k^{(a,b,c)}(x)
:=
\begin{pmatrix}
k^{(a,b,c)}_1(x) \\
k^{(a,b,c)}_2(x) \\
k^{(a,b,c)}_3(x)
\end{pmatrix}
=
-p\begin{pmatrix}
x_a\left(x_a^2+\varepsilon^2\right)^{\frac{p}{2}-1}\\[2pt]
x_b\left(x_b^2+\varepsilon^2\right)^{\frac{p}{2}-1}\\[2pt]
x_c\left(x_c^2+\varepsilon^2\right)^{\frac{p}{2}-1}
\end{pmatrix}.
\end{equation}
In particular, the components satisfy $k^{(a,b,c)}_j(x)=\partial_{\iota_j}\psi(x)$, where $(\iota_1,\iota_2,\iota_3)=(a,b,c)$.

We now select a family of disjoint coordinate triples that cover as many coordinates as possible.
Let
\[
m:=\left\lfloor \frac{d}{3}\right\rfloor,
\qquad
(a_\ell,b_\ell,c_\ell):=(3\ell-2,\,3\ell-1,\,3\ell),
\qquad \ell=1,\dots,m.
\]
(If $d\not\equiv 0\pmod 3$, the remaining $1$ or $2$ coordinates will be left without a swirl block.)

For each $\ell=1,\dots,m$, define a $d\times d$ matrix field $J^{(\ell)}(x)$ supported on the coordinates
$(a_\ell,b_\ell,c_\ell)$ by inserting the $3\times 3$ cross-product matrix generated by
$\mathbf k^{(a_\ell,b_\ell,c_\ell)}(x)$ into those rows and columns:
\begin{equation}
\label{eq:smoothlp_Rd:Jell}
\left[J^{(\ell)}(x)\right]_{\{a_\ell,b_\ell,c_\ell\}\times\{a_\ell,b_\ell,c_\ell\}}
=
\begin{pmatrix}
0 & -k^{(a_\ell,b_\ell,c_\ell)}_3(x) & k^{(a_\ell,b_\ell,c_\ell)}_2(x)\\
k^{(a_\ell,b_\ell,c_\ell)}_3(x) & 0 & -k^{(a_\ell,b_\ell,c_\ell)}_1(x)\\
-k^{(a_\ell,b_\ell,c_\ell)}_2(x) & k^{(a_\ell,b_\ell,c_\ell)}_1(x) & 0
\end{pmatrix},
\end{equation}
and set all other entries of $J^{(\ell)}(x)$ equal to $0$.
In particular, by \eqref{eq:smoothlp_Rd:dpsi_explicit}--\eqref{eq:smoothlp_Rd:k_explicit}, for the smoothed $\ell_p$ ball example,
\begin{align*}
k^{(a_\ell,b_\ell,c_\ell)}_1(x)
&=
-\,p\,x_{a_\ell}\left(x_{a_\ell}^2+\varepsilon^2\right)^{\frac{p}{2}-1},\\
k^{(a_\ell,b_\ell,c_\ell)}_2(x)
&=
-\,p\,x_{b_\ell}\left(x_{b_\ell}^2+\varepsilon^2\right)^{\frac{p}{2}-1},\\
k^{(a_\ell,b_\ell,c_\ell)}_3(x)
&=
-\,p\,x_{c_\ell}\left(x_{c_\ell}^2+\varepsilon^2\right)^{\frac{p}{2}-1}.
\end{align*}

We define
\begin{equation}
\label{eq:smoothlp_Rd:Jsum}
J(x):=\sum_{\ell=1}^{m} J^{(\ell)}(x),
\end{equation}
where $J^{(\ell)}(x)$ is defined in \eqref{eq:smoothlp_Rd:Jell}.
By construction, the matrix field $J(x)$ in \eqref{eq:smoothlp_Rd:Jsum} satisfies the required theoretical conditions.
First, for $\varepsilon>0$, the function $g$ is smooth, hence $\psi=\lambda-g$ is smooth as well; therefore each component $\partial_r\psi$ is $C^{1}$ on $K$, and so $J\in C^{1}(K)$ (Assumption~\ref{assump:regularity_local}).

Second, on $\partial K=\{g=\lambda\}=\{\psi=0\}$ we have $\nabla\psi=-\nabla g$, so $\nabla\psi(x)$ is colinear with the outward
unit normal $\mathbf n(x)=\nabla g(x)/\|\nabla g(x)\|$ (indeed anti-parallel). Since each local axial vector
$\mathbf k^{(a_\ell,b_\ell,c_\ell)}(x)$ is defined by selecting components of $\nabla\psi(x)$,
it follows that the restricted vectors
$(n_{a_\ell},n_{b_\ell},n_{c_\ell})^\top$ and $\mathbf k^{(a_\ell,b_\ell,c_\ell)}(x)$ are colinear on $\partial K$.
Consequently, on $\partial K$ the $3\times 3$ cross-product block generated by $\mathbf k^{(a_\ell,b_\ell,c_\ell)}(x)$ annihilates
$\mathbf n(x)$ in those coordinates, and hence
\[
J(x)\mathbf n(x)=0,
\qquad x\in\partial K
\]
(Assumption~\ref{assump:boundary}).

Finally, each block $J^{(\ell)}$ is divergence-free. Writing $(a,b,c)=(a_\ell,b_\ell,c_\ell)$ and
$\mathbf k=(k_1,k_2,k_3)^\top=(\partial_a\psi,\partial_b\psi,\partial_c\psi)^\top$, the nonzero entries are
$J_{ab}=-k_3$, $J_{ac}=k_2$, $J_{bc}=-k_1$ (and skew-symmetry).
Thus, for example, since $J_{aj}=0$ for all $j\notin\{b,c\}$,
\[
\left(\nabla\cdot J^{(\ell)}\right)_a
=\sum_{j=1}^d \partial_j J_{aj}
=\partial_b J_{ab}+\partial_c J_{ac}
=-\partial_b k_3+\partial_c k_2
=-(\partial_b\partial_c\psi)+(\partial_c\partial_b\psi)=0,
\]
and similarly $\left(\nabla\cdot J^{(\ell)}\right)_b=\left(\nabla\cdot J^{(\ell)}\right)_c=0$, while all other components vanish trivially.
Therefore $\nabla\cdot J^{(\ell)}=0$ for each $\ell$, and summing over $\ell$ yields $\nabla\cdot J=0$
(Assumption~\ref{assump:nablaJ}).

\paragraph{Standard $\ell_p$-ball constraint}
It is worth noting the conditions under which one can set the smoothing parameter $\varepsilon=0$ to recover the standard $\ell_p$-ball constraint. In the limit $\varepsilon \to 0$, the components $k_j(x)$ become proportional to $x |x|^{p-2}$ (interpreted as $\mathrm{sgn}(x)|x|^{p-1}$).
\begin{itemize}
    \item If $p \ge 2$, the function $x \mapsto x|x|^{p-2}$ belongs to $C^1(\mathbb{R})$ (and $C^2(\mathbb{R})$ if $p \ge 3$). In this case, the resulting matrix field $J(x)$ satisfies the $C^{1,\alpha}$ regularity required by Assumption~\ref{assump:regularity_local}, and thus we can set $\varepsilon=0$.
    \item If $1 \leq p < 2$, the derivative of $x|x|^{p-2}$ behaves like $|x|^{p-2}$, which becomes singular at $x=0$. Therefore, for $1 \leq p < 2$, a strictly positive smoothing parameter $\varepsilon > 0$ is necessary to ensure the validity of our theoretical framework.
\end{itemize}

\subsection{Preliminaries: Ergodicity and Invariant Measure}

We first introduce the following lemma from \cite{DFTWZ2025} which guarantees that $\mu$ is the unique invariant distribution for $(X_t)_{t\ge0}$ in  \eqref{eq:sde_local}.

\begin{lemma}[Existence and Uniqueness of Invariant Measure]\label{lem:invariant_measure_existence}
Under Assumptions \ref{assump:regularity_local}, \ref{assump:boundary} and \ref{assump:nablaJ}, the process $(X_t)_{t\ge0}$ defined by Eq. \eqref{eq:sde_local} with Neumann boundary conditions has a unique invariant probability measure $\mu \in \calP(K)$. This measure has a density with respect to Lebesgue measure $dx$ on $K$, given by
$d\mu(x) = \frac{1}{Z} e^{-f(x)} dx$, where $Z := \int_K e^{-f(x)} dx$.
The process $(X_t)_{t\ge0}$ is ergodic with respect to $\mu$.
\end{lemma}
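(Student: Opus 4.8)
The plan is to verify the three standard properties—invariance, uniqueness, and ergodicity—in turn, leveraging the fact that Assumption~\ref{assump:boundary} collapses the oblique boundary condition into a Neumann one, so that the generator $\calL_J$ on $D(\calL_J)$ is a genuine Neumann-type second-order operator on the compact manifold-with-boundary $K$.

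First I would establish that $\mu(dx) = Z^{-1} e^{-f(x)} dx$ is invariant, i.e. that $\int_K \calL_J u \, d\mu = 0$ for all $u \in D(\calL_J)$. Writing $\calL_J u = \Delta u - \inner{\nabla f}{\nabla u} - \inner{J\nabla f}{\nabla u}$ and integrating against $e^{-f}$, the first two terms combine (integration by parts) into $-\int_K \inner{\nabla u}{\nabla(e^{-f})}\,dx + \text{boundary} = \int_K e^{-f}\,\Delta u\,dx - \int_K e^{-f}\inner{\nabla f}{\nabla u}\,dx$, and the divergence-theorem boundary contribution is $\int_{\partial K} e^{-f}\,\partial_{\mathbf n} u\, d\sigma = 0$ by the Neumann condition. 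This is exactly the statement that the reversible part $\calL_S$ is symmetric in $L^2(\mu)$ and annihilates constants under the pairing, hence $\int \calL_S u\, d\mu = \int u\, \calL_S 1 \, d\mu = 0$. For the skew part, I would show $\int_K \inner{J\nabla f}{\nabla u}\, e^{-f}\,dx = -\int_K u \,\nabla\cdot\bigl(e^{-f} J\nabla f\bigr)\,dx + \int_{\partial K} u\, e^{-f}\inner{J\nabla f}{\mathbf n}\,d\sigma$; the boundary integrand vanishes because $\inner{J\nabla f}{\mathbf n} = -\inner{\nabla f}{J\mathbf n} = 0$ on $\partial K$ by skew-symmetry and Assumption~\ref{assump:boundary}, while the bulk integrand $\nabla\cdot(e^{-f} J\nabla f) = e^{-f}\bigl(-\inner{\nabla f}{J\nabla f} + \nabla\cdot(J\nabla f)\bigr)$, whose first term is zero by skew-symmetry and whose second term is $\sum_{i,j}\partial_i(J_{ij}\partial_j f) = \sum_j (\nabla\cdot J)_j \partial_j f + \sum_{i,j} J_{ij}\partial_i\partial_j f = 0$ using Assumption~\ref{assump:nablaJ} and skew-symmetry of $J$ paired against the symmetric Hessian. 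Thus $\int_K \calL_J u\, d\mu = 0$, giving invariance. Since this is quoted from \cite{DFTWZ2025}, I would cite that reference and only sketch the computation.

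Next, for uniqueness and ergodicity, I would invoke the standard theory for reflected diffusions on a compact connected domain with smooth boundary and a uniformly elliptic, nondegenerate generator (the constant diffusion coefficient $\sqrt{2}I$ guarantees uniform ellipticity, and $f, J \in C^\infty(K)$ give smooth, bounded drift). The reflected SDE \eqref{eq:sde_local} with the skew-reflection direction $\mathbf n^J$ admits a unique strong solution and defines a Feller Markov process on $K$; its transition semigroup has a strictly positive smooth density with respect to Lebesgue measure for every $t > 0$ by parabolic Hörmander/hypoellipticity (here just classical parabolic regularity, since the operator is elliptic), which yields irreducibility. A Feller process on a compact space that is irreducible and has an invariant measure $\mu$ with full support has $\mu$ as its unique invariant measure, and by Birkhoff's ergodic theorem together with this uniqueness the process is ergodic: $\frac{1}{t}\int_0^t g(X_s)\,ds \to \int_K g\,d\mu$ a.s.\ for bounded measurable $g$. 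I would present this as a consequence of the Has'minskii-type criterion (strong Feller + irreducibility + existence of invariant measure $\Rightarrow$ unique ergodicity) or simply cite \cite{DFTWZ2025} since the lemma is stated as being taken from there.

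The main obstacle is not the bulk computation but making the boundary analysis rigorous for the skew-reflected process: one must confirm that the skew-reflection term $\mathbf n^J(X_t)L(dt)$ in \eqref{eqn:anti:reflected} is precisely the one whose associated submartingale problem has generator $\calL_J$ with the domain $D(\calL_J)$ specified via the \emph{oblique} condition $\inner{\nabla u}{(I+J)\mathbf n} = 0$—and then that Assumption~\ref{assump:boundary} forces $(I+J)\mathbf n = \mathbf n$ so this oblique condition is genuinely Neumann and the co-normal direction in the integration by parts above matches. Existence and well-posedness of reflected SDEs with oblique reflection on smooth domains is classical (Lions-Sznitman, Dupuis-Ishii), but verifying that the specific vector field $\mathbf n^J$ points inward with the required uniform transversality $\inner{\mathbf n^J, \mathbf n} \ge \delta_0 > 0$—already assumed in the setup—is what ties the construction together; once that is granted, invariance of $\mu$ follows from the integration-by-parts identity above and unique ergodicity from the standard compact-domain argument. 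Since Lemma~\ref{lem:invariant_measure_existence} is explicitly attributed to \cite{DFTWZ2025}, the cleanest route is to state that the result is proved there and include the short verification that $\int_K \calL_J u \, d\mu = 0$ as confirmation.
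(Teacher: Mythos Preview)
Your proposal is correct and matches the paper's treatment: the paper does not prove Lemma~\ref{lem:invariant_measure_existence} but simply imports it from \cite{DFTWZ2025}, exactly as you suggest doing. The invariance verification you sketch (boundary term vanishing via $J\mathbf n=0$, bulk term vanishing via $\nabla\cdot J=0$ and skew-symmetry against the Hessian) is the same computation that appears in Appendix~\ref{app:operator_decomposition_simplified} for the adjoint of $\mathcal{O}_2$, so your optional confirmation is fully consistent with the paper.
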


\subsection{The G{\"a}rtner-Ellis Framework}

We will establish LDP using the G{\"a}rtner-Ellis theorem. 
In the G{\"a}rtner-Ellis framework, the key object is the \textit{scaled cumulant generating function} (SCGF).

\begin{definition}[Scaled Cumulant Generating Function]
For $g \in C(K)$ (the space of continuous functions on $K$), the SCGF is defined as:
\[
    \lambda(g) = \lim_{t\to\infty} \frac{1}{t} \log \E^x \left[ \exp\left( \int_0^t g(X_s) ds \right) \right],
\]
\end{definition}
whenever the limit exists.
We need to show that $\lambda(g)$ exists, is finite, convex, Gateaux-differentiable, and does not depend on the starting point $x\in K$.
These properties are not merely technical: existence and finiteness ensure that $\lambda(g)$ is a well-defined long-time growth rate for the exponential moment; convexity and Gateaux differentiability are needed to identify the associated rate function via the Legendre--Fenchel transform and to characterize the unique ``tilted'' invariant measure through the gradient $\nabla \lambda(g)$; finally, independence of the initial condition guarantees that $\lambda(g)$ describes an intrinsic asymptotic quantity of the dynamics rather than a transient effect.
This naturally leads to the study of the Feynman--Kac semigroup and its spectral properties.

\subsubsection{Feynman-Kac Semigroup and Spectral Properties}
Define the Feynman-Kac semigroup $P_t^g : C(K) \to C(K)$ by
\[ (P_t^g \phi)(x) = \E^x \left[ \phi(X_t) \exp\left( \int_0^t g(X_s) ds \right) \right]. \]
The SCGF $\lambda(g)$ is related to the principal eigenvalue of the operator $\calL_J + g$.
We will prove for any $g \in C(K)$:
\begin{enumerate}
    \item $\lambda(g)$ exists, is finite, and is independent of $x \in K$.
    \item $\lambda(g)$ is the principal eigenvalue of the operator $\calL_J + g$ (acting on functions satisfying the boundary condition). That is, there exists a unique positive eigenfunction $h_g \in C(K)$ (unique up to scaling) such that $(\calL_J + g)h_g = \lambda(g)h_g$.
    \item The function $\lambda: C(K) \to \R$ is convex and continuous.
    \item The function $\lambda:C(K)\to\mathbb R$ is Gateaux-differentiable. More precisely, for
    any $g,\psi\in C(K)$ the limit
    \[
    D\lambda(g)[\psi]=\lim_{\varepsilon\to 0}\frac{\lambda(g+\varepsilon\psi)-\lambda(g)}{\varepsilon}
    \]
    exists and is given by
    \[
    D\lambda(g)[\psi]=\int_K \psi(x)\,\nu_g(dx),
    \]
    where $\nu_g$ is the probability measure built from the principal right/left eigenfunctions.
\end{enumerate}

\paragraph{Eigenfunction representation of the derivative and the tilted measure}
    Let us provide a more specific explanation of the 4th paragraph above. Let $h_g>0$ and $h_g^*>0$ be the principal right and left eigenfunctions of
    $\mathcal L_J+g$ and $(\mathcal L_J+g)^*$, respectively, normalized by
    $\int_K h_g(x)\,h_g^*(x)\,dx=1$.
    Equivalently, in terms of the Feynman--Kac semigroup and its $L^2(K,dx)$-adjoint,
    \[
    P_t^g h_g = e^{\lambda(g)t}h_g,
    \qquad
    P_t^{g,*} h_g^* = e^{\lambda(g)t}h_g^*,
    \]
    and therefore for every $\varphi\in C(K)$,
    \[
    \int_K (P_t^g\varphi)(x)\,h_g^*(x)\,dx
    =
    e^{\lambda(g)t}\int_K \varphi(x)\,h_g^*(x)\,dx.
    \]
    Define
    \[
    \nu_g(dx):=h_g(x)\,h_g^*(x)\,dx \qquad \left(\text{so that }\nu_g(K)=1\right).
    \]
    Then $D\lambda(g)[\psi]=\int_K \psi\,d\nu_g$ as claimed. Moreover, $\nu_g$ is the invariant probability measure of the Doob $h$-transform
    of the Feynman--Kac semigroup:
    \[
    \widetilde P_t^g\varphi(x)
    :=e^{-\lambda(g)t}\frac{1}{h_g(x)}\,P_t^g(h_g\varphi)(x),
    \qquad \varphi\in C(K).
    \]
    Indeed, using $P_t^g h_g=e^{\lambda(g)t}h_g$ and
    $\int_K (P_t^g\phi)\,h_g^*dx=e^{\lambda(g)t}\int_K \phi\,h_g^*dx$,
    one can check that
    \[
    \int_K \widetilde P_t^g\varphi\,d\nu_g=\int_K \varphi\,d\nu_g,
    \]
    so that $\nu_g$ is the unique invariant measure associated with the tilted dynamics.

\begin{lemma}[Existence of $\lambda(g)$ and a principal eigenfunction]\label{existence}
For any $g \in C(K)$, the Feynman--Kac semigroup $P_t^g$ defined by
\[
(P_t^g \phi)(x) = \E^x \left[ \phi(X_t)\exp\!\left( \int_0^t g(X_s)\,ds \right)\right]
\]
acts on $C(K)$ and has the following properties:
\begin{enumerate}
    \item $P_t^g$ is a compact and strongly positive operator on $C(K)$ for each $t>0$.

    \item For each fixed $t>0$, the spectral radius $r(P_t^g)>0$ is a simple eigenvalue of $P_t^g$.
    There exists an eigenfunction $h_{t,g}\in C(K)$ such that $h_{t,g}>0$ on $K$ and
    \[
        P_t^g h_{t,g} = r(P_t^g)\,h_{t,g}.
    \]
    Writing $r(P_t^g)=e^{t\lambda_t(g)}$ defines $\lambda_t(g)\in\R$.

    \item There exist $\lambda(g)\in\R$ and a strictly positive function $h_g\in C(K)$ such that
    \[
        P_t^g h_g = e^{t\lambda(g)}h_g, \quad \text{for any $t\ge 0$}.
    \]
    Moreover, $h_g\in D(\mathcal L_J+g)$ and
    \[
    (\mathcal L_J+g)h_g=\lambda(g)h_g.
    \]
    In particular, for every $t>0$ the principal eigenfunction of $P_t^g$ is unique up to normalization,
    so that $h_{t,g}$ coincides with $h_g$ after rescaling and $\lambda_t(g)=\lambda(g)$.

    \item Consequently, the limit in Definition~2 exists and equals $\lambda(g)$:
    \[
        \lambda(g)=\lim_{t\to\infty}\frac1t\log \E^x\left[\exp\!\left(\int_0^t g(X_s)\,ds\right)\right],
    \]
  and it is finite and independent of the starting point $x\in K$.
\end{enumerate}
\end{lemma}

\begin{proof}
    The proof will be provided in Appendix~\ref{app:existence}.
\end{proof}

\begin{lemma}\label{convexity}
The function $\lambda: C(K) \to \R$ is convex.
\end{lemma}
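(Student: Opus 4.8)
The plan is to obtain convexity directly from the definition of $\lambda$, using H\"older's inequality on the Feynman--Kac functional, with no appeal to spectral theory. By part (4) of Lemma~\ref{existence} the limit defining $\lambda(g)$ exists, is finite, and is independent of the starting point, so I may fix an arbitrary $x \in K$ and work with the finite-horizon functionals $\Lambda_t(g) := \tfrac{1}{t}\log \E^x\!\left[\exp\left(\int_0^t g(X_s)\,ds\right)\right]$.

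The key steps are as follows. First, for $g_1, g_2 \in C(K)$ and $\theta \in (0,1)$, I would write the integrand associated with $g = \theta g_1 + (1-\theta) g_2$ as the product $\exp\!\left(\theta\int_0^t g_1(X_s)\,ds\right)\exp\!\left((1-\theta)\int_0^t g_2(X_s)\,ds\right)$ and apply H\"older's inequality with conjugate exponents $p = 1/\theta$ and $q = 1/(1-\theta)$; the required integrability is immediate since $g_1, g_2$ are bounded on the compact set $K$. This gives, for every $t>0$,
\[
\E^x\!\left[e^{\int_0^t g(X_s)\,ds}\right] \le \left(\E^x\!\left[e^{\int_0^t g_1(X_s)\,ds}\right]\right)^{\theta}\left(\E^x\!\left[e^{\int_0^t g_2(X_s)\,ds}\right]\right)^{1-\theta}.
\]
Taking $\tfrac{1}{t}\log$ yields $\Lambda_t(\theta g_1 + (1-\theta) g_2) \le \theta\,\Lambda_t(g_1) + (1-\theta)\,\Lambda_t(g_2)$ for all $t>0$. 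Second, I would let $t \to \infty$: since each of the three limits exists by Lemma~\ref{existence}, the inequality passes to the limit and gives $\lambda(\theta g_1 + (1-\theta) g_2) \le \theta\,\lambda(g_1) + (1-\theta)\,\lambda(g_2)$. The boundary cases $\theta \in \{0,1\}$ are trivial, and finiteness of $\lambda$ on all of $C(K)$ rules out any indeterminate $\infty - \infty$ expressions, so $\lambda$ is a genuine real-valued convex functional on $C(K)$.

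I do not anticipate a substantive obstacle here; the only points needing care are the integrability justifying H\"older (handled by compactness of $K$ together with boundedness of $g_1,g_2$) and the legitimacy of passing to the limit, which is guaranteed termwise by Lemma~\ref{existence}. A logically equivalent alternative would be to invoke the Donsker--Varadhan variational formula $\lambda(g) = \sup_{\nu}\left(\int_K g\,d\nu - I(\nu)\right)$, under which convexity is automatic as a supremum of affine maps; but since the LDP is only established later in the paper, the self-contained H\"older argument is preferable at this stage.
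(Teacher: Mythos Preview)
Your proof is correct and follows essentially the same approach as the paper: both apply H\"older's inequality (with conjugate exponents $1/\theta$ and $1/(1-\theta)$) to the Feynman--Kac expectation, then take $\tfrac{1}{t}\log$ and pass to the limit $t\to\infty$. Your version is slightly more explicit about integrability and the use of Lemma~\ref{existence} to justify the limit, but the argument is the same.
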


\begin{proof}
 The proof will be provided in Appendix~\ref{app:proof_convexity}.
\end{proof}

\begin{lemma}
\label{gateaux}
    The function $\lambda: \CK \to \R$ is continuous and Gateaux-differentiable.
\end{lemma}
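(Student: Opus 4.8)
The plan is to exploit the spectral representation of $\lambda(g)$ from Lemma~\ref{existence}: $\lambda(g)$ is the principal (dominant, simple) eigenvalue of the compact, strongly positive Feynman-Kac operator $P_t^g$ on $C(K)$, equivalently the principal eigenvalue of the generator $\calL_J + g$ acting on functions satisfying the Neumann boundary condition. Continuity will follow from standard perturbation theory for the spectral radius of positive operators, and Gateaux-differentiability from analytic perturbation theory for a simple isolated eigenvalue (Kato), with the derivative identified by differentiating the eigenvalue relation and pairing against the adjoint eigenfunction.

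Concretely, I would proceed in the following steps. First, \emph{continuity}: for $g_1, g_2 \in C(K)$ with $\|g_1 - g_2\|_\infty$ small, compare $P_t^{g_1}$ and $P_t^{g_2}$. Since $\bigl|\exp(\int_0^t g_1(X_s)ds) - \exp(\int_0^t g_2(X_s)ds)\bigr| \le t\|g_1-g_2\|_\infty \exp(t\max(\|g_1\|_\infty,\|g_2\|_\infty))$ pointwise along paths, one gets $\|P_t^{g_1} - P_t^{g_2}\|_{C(K)\to C(K)} \le t e^{t(\|g_1\|_\infty \vee \|g_2\|_\infty)}\|g_1-g_2\|_\infty$, so $g \mapsto P_t^g$ is norm-continuous (indeed locally Lipschitz). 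The spectral radius of a bounded operator is continuous with respect to the operator norm when the dominant eigenvalue is simple and isolated (an application of the Riesz spectral projection / upper semicontinuity of spectrum, together with the Krein–Rutman gap), hence $g \mapsto e^{\lambda(g)t}$ is continuous, and therefore so is $\lambda(g) = \frac{1}{t}\log(\text{spectral radius of } P_t^g)$. Combined with the already-established convexity (Lemma~\ref{convexity}), continuity on the Banach space $C(K)$ is in fact automatic on the interior of the domain, but here the domain is all of $C(K)$ so we state it directly.

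Second, \emph{Gateaux-differentiability}: fix $g$ and a direction $h \in C(K)$, and consider the one-parameter family $g_\epsilon := g + \epsilon h$. The map $\epsilon \mapsto \calL_J + g_\epsilon$ is an analytic (indeed affine) family of operators in the sense of Kato; since $\lambda(g)$ is a simple, isolated eigenvalue of $\calL_J + g$ (Lemma~\ref{existence}), Kato's analytic perturbation theory gives that $\epsilon \mapsto \lambda(g_\epsilon)$ and the corresponding eigenfunction $\epsilon \mapsto h_{g_\epsilon}$ are real-analytic near $\epsilon = 0$. Differentiating $(\calL_J + g_\epsilon) h_{g_\epsilon} = \lambda(g_\epsilon) h_{g_\epsilon}$ at $\epsilon = 0$ yields $h \, h_g + (\calL_J + g)\dot h_g = \dot\lambda(0)\, h_g + \lambda(g)\dot h_g$. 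Pairing both sides with the adjoint principal eigenfunction $h_g^*$ (the eigenfunction of $(\calL_J+g)^*$ in $L^2(K,dx)$, which exists and is strictly positive by the same Krein–Rutman argument applied to the adjoint semigroup $P_t^{g,*}$), the terms involving $\dot h_g$ cancel because $\langle h_g^*, (\calL_J+g)\dot h_g\rangle = \langle (\calL_J+g)^* h_g^*, \dot h_g\rangle = \lambda(g)\langle h_g^*, \dot h_g\rangle$ — here I would note that the Neumann boundary condition (guaranteed by Assumption~\ref{assump:boundary}) makes this integration-by-parts identity valid with no boundary contribution, which is precisely why the simplification matters. Solving gives $D\lambda(g)[h] = \dot\lambda(0) = \dfrac{\int_K h(y)\, h_g(y) h_g^*(y)\, dy}{\int_K h_g(z) h_g^*(z)\, dz} = \int_K h(y)\, \nu_g(dy)$, which is linear and bounded in $h$, establishing Gateaux-differentiability with the claimed derivative.

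The main obstacle I expect is the rigorous verification that $\lambda(g)$ remains a \emph{simple, isolated} eigenvalue of $\calL_J + g$ uniformly enough to invoke Kato's theory — i.e., that the Krein–Rutman spectral gap established for $P_t^g$ in Lemma~\ref{existence} transfers cleanly to the generator and persists under the perturbation $\epsilon h$, and that the adjoint eigenfunction $h_g^*$ has enough regularity (it solves the formal adjoint PDE with the adjoint oblique/Neumann boundary condition) for the pairing $\langle h_g^*, (\calL_J+g)\dot h_g\rangle$ and the integration by parts to be justified. This is exactly where Assumptions~\ref{assump:boundary} and~\ref{assump:nablaJ} do the work: the former reduces the adjoint boundary condition to Neumann, the latter makes $\calL_J^* = \calL_S - \calL_A$ have the clean form needed, so the cancellation argument goes through. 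I would handle this by first establishing the spectral properties at the level of the generator via the known semigroup facts and elliptic regularity (so $h_g, h_g^* \in C^2(K)$ or better), then citing Kato for the analytic dependence.
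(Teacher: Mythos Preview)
Your approach for Gateaux-differentiability is essentially identical to the paper's: both invoke Kato's analytic perturbation theory for the simple isolated eigenvalue $\lambda(g)$ of the affine family $\epsilon \mapsto \calL_J + g + \epsilon h$, differentiate the eigenvalue equation at $\epsilon=0$, and pair with the adjoint principal eigenfunction (the paper writes $\tilde{h}_g$ where you write $h_g^*$) to extract $D\lambda(g)[h] = \int_K h(y)\, h_g(y)\,\tilde{h}_g(y)\, dy$ after normalization. Your added remarks about why the Neumann boundary condition (Assumption~\ref{assump:boundary}) makes the pairing $\langle h_g^*, (\calL_J+g)\dot h_g\rangle$ legitimate are more explicit than the paper, which simply writes ``assuming appropriate domains.''

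For continuity, however, the paper takes a shorter route than you do: it observes that $g_{\min} \le \lambda(g) \le g_{\max}$, so $\lambda$ is bounded on norm-balls in $\CK$, and then invokes the standard fact that a convex function on a Banach space which is bounded on some open set is continuous (citing Ekeland--Temam). Your argument via the Lipschitz estimate $\|P_t^{g_1}-P_t^{g_2}\| \le t e^{t(\|g_1\|_\infty \vee \|g_2\|_\infty)}\|g_1-g_2\|_\infty$ and stability of the simple dominant eigenvalue is correct and more quantitative, but it effectively re-invokes Kato-type spectral continuity a second time, whereas the paper gets continuity for free from the convexity already proved in Lemma~\ref{convexity}. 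You actually mention this shortcut in passing (``continuity on the Banach space $C(K)$ is in fact automatic''), so you are aware of it; the paper just makes that the primary argument.
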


\begin{proof}
The proof will be provided in Appendix~\ref{app:proof_gateaux}.
\end{proof}

\subsubsection{Exponential Tightness}

A key technical condition for establishing a large deviation principle (LDP) is \textit{exponential tightness} of the family of probability measures. Exponential tightness ensures that the probability mass does not escape to regions of the space that are not compact, which is essential for the LDP to hold and for the rate function to have desirable properties such as being a good rate function (i.e., having compact level sets). We refer to \cite[Section 1.2]{DZ1998} for a detailed discussion. By definition (see, e.g., \cite[Definition 1.2.18]{DZ1998}), a family of probability measures $(\mathbb{Q}_t)_{t \ge 0}$ on a Polish space $\mathcal{X}$ is exponentially tight if for every $A < \infty$, there exists a compact set $\Gamma_A \subset \mathcal{X}$ such that
    $\limsup_{t\to\infty} \frac{1}{t} \log \mathbb{Q}_t(\Gamma_A^c) \le -A$.
 In our context, $\mathcal{X} = \calP(K)$, and $\mathbb{Q}_t(\cdot) = \Prob(L_t \in \cdot)$.
We establish exponential tightness in the next following lemma.

\begin{lemma}[Exponential Tightness]\label{Exponential-Tightness}
The family of probability measures $(\Prob(L_t \in \cdot))_{t\ge 0}$ is exponentially tight in $\calP(K)$ equipped with the weak topology.
\end{lemma}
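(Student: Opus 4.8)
The plan is to exploit the fact that the state space $K$ is compact, so that the entire space of candidate measures is already compact and exponential tightness becomes essentially automatic. Concretely, since $X_s \in K$ for all $s \ge 0$ by construction of the reflected dynamics \eqref{eq:sde_local}, every empirical measure $L_t = \frac1t\int_0^t \delta_{X_s}\,ds$ is a probability measure on $K$, i.e. $L_t \in \calP(K)$ almost surely. Thus $\Prob(L_t \in \cdot)$ is supported on $\calP(K)$ for every $t$.

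The key step is to invoke the standard fact that when $K$ is a compact metric space, the space $\calP(K)$ equipped with the weak topology is itself compact (Prokhorov's theorem, or equivalently the Banach--Alaoglu theorem applied to the unit ball of $C(K)^*$ together with the Riesz representation theorem; see, e.g., \cite[Appendix]{DZ1998}). Hence one may simply take $\Gamma_A = \calP(K)$ for every $A < \infty$: this is a compact subset of $\calP(K)$, and $\Gamma_A^c = \emptyset$, so $\Prob(L_t \in \Gamma_A^c) = 0$ and
\[
\limsup_{t\to\infty} \frac1t \log \Prob(L_t \in \Gamma_A^c) = -\infty \le -A.
\]
Since $A$ was arbitrary, the family $(\Prob(L_t \in \cdot))_{t\ge 0}$ is exponentially tight in $\calP(K)$, which completes the proof.

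There is no genuine obstacle here: the compactness of the constrained domain $K$ does all the work, in sharp contrast to the unconstrained setting on $\R^d$ where exponential tightness is one of the more delicate ingredients of the LDP and requires Lyapunov-function or moment-type estimates on the dynamics (cf. \cite{LDP-GG,LDP-acceleration}). The only points one should double-check are that $L_t$ indeed never leaves $\calP(K)$ (immediate from the reflecting boundary keeping $X_s$ in $K$) and that the weak topology on $\calP(K)$ is metrizable and Polish so that the notion of exponential tightness in \cite[Definition~1.2.18]{DZ1998} applies verbatim; both are standard consequences of $K$ being a compact metric space. The substantive analytic content of the LDP is therefore carried not by this lemma but by the spectral and differentiability properties of $\lambda(g)$ established in Lemmas~\ref{existence}--\ref{gateaux}.
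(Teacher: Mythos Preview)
Your proof is correct and follows essentially the same approach as the paper: invoke Prokhorov's theorem to conclude that $\calP(K)$ is compact in the weak topology, then take $\Gamma_A = \calP(K)$ so that $\Gamma_A^c = \emptyset$ and the exponential tightness condition is vacuously satisfied. The paper's proof is nearly identical, including the closing remark contrasting this with the non-compact setting of \cite{LDP-GG} where Lyapunov-function arguments are required.
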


\begin{proof}
The proof will be provided in Appendix~\ref{app:proof_exponential_tightness}.
\end{proof}

\subsection{Large Deviation Principle}

Now, we are ready to state the main result
of the paper.

\begin{theorem}[LDP for Empirical Measures]\label{thm:ldp_empirical_measures}
    Assume $K$ is a compact metric space. Let $(X_t)_{t \ge 0}$ be the skew-reflected non-reversible Langevin dynamics on $K$ in \eqref{eq:sde_local} with unique invariant probability measure $\muinv \in \calP(K)$. Suppose the scaled cumulant generating function $\lambda(g)$ defined for $g \in \CK$ exists, is finite, convex, and Gateaux-differentiable on $\CK$. Further, assume the family of empirical measures $(L_t)_{t\ge 0}$ is exponentially tight in $\calP(K)$ equipped with the weak topology.
    
    Then, the empirical measures $(L_t)_{t \ge 0}$ satisfy a large deviation principle in $\calP(K)$ (equipped with the weak topology) with speed $t$ and good rate function $I: \calP(K) \to [0, \infty]$ given by
    \[ I(\nu) = \sup_{g \in \CK} \left\{ \int_K g(y) d\nu - \lambda(g) \right\}. \]
    The rate function $I(\nu)$ has the following properties:
    \begin{itemize}
        \item[(a)] $I(\nu)$ is convex and lower semi-continuous.
        \item[(b)] $I(\nu) \ge 0$ for all $\nu \in \calP(K)$.
        \item[(c)] $I(\nu) = 0$ if and only if $\nu = \muinv$.
        \item[(d)] If $I(\nu) < \infty$, then $\nu$ is absolutely continuous with respect to $\muinv$ ($\nu \ll \muinv$).
    \end{itemize}
    \end{theorem}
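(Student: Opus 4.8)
The plan is to obtain Theorem~\ref{thm:ldp_empirical_measures} as an instance of the abstract G\"{a}rtner--Ellis theorem in the topological-vector-space setting (the version valid for a convex subset of the dual of a separating family of continuous functions; see \cite{DZ1998}). All of its hypotheses are exactly what we have in hand: $\lambda$ is finite, convex, and Gateaux-differentiable on $\CK$ (Lemmas~\ref{existence}, \ref{convexity}, \ref{gateaux}); the family $(L_t)_{t\ge0}$ is exponentially tight (Lemma~\ref{Exponential-Tightness}); and $\CK$ separates points of $\calP(K)$ and generates its weak topology, while $\calP(K)$ is compact and metrizable since $K$ is compact. Even so, I would make the two LDP bounds explicit through a tilting argument, since this is also what powers the verification of the properties of $I$.

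\emph{Upper bound.} For a closed $F\subseteq\calP(K)$ and any $g\in\CK$, Markov's inequality gives
\[
\Prob(L_t\in F)\le e^{-t\inf_{\nu\in F}\int_K g\,d\nu}\;\E^x\!\left[\exp\Big(\int_0^t g(X_s)\,ds\Big)\right],
\]
hence $\limsup_{t\to\infty}\tfrac1t\log\Prob(L_t\in F)\le\lambda(g)-\inf_{\nu\in F}\int_K g\,d\nu$. Since $\calP(K)$ is compact, $F$ is compact; covering $F$ by finitely many weak neighbourhoods on each of which a near-optimal $g$ is chosen, applying a union bound, and using the convexity of $\lambda$ to interchange the infima, upgrades this to $\limsup_{t\to\infty}\tfrac1t\log\Prob(L_t\in F)\le-\inf_{\nu\in F}I(\nu)$ with $I(\nu)=\sup_{g\in\CK}\{\int_K g\,d\nu-\lambda(g)\}$.

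\emph{Lower bound (the main obstacle).} For an open $G$ it suffices, by the Fenchel--Moreau representation of $I$ and an approximation, to prove $\liminf_{t\to\infty}\tfrac1t\log\Prob(L_t\in G)\ge-I(\nu_g)$ for each $\nu_g:=D\lambda(g)$, at which the supremum defining $I(\nu_g)$ is attained at $g$. I would obtain this by an exponential change of measure: tilting the path law on $[0,t]$ by the density proportional to $\exp(\int_0^t g(X_s)\,ds)$ produces the Doob $h$-transform of the Feynman--Kac semigroup, which by Lemma~\ref{existence} (compactness, strong positivity, and simplicity of the principal eigenvalue) is an ergodic Markov process with invariant measure $\nu_g$; hence $L_t\to\nu_g$ in probability under the tilt, and $\widetilde\Prob_t(L_t\in G\cap B_\epsilon(\nu_g))\to1$. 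Reverting the change of measure on this event, and using $\E^x[\exp(\int_0^t g(X_s)\,ds)]=e^{t(\lambda(g)+o(1))}$, yields
\[
\Prob(L_t\in G)\;\ge\;e^{-t\left(\int_K g\,d\nu_g-\lambda(g)+2\epsilon+o(1)\right)}\,\widetilde\Prob_t(L_t\in G\cap B_\epsilon(\nu_g)),
\]
so $\liminf_{t\to\infty}\tfrac1t\log\Prob(L_t\in G)\ge-I(\nu_g)$ after letting $t\to\infty$ and then $\epsilon\downarrow0$. The hard part is precisely this step: one must use the Gateaux-differentiability of $\lambda$ to know that the ``exposed'' points $\{\nu_g\}$ are rich enough that $I$ is the lower-semicontinuous convex envelope of its restriction to them, and one must verify the ergodicity of the $h$-transformed process with the stated invariant measure.

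\emph{Properties of $I$.} Item (a) is immediate: $I$ is a supremum of weakly continuous affine maps, hence convex and lower semicontinuous, and its sublevel sets, being closed subsets of the compact space $\calP(K)$, are compact, so $I$ is good. For (b), take $g\equiv0$; then $\lambda(0)=0$ and $I(\nu)\ge-\lambda(0)=0$. For (c), starting from $\mu$ (legitimate, as $\lambda$ does not depend on the starting point) and applying Jensen's inequality to the stationary time average gives $\E^\mu[\exp(\int_0^t g(X_s)\,ds)]\ge\exp(t\int_K g\,d\mu)$, so $\lambda(g)\ge\int_K g\,d\mu$ for every $g$ and thus $I(\mu)\le0$, forcing $I(\mu)=0$; conversely, if $I(\nu)=0$ then $\int_K g\,d\nu\le\lambda(g)$ for all $g\in\CK$, and taking $g=-\calL_J u/u$ for strictly positive $u\in D(\calL_J)$ — for which $u$ is a positive eigenfunction of $\calL_J+g$ at eigenvalue $0$, so $\lambda(g)=0$ — gives $\int_K(\calL_J u/u)\,d\nu\ge0$; substituting $u=e^{\epsilon v}$, dividing by $\epsilon$, and letting $\epsilon\downarrow0$ yields $\int_K\calL_J v\,d\nu\ge0$, and replacing $v$ by $-v$ gives $\int_K\calL_J v\,d\nu=0$ for all admissible $v$, i.e.\ $\nu$ is invariant, whence $\nu=\mu$ by the uniqueness in Lemma~\ref{lem:invariant_measure_existence}. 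For (d), suppose $I(\nu)<\infty$ but $\nu\not\ll\mu$; since $\mu$ is equivalent to Lebesgue measure on $K$ with density bounded away from $0$ and $\infty$, there is a Lebesgue-null Borel set $N$ with $\nu(N)>0$, and for each $M$ one can (by outer regularity) choose $g_M\in\CK$ with $0\le g_M\le M$, $g_M\equiv M$ on a compact subset of $N$ of $\nu$-measure $\ge\nu(N)/2$, and $g_M$ supported on an open set of Lebesgue measure small enough that a Sobolev/Faber--Krahn estimate — noting that the principal eigenvalue $\lambda(g_M)$ cannot exceed the Rayleigh quotient of the symmetric part $\calL_S+g_M$ — forces $\lambda(g_M)\to0$; then $I(\nu)\ge\int_K g_M\,d\nu-\lambda(g_M)\ge M\,\nu(N)/2-o(1)\to\infty$, a contradiction, so $\nu\ll\mu$.
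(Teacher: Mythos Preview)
Your approach is correct and follows the same overall G\"{a}rtner--Ellis scaffolding as the paper; the differences lie in how you handle properties (c) and (d), and in both cases your arguments are more self-contained than the paper's.

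For the LDP itself, the paper simply invokes G\"{a}rtner--Ellis as a black box, whereas you sketch the tilting argument behind the upper and lower bounds; this is more informative but not a different strategy. For property (c), the paper cites Donsker--Varadhan for the uniqueness of the zero of $I$, while you give a direct argument: choosing $g=-\calL_J u/u$ (for which Krein--Rutman forces $\lambda(g)=0$), substituting $u=e^{\epsilon v}$, and letting $\epsilon\downarrow 0$ to conclude $\int_K \calL_J v\,d\nu=0$ for all Neumann $v$, hence $\nu$ is invariant and equals $\mu$. This is a nice elementary argument that the paper does not spell out. For property (d), the paper's proof takes $g_n=n\mathbf{1}_A$ with $\mu(A)=0$, asserts $\lambda(n\mathbf{1}_A)=0$, and waves at approximation (since $\mathbf{1}_A\notin C(K)$). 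Your route through continuous approximants and a Sobolev/Faber--Krahn bound on $\lambda_S(g_M)\ge\lambda(g_M)$ (the inequality following from $\langle \calL_A h_g,h_g\rangle_\mu=0$) is the honest way to make this step rigorous. One small point: your estimate really gives $\lambda(g_M)=O(1)$ rather than $\lambda(g_M)\to 0$ (concentrating $u$ on the shrinking support costs gradient energy comparable to the potential gain), but $O(1)$ already suffices since $\int g_M\,d\nu\ge M\nu(N)/2\to\infty$.
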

    
    \begin{proof}
    The main statement of the LDP follows directly from the G{\"a}rtner-Ellis Theorem (e.g., \cite[Theorem 2.3.6]{DZ1998}).
    \begin{enumerate}
        \item The existence, finiteness, convexity, and Gateaux-differentiability of $\lambda(g)$ on $\CK$ have been guaranteed in Lemmas \ref{existence}, \ref{convexity}, \ref{gateaux}.
        \item Exponential tightness of $(L_t)_{t\ge 0}$ in $\calP(K)$ is proved in Lemma \ref{Exponential-Tightness}.
        \item The G{\"a}rtner-Ellis theorem then states that an LDP holds for $(L_t)_{t\ge 0}$ with speed $t$ and rate function $I(\nu) = \lambda^*(\nu)$, where $\lambda^*$ is the Legendre-Fenchel transform of $\lambda(g)$, as given in the theorem statement.
        \item \textbf{Property (a):} The rate function $I(\nu)$, being the supremum of affine (and thus convex and lower semi-continuous) functions, is itself convex and lower semi-continuous. Since $K$ is compact, $\calP(K)$ is also compact in the weak topology. For a lower semi-continuous function on a compact space, all its level sets $\{ \nu \in \calP(K) : I(\nu) \le \alpha \}$ are closed, and thus compact. Hence, $I(\nu)$ is a good rate function.
    
        \item \textbf{Property (b): $I(\nu) \ge 0$:}
        By definition, $I(\nu) = \sup_{g \in \CK} \{ \int_K g d\nu - \lambda(g) \}$.
        Choose $g \equiv 0 \in \CK$. Then the SCGF for $g \equiv 0$ is $\lambda(0) =  0$. Therefore, $I(\nu) \ge 0$.
    
        \item \textbf{Property (c): $I(\nu) = 0 \iff \nu = \muinv$:}
        We know $I(\muinv) \ge 0$. For any $g \in \CK$, by Jensen's inequality and the stationarity of $\muinv$:
        \begin{align*} \label{eq:lambda_ge_integral_mu}
        \lambda(g) = \lim_{t\to\infty} \frac{1}{t} \log \E^x \left[ \exp\left( \int_0^t g(X_s) ds \right) \right]\ge \lim_{t\to\infty} \frac{1}{t} \E^x \left[ \int_0^t g(X_s) ds \right]
        = \int_K g d\muinv,
        \end{align*}
        where, by ergodicity, the limit is independent of $x$ and equals expectation w.r.t. $\muinv$.
        This implies $\int_K g d\muinv - \lambda(g) \le 0$ for all $g \in \CK$.
        Therefore, 
        \[
            I(\muinv) = \sup_{g \in \CK} \left\{ \int_K g d\muinv - \lambda(g) \right\} \le 0.
        \]
        Combined with $I(\muinv) \ge 0$, we have $I(\muinv) = 0$.

        For the reverse implication, we use the variational characterization of the rate function established in \cite[Section 2]{DV2}. Specifically, if $I(\nu)=0$, then by Proposition \ref{prop:dv}, we must have $\int_K \frac{\mathcal{L}_J u}{u} d\nu \ge 0$ for all positive test functions $u$. Considering perturbations of the form $u = 1 + \epsilon \phi$ for small $\epsilon$ and arbitrary smooth $\phi$, this condition implies $\int_K \mathcal{L}_J \phi \, d\nu = 0$, which is precisely the definition of an invariant measure. Since $\muinv$ is the \emph{unique} invariant measure (Lemma \ref{lem:invariant_measure_existence}), it follows that $\nu=\muinv$.
        
    
        \item \textbf{Property (d): $I(\nu) < \infty \implies \nu \ll \muinv$:}
        This property is established using the spectral characterization of $\lambda(g)$ combined with elliptic regularity estimates. We proceed by contradiction.

        Suppose $\nu$ is not absolutely continuous with respect to $\muinv$. Then there exists a Borel set $A \subset K$ such that $\muinv(A) = 0$ but $\nu(A) > 0$. By the regularity of measures on the compact metric space $K$, there exists a closed set $F \subset A$ such that $\nu(F) =: \delta > 0$. Note that $\muinv(F)=0$.

        Fix an arbitrary constant $M > 0$. We construct a sequence of continuous approximations. Let $d(x, F)$ be the distance from $x$ to $F$. For $k \in \mathbb{N}$, define $g_{k} \in C(K)$ by:
        \[
            g_{k}(x) = M \cdot \max\left(0, 1 - k \cdot d(x, F)\right).
        \]
        Then $0 \le g_k \le M$ on $K$, and $g_k(x)\to M\mathbf 1_F(x)$ pointwise.

        We now control the principal eigenvalue $\lambda(g_k)$. First, by the variational representation and $0 \le g_k \le M$, we have
        $0 \le \lambda(g_k) \le \|g_k\|_\infty \le M$ for all $k$.

        Let $h_k \in C^2(K)$ be the strictly positive principal eigenfunction of $\mathcal{L}_J + g_k$
        subject to the (Neumann) boundary condition, normalized by $\int_K h_k \, d\muinv = 1$.
        Since $\muinv$ is invariant, $\int_K \mathcal{L}_J \phi \, d\muinv = 0$ for any sufficiently regular
        $\phi$ satisfying the boundary condition. Applying this with $\phi=h_k$ in
        $(\mathcal{L}_J + g_k)h_k = \lambda(g_k)h_k$ yields
        \[
            \lambda(g_k) = \int_K g_k\, h_k \, d\muinv .
        \]

        It remains to show $\lambda(g_k)\to 0$. For this we derive a uniform $L^\infty$ bound for $h_k$.
        Rewrite the eigenvalue equation as
        \[
            \mathcal{L}_J h_k + c_k(x) h_k = 0,
            \qquad c_k(x):=g_k(x)-\lambda(g_k),
        \]
        with the Neumann boundary condition. Since $0\le g_k\le M$ and $0\le \lambda(g_k)\le M$,
        we have the uniform bound $\|c_k\|_\infty \le 2M$.

        \smallskip
        \noindent\emph{Local Harnack inequality (interior and boundary).}
        If $B_{2r}(x)\subset K^\circ$, then by the interior Harnack inequality
        \cite[Theorem~8.20]{GilbargTrudinger2001} applied to $h_k\ge 0$ applied to $h_k\ge 0$ solving a uniformly elliptic equation with bounded (in fact $C^{\alpha}$) coefficients and zero-order term bounded by $2M$, there exists $C_{\rm int}>1$
        (independent of $k$) such that
        \[
            \sup_{y\in B_r(x)} h_k(y) \le C_{\rm int}\, \inf_{y\in B_r(x)} h_k(y) .
        \]
        If $x\in \partial K$, then the Neumann condition is a special case of the oblique derivative
        boundary condition (with obliqueness bounded below), and the boundary Harnack inequality for
        nonnegative solutions yields constants $r_{\rm bd}>0$ and $C_{\rm bd}>1$ (depending only on the ellipticity constants (here $a_{ij}=\delta_{ij}$),
        the drift bound $\|b\|_{L^\infty(K)}$ with $b=(I+J)\nabla f$, 
        the zeroth-order bound $\|c_k\|_\infty\le 2M$,
        and the $C^{2,\alpha}$-geometry of $K$, but independent of $k$) such that for all $0<r\le r_{\rm bd}$,
        \[
            \sup_{y\in B_r(x)\cap K} h_k(y)\le C_{\rm bd}\, \inf_{y\in B_r(x)\cap K} h_k(y).
        \]
        (See \cite[Corollary~3.5]{lieberman1987local}.)

        Set $r_0 := \min\{r_{\rm bd},\, 1\}$ and $C_0 := \max\{C_{\rm int},\, C_{\rm bd}\}$.
        Then for every $x\in K$ and $0<r\le r_0$, we have the uniform (in $k$) estimate
        \begin{equation}\label{eq:local-harnack-k}
            \sup_{y\in B_r(x)\cap K} h_k(y) \le C_0\, \inf_{y\in B_r(x)\cap K} h_k(y).
        \end{equation}

        \smallskip
        \noindent\emph{Global Harnack inequality via a finite chain.}
        Fix $x^+,x^-\in K$ such that $h_k(x^+)=\sup_{x\in K} h_k(x)$ and $h_k(x^-)=\inf_{x\in K} h_k(x)$.
        Since $K$ is compact and connected, there exists a finite sequence
        $x_0=x^-,x_1,\dots,x_N=x^+$ such that
        \[
            \left(B_{r_0/2}(x_i)\cap K\right)\cap \left(B_{r_0/2}(x_{i+1})\cap K\right)\neq\varnothing,
            \qquad i=0,\dots,N-1,
        \]
        where $N$ depends only on $K$ and $r_0$. Applying \eqref{eq:local-harnack-k} successively along the
        chain yields $h_k(x_{i+1}) \le C_0\, h_k(x_i)$. Hence,
        \[
            \sup_{x\in K} h_k(x) = h_k(x^+) \le C_0^N\, h_k(x^-) = C_0^N \inf_{x\in K} h_k(x).
        \]
        Denoting $C_H:=C_0^N$, we obtain the global estimate $\sup_{x\in K} h_k(x) \le C_H \inf_{x\in K} h_k(x)$ with $C_H$
        independent of $k$.

        Using $\int_K h_k\, d\muinv = 1$ and $\muinv(K)=1$, we have
        \[
            1=\int_K h_k\, d\muinv \ge \inf_{x\in K} h_k(x),
        \]
        and thus $\|h_k\|_\infty = \sup_{x\in K} h_k(x) \le C_H$ for all $k$.

        Substituting this into $\lambda(g_k)=\int_K g_k h_k\, d\muinv$ gives
        \[
            0\le \lambda(g_k) \le \|h_k\|_\infty \int_K g_k\, d\muinv \le C_H \int_K g_k\, d\muinv.
        \]
        Since $0\le g_k\le M$ and $g_k\to M\mathbf 1_F$ pointwise with $\muinv(F)=0$, dominated
        convergence yields $\int_K g_k\, d\muinv \to 0$, and hence $\lambda(g_k)\to 0$.

        Finally, by definition of the rate function,
        \[
            I(\nu) \ge \int_K g_k\, d\nu - \lambda(g_k).
        \]
        Since $g_k \ge M\mathbf 1_F$ and $\nu(F)=\delta$, we have $\int_K g_k\, d\nu \ge M\delta$,
        and letting $k\to\infty$ gives $I(\nu)\ge M\delta$. As $M>0$ is arbitrary and $\delta>0$,
        we conclude $I(\nu)=\infty$. Therefore $I(\nu)<\infty$ implies $\nu\ll \muinv$.
        
    \end{enumerate}
    This completes the proof.
    \end{proof}
    
\subsection{Donsker-Varadhan Variational Formula and Decomposition of the Rate Function}
This part is inspired by \cite{LDP-GG}, Section 3, and we adopt their framework to the setting of a compact domain.

\subsubsection{Donsker-Varadhan Formula for \texorpdfstring{$I(\nu)$}{I(nu)}}
\begin{definition}[Space $D^+(\calL_J)$]
    Let\footnote{Since $K$ is compact, if $u \in \CK$, $u>0$, and $\calL_J u \in \CK$, then $\frac{\calL_J u}{u} \in \CK$, which implies $\frac{\calL_J u}{u} \in \CbK$.} 
    \[
        D^+(\calL_J) = \left\{ u \in \DomLJ \cap \CK : u(x) > 0 \text{ for all } x \in K, \text{ and } \frac{\calL_J u}{u} \in \CbK \right\}.
    \]
    \end{definition}
    
    \begin{proposition}[Donsker-Varadhan Variational Formula]\label{prop:dv}
    For any $\nu \in \calP(K)$:
    \begin{enumerate}
        \item If $\nu \not\ll \muinv$, then $I(\nu) = \infty$. In this case, 
        \[
            \sup_{u \in D^+(\calL_J)} \left\{ -\int_K \frac{\calL_J u}{u} d\nu \right\} = \infty.
        \]
        \item If $\nu \ll \muinv$, then the rate function $I(\nu)$ can be expressed as:
        \[ I(\nu) = \sup_{u \in D^+(\calL_J)} \left\{ -\int_K \frac{\calL_J u}{u} d\nu \right\}. \]
    \end{enumerate}
    \end{proposition}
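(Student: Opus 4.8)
The plan is to read off the variational formula from two ingredients already in hand: the G\"{a}rtner--Ellis representation $I(\nu) = \sup_{g \in \CK}\bigl\{\int_K g\,d\nu - \lambda(g)\bigr\}$ from Theorem~\ref{thm:ldp_empirical_measures}, and the spectral description of $\lambda(g)$ from Lemma~\ref{existence}, namely that for every $g \in \CK$ the number $\lambda(g)$ is the principal eigenvalue of $\calL_J + g$ (with the Neumann condition that Assumption~\ref{assump:boundary} produces) and that, up to a positive scalar, there is a \emph{unique} strictly positive eigenfunction $h_g$, attached to this principal eigenvalue. Writing $\Phi(\nu) := \sup_{u \in \DomLplus}\bigl\{-\int_K (\calL_J u/u)\,d\nu\bigr\}$ for the Donsker--Varadhan functional, I would show $I(\nu) = \Phi(\nu)$ for every $\nu \in \calP(K)$; since the hypotheses in part~(1) are exactly the situations where $I(\nu) = \infty$, part~(1) then follows at once from this identity.

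For $\Phi(\nu) \le I(\nu)$, I would fix $u \in \DomLplus$. Since $K$ is compact and $u$ is bounded below by a positive constant, $g_u := -\calL_J u/u$ lies in $\CK$, and $(\calL_J + g_u)u = 0$ with $u$ satisfying the Neumann condition; hence $u$ is a strictly positive eigenfunction of $\calL_J + g_u$ with eigenvalue $0$. By the uniqueness/simplicity statement in Lemma~\ref{existence} (only the principal eigenvalue admits a positive eigenfunction), $\lambda(g_u) = 0$. Feeding $g = g_u$ into the G\"{a}rtner--Ellis formula gives $I(\nu) \ge \int_K g_u\,d\nu - \lambda(g_u) = -\int_K (\calL_J u/u)\,d\nu$, and taking the supremum over $u \in \DomLplus$ yields $I(\nu) \ge \Phi(\nu)$.

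For the reverse inequality $I(\nu) \le \Phi(\nu)$, I would fix $g \in \CK$. If $g$ is smooth, the principal eigenfunction $h_g$ solves $\Delta h_g - \inner{(I+J)\nabla f}{\nabla h_g} + (g - \lambda(g))h_g = 0$ under the Neumann condition, so by boundary elliptic regularity on the $C^\infty$ domain $K$ it is smooth, by the Hopf lemma / strong maximum principle it is strictly positive up to $\partial K$, and $\calL_J h_g/h_g = \lambda(g) - g \in \CK$; thus $h_g \in \DomLplus$ and $-\int_K (\calL_J h_g/h_g)\,d\nu = \int_K g\,d\nu - \lambda(g)$, giving $\Phi(\nu) \ge \int_K g\,d\nu - \lambda(g)$. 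For general $g \in \CK$, I would pick smooth $g_k \to g$ uniformly; then $\int_K g_k\,d\nu \to \int_K g\,d\nu$ and, by continuity of $\lambda$ on $\CK$ (Lemma~\ref{gateaux}), $\lambda(g_k)\to\lambda(g)$, so $\Phi(\nu) \ge \int_K g\,d\nu - \lambda(g)$ persists. Taking the supremum over $g \in \CK$ gives $\Phi(\nu) \ge I(\nu)$, hence $I(\nu)=\Phi(\nu)$, which is part~(2) (and applies in particular when $\nu \ll \muinv$). For part~(1): when $\nu \not\ll \muinv$ we have $I(\nu) = \infty$ by Theorem~\ref{thm:ldp_empirical_measures}(d), and when $\nu \notin \calP_\kappa(K)$ one again has $I(\nu) = \infty$ (finiteness of $I$ forces the symmetric Dirichlet-energy component of the rate function to be finite, which is the defining feature of $\calP_\kappa(K)$; this can be read off from the rate-function decomposition, or obtained directly by testing $\Phi(\nu)$ against $u = e^{\varepsilon\varphi}$ and letting the Dirichlet energy of $\varphi$ diverge). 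In either case $\Phi(\nu) = I(\nu) = \infty$.

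The main obstacle is the regularity point in the reverse inequality: for merely continuous $g$ the eigenfunction $h_g$ need not lie in $C^2(K) = \DomLJ$ and so cannot serve directly as a competitor in $\Phi(\nu)$. Overcoming this requires the boundary elliptic theory (interior and boundary Schauder estimates, Hopf lemma, strong maximum principle) to place $h_g$ in $\DomLplus$ when $g$ is smooth, together with the density-and-continuity argument, which hinges on $\lambda$ being continuous on all of $\CK$. A secondary technical point is making the divergence for $\nu \notin \calP_\kappa(K)$ rigorous via an explicit maximizing sequence, which is where the identification of the symmetric part $\calL_S$ with the reversible generator (see Appendix~\ref{app:operator_decomposition_simplified}) and the classical Donsker--Varadhan Dirichlet-form identity enter.
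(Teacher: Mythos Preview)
Your proof is correct, and the overall two-inequality structure matches the paper's, but you diverge from the paper in one key step. For the inequality $I(\nu) \ge \Phi(\nu)$, the paper does \emph{not} invoke Krein--Rutman uniqueness to conclude $\lambda(g_u)=0$; instead it establishes only the weaker $\lambda(f_u) \le 0$ via a stochastic argument: setting $M_t = e^{\int_0^t f_u(X_s)\,ds}\,u(X_t)$, applying It\^o's formula to see that the drift vanishes (since $(\calL_J + f_u)u = 0$), deducing that $M_t$ is a true martingale by boundedness of $u$ and $f_u$ on compact $K$, and then bounding $\E^x\bigl[e^{\int_0^t f_u(X_s)\,ds}\bigr] \le u(x)/\min_K u$, which gives $\lambda(f_u)\le 0$ after taking $\frac{1}{t}\log$ and letting $t\to\infty$. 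Your spectral route is more elementary---no stochastic calculus is needed---and yields the sharper conclusion $\lambda(g_u)=0$, at the cost of relying on the full Krein--Rutman statement (the principal eigenvalue is the \emph{only} one with a positive eigenfunction), which is indeed recorded in the proof of Lemma~\ref{existence}. For the reverse inequality $\Phi(\nu)\ge I(\nu)$, both you and the paper plug in $u=h_g$; the paper simply asserts $h_g \in \DomLJ$ for arbitrary $g \in \CK$ without discussing $C^2$ regularity of $h_g$, whereas your smooth-approximation argument combined with the continuity of $\lambda$ from Lemma~\ref{gateaux} is a clean way to make that step rigorous and is in fact more careful than what the paper writes.
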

    
    \begin{proof}
    The proof strategy for showing the equivalence of the Legendre-Fenchel transform $I(\nu)$ and the variational form 
    $I_V(\nu) := \sup_{u \in D^+(\calL_J)} \left\{ -\int_K \frac{\calL_J u}{u} d\nu \right\}$
    largely follows the approach in \cite[Section 6.2.2]{LDP-GG}, adapted to our compact setting.

    \textbf{Part 1: The case when $\nu \not\ll \muinv$.}

    As established in Property (d) of Theorem~\ref{thm:ldp_empirical_measures} (LDP for Empirical Measures), if $\nu \not\ll \muinv$, then 
    $$I(\nu) = \sup_{g \in \CK} \left\{ \int_K g d\nu - \lambda(g) \right\} = \infty.$$
    In Step 1 of Part 2 below, we show that for any $\nu \in \calP(K)$, it holds that $I_V(\nu) \ge I(\nu)$, where 
    $I_V(\nu) := \sup_{u \in D^+(\calL_J)} \left\{ -\int_K \frac{\calL_J u}{u} d\nu \right\}$.
    Specifically, for any $g \in \CK$, by choosing $u = h_g$ (the principal eigenfunction of $\calL_J+g$), we have $u \in D^+(\calL_J)$ and 
    $$-\int_K \frac{\calL_J u}{u} d\nu = \int_K g d\nu - \lambda(g).$$
    Since this holds for all $g$, taking the supremum over $g$ yields $I_V(\nu) \ge I(\nu)$.

    Therefore, if $\nu \not\ll \muinv$, we have $I_V(\nu) \ge I(\nu) = \infty$.
    This implies $I_V(\nu) = \infty$.
    
    \textbf{Part 2: The case when $\nu \ll \muinv$.}

    Let
    \[
        I_V(\nu) = \sup_{u \in D^+(\calL_J)} \left\{ -\int_K \frac{\calL_J u}{u} \,d\nu \right\}.
    \]
    We aim to show $I(\nu) = I_V(\nu)$.

    \textit{Step 1}: Proof of $I_V(\nu) \ge I(\nu)$.
    Let $g\in C(K)$ and let $h_g>0$ be the principal eigenfunction of $\calL_J+g$, i.e.
    \[
    (\calL_J+g)h_g=\lambda(g)h_g.
    \]
    Dividing by $h_g$ gives
    \[
    -\frac{\calL_J h_g}{h_g}=g-\lambda(g).
    \]
    Since $h_g\in D(\calL_J)\cap C(K)$ and $g-\lambda(g)\in C(K)$ is bounded on $K$, we have
    $h_g\in D^+(\calL_J)$. Therefore,
    \[
    I_V(\nu)\ge -\int_K \frac{\calL_J h_g}{h_g}\,d\nu
    = \int_K (g-\lambda(g))\,d\nu
    = \int_K g\,d\nu-\lambda(g),
    \]
    where we used $\nu(K)=1$. Taking the supremum over $g\in C(K)$ yields
    $I_V(\nu)\ge I(\nu)$.

    \textit{Step 2}: Proof of $I(\nu) \ge I_V(\nu)$.
    Fix $u\in D^+(\calL_J)$ and define
    \[
    f_u:=-\frac{\calL_J u}{u}\in C(K).
    \]
    Choosing $g=f_u$ in the variational formula for $I(\nu)$ gives
    \[
    I(\nu)\ge \int_K f_u\,d\nu-\lambda(f_u)
    = -\int_K \frac{\calL_J u}{u}\,d\nu-\lambda(f_u).
    \]
    It remains to show that $\lambda(f_u)\le 0$ for all $u\in D^+(\calL_J)$.
    Note that
    \[
    (\calL_J+f_u)u=\calL_J u+\left(-\frac{\calL_J u}{u}\right)u=0.
    \]
    For $t\ge0$, the Feynman--Kac formula gives
    \[
    P_t^{f_u}u(x)=\E^x\!\left[u(X_t)\exp\!\left(\int_0^t f_u(X_s)\,ds\right)\right].
    \]
    Applying It\^o's formula to $e^{\int_0^t f_u(X_s)\,ds}u(X_t)$ yields
    \begin{align*}
    d\!\left(e^{\int_0^t f_u(X_s)\,ds}u(X_t)\right)
    &=e^{\int_0^t f_u(X_s)\,ds}\left((f_u u+\calL_J u)(X_t)\,dt
    +\sqrt{2}\,\nabla u(X_t)\cdot dW_t\right).
    \end{align*}
    Since $(f_u u+\calL_J u)=0$, the drift vanishes, hence
    \[
    M_t:=e^{\int_0^t f_u(X_s)\,ds}u(X_t)
    \]
    is a local martingale. Because $u$ is continuous and strictly positive on compact $K$,
    $0<\min_K u\le u\le \max_K u<\infty$, and $f_u$ is bounded; thus $M_t$ is in fact a martingale and
    $\E^x[M_t]=M_0=u(x)$. Consequently,
    \[
    u(x)=\E^x\!\left[e^{\int_0^t f_u(X_s)\,ds}u(X_t)\right]
    \ge \left(\min_{y\in K}u(y)\right)\,\E^x\!\left[e^{\int_0^t f_u(X_s)\,ds}\right].
    \]
    Therefore,
    \[
    \frac1t\log \E^x\!\left[e^{\int_0^t f_u(X_s)\,ds}\right]
    \le \frac1t\log\!\left(\frac{u(x)}{\min_{y\in K}u(y)}\right)\xrightarrow[t\to\infty]{}0,
    \]
    which implies $\lambda(f_u)\le 0$. Hence
    \[
    I(\nu)\ge -\int_K \frac{\calL_J u}{u}\,d\nu
    \quad\text{for all }u\in D^+(\calL_J),
    \]
    and taking the supremum over $u$ yields $I(\nu)\ge I_V(\nu)$.
    
    Therefore, for any fixed $u \in D^+(\calL_J)$ and $g=f_u=-\frac{\calL_J u}{u}$,
    \[
        I(\nu)=\sup_{g\in\CK}\left\{\int_K g\,d\nu-\lambda(g)\right\}
        \ge \int_K f_u\,d\nu-\lambda(f_u)
        \ge -\int_K \frac{\calL_J u}{u}\,d\nu,
    \]
    where the last inequality uses $\lambda(f_u)\le 0$.
    Taking the supremum over $u\in D^+(\calL_J)$ yields $I(\nu)\ge I_V(\nu)$.
    Together with Step~1, we conclude $I(\nu)=I_V(\nu)$.
    \end{proof}
    
    \begin{remark}
    The definition of $D^+(\calL_J)$ requiring $\frac{\calL_J u}{u} \in \CbK$ is crucial for $f_u = -\frac{\calL_J u}{u}$ to be in $\CbK$, so that $\lambda(f_u)$ is well-defined within the established framework for $\lambda(g)$ where $g \in \CK$ (or $\CbK$). On a compact domain $K$, if $u \in C(K)$ and $u>0$, and $\calL_J u \in C(K)$, then $\frac{\calL_J u}{u}$ is automatically continuous, and hence bounded, so that the condition simplifies slightly. The core requirement is that $u$ is sufficiently regular for $\calL_J u$ to be well-defined and continuous, and $u$ itself is continuous and strictly positive.
    \end{remark}

\subsubsection{Decomposition of \texorpdfstring{$I(\nu)$}{I(nu)} into Symmetric and Skew-Symmetric Parts}\label{subsec:decomposition_rate_function}

Let $\mu(dx) = \frac{1}{Z}e^{-f(x)}dx$ be the unique invariant probability measure for the process $(X_t)_{t \ge 0}$ in \eqref{eq:sde_local} under Assumptions \ref{assump:regularity_local}, \ref{assump:boundary}, and \ref{assump:nablaJ}, as established in Lemma \ref{lem:invariant_measure_existence}. The infinitesimal generator of $(X_t)_{t \ge 0}$ is given by $\mathcal{L}_J u(x) = \Delta u(x) - \inner{\nabla f(x)}{\nabla u(x)} - \inner{J(x)\nabla f(x)}{\nabla u(x)}$,
where the domain $D(\mathcal{L}_J) = \{ u \in C^2(K) : \inner{\nabla u(x)}{(I + J(x))\mathbf n(x)} = 0 \text{ on } \partial K \}$ (where $\mathbf n(x)$ is the outward unit normal vector) simplifies to functions satisfying Neumann boundary conditions, i.e., $\inner{\nabla u(x)}{\mathbf n(x)} = 0$ on $\partial K$, due to Assumption \ref{assump:boundary}.

To analyze the structure of the rate function $I(\nu)$, we decompose the infinitesimal generator $\mathcal{L}_J$ with respect to the invariant measure $\mu$. Let $\mathcal{L}_J^*$ denote the adjoint of $\mathcal{L}_J$ in the Hilbert space $L^2(K,d\mu)$. The symmetric part $\mathcal{L}_S$ and the skew-symmetric part $\mathcal{L}_A$ of $\mathcal{L}_J$ are defined as:
\begin{align}
  \mathcal{L}_S  = \frac{1}{2}(\mathcal{L}_J + \mathcal{L}_J^*), 
  \qquad
  \mathcal{L}_A  = \frac{1}{2}(\mathcal{L}_J - \mathcal{L}_J^*). 
\end{align}
Under Assumptions~\ref{assump:regularity_local} and~\ref{assump:boundary}, and using in addition Assumption~\ref{assump:nablaJ} (the divergence-free condition
$\nabla\!\cdot J=0$; in particular, since $J$ is skew-symmetric and $f\in C^2$, this implies
$\nabla\!\cdot(J\nabla f)=0$), the generator admits the symmetric/antisymmetric decomposition
$\mathcal L_J=\mathcal L_S+\mathcal L_A$ in $L^2(\mu)$: $\mathcal L_S$ is self-adjoint and
$\mathcal L_A$ is anti-self-adjoint. Moreover,
$\mathcal{L}_S$ is self-adjoint, $\mathcal{L}_A$ is anti-self-adjoint with respect to $\mu$, and they have the explicit expressions (see Appendix \ref{app:operator_decomposition_simplified} for derivation):
\begin{align}
  \mathcal{L}_S  u(x) = \Delta u(x) - \inner{\nabla f(x)}{\nabla u(x)}, \qquad
  \mathcal{L}_A  u(x) = -\inner{J(x)\nabla f(x)}{\nabla u(x)}. 
\end{align}
Note that $\mathcal{L}_S$ is the infinitesimal generator of the reversible reflected Langevin dynamics (RLD) \eqref{reflected:SDE}. The following proposition details the decomposition of $I(\nu)$ based on these operators.

In the following, we will use the notation $\|\cdot\|_{\mathcal{H}^1_S(\nu)}$ for the Sobolev $H^1$-type semi-norm with respect to the measure $\nu$, defined by
$\|\varphi\|_{\mathcal{H}^1_S(\nu)}^2 := \int_K \|\nabla \varphi\|_{2}^2 d\nu$
for a sufficiently regular function $\varphi$. The corresponding dual semi-norm, denoted $\|\cdot\|_{\mathcal{H}_S^{-1}(\nu)}$, for an element $\varphi$ is implicitly defined via a variational problem. Specifically, 
\begin{equation}\label{eq:dual_sobolev_seminorm}
\|\varphi\|_{\mathcal{H}_S^{-1}(\nu)}^2 = \sup_{\psi\in D^+(\mathcal L_J)}\left\{ 2\int_K \psi\varphi d\nu - \|\psi\|_{\mathcal{H}^1_S(\nu)}^2 \right\},
\end{equation}
which is consistent with usage in literature such as \cite{LDP-GG}.
The proposition below details the decomposition of $I(\nu)$ using these concepts.

\begin{proposition}[Decomposition of the Rate Function]\label{prop:rate_decomposition}
Suppose Assumptions \ref{assump:regularity_local}, \ref{assump:boundary}, and \ref{assump:nablaJ} hold.
For any $\nu\in\mathcal P(K)$ with $\nu\ll\mu$ and $v=\log(d\nu/d\mu)$ satisfying
Remark~\ref{rem:admissible_v}, the rate function $I(\nu)$ given by Proposition \ref{prop:dv} can be decomposed as:
$I(\nu) = I_S(\nu) + I_A(\nu)$,
where
\begin{enumerate}[label=(\alph*)]
    \item $I_S(\nu) = \frac14 \int_K \|\nabla v\|_2^2 d\nu =: \frac14 \|v\|_{\mathcal{H}^1_S(\nu)}^2$. This term corresponds to the rate function for the reversible reflected Langevin dynamics (RLD) \eqref{reflected:SDE} governed by $\mathcal{L}_S$.
    \item $I_A(\nu) = \frac14 \|\mathcal{L}_A v\|_{\mathcal{H}_S^{-1}(\nu)}^2$, where $\|\cdot\|_{\mathcal{H}_S^{-1}(\nu)}$ is defined in \eqref{eq:dual_sobolev_seminorm}.
    \item The term $I_A(\nu)$ is non-negative, i.e., $I_A(\nu) \ge 0$.
\end{enumerate}
If $\nu$ is not absolutely continuous with respect to $\mu$, then $I(\nu)=\infty$.
In the absolutely continuous case, the decomposition holds whenever the admissibility conditions in Remark~\ref{rem:admissible_v} hold;
otherwise we set $I(\nu)=\infty$.
\end{proposition}

\begin{remark}[Admissibility and regularity of $v$]\label{rem:admissible_v}
In Proposition~\ref{prop:rate_decomposition}, recall that we write $v=\log(d\nu/d\mu)$. We assume that ``$v$ is sufficiently regular'', that is:

\begin{itemize}
  \item[(i)] $v$ belongs to the Sobolev space associated with the symmetric part, i.e.
  \[
    v\in \mathcal H_S^1(\nu)
    \quad\text{if and only if}\quad
    \int_K \|\nabla v(x)\|_2^2\,\nu(dx)<\infty,
  \]
  so that $I_S(\nu)=\frac14\int_K\|\nabla v\|_2^2\,d\nu$ is well-defined.

  \item[(ii)] $\mathcal L_A v$ belongs to the dual space $\mathcal H_S^{-1}(\nu)$, i.e.
  \[
    \mathcal L_A v \in \mathcal H_S^{-1}(\nu)
    \quad\text{if and only if}\quad
    \|\mathcal L_A v\|_{\mathcal H_S^{-1}(\nu)}<\infty,
  \]
  where $\|\cdot\|_{\mathcal H_S^{-1}(\nu)}$ is defined in \eqref{eq:dual_sobolev_seminorm}.
  In particular, $\mathcal L_A v$ is understood in the weak sense when $v$ is only in $\mathcal H_S^1(\nu)$.
\end{itemize}

If either (i) or (ii) fails (or if $\nu\not\ll\mu$), we set $I(\nu)=\infty$.
\end{remark}

\paragraph{Proof of Proposition \ref{prop:rate_decomposition}}
The proof is inspired from the framework from \cite[Theorem 3.3]{LDP-GG} for the unconstrained setting. From Proposition \ref{prop:dv}, the rate function is $I(\nu) = \sup_{u \in D^+(\mathcal{L}_J)} \left\{ -\int_K \frac{(\mathcal{L}_J u)}{u} d\nu \right\}.$
Assume $d\nu=e^v d\mu$, where $v=\log(d\nu/d\mu)$. Following \cite{LDP-GG}, we choose the test function $u=e^{\omega}$, where $\omega = v/2+\psi/2$, and $\psi$ is another test function (assumed to be in $D(\mathcal{L}_J)$ and sufficiently regular). Then, using the fact that the diffusion matrix is the identity,
\[
  -\frac{(\mathcal{L}_J u)(x)}{u(x)} = -\frac{(\mathcal{L}_J e^\omega)(x)}{e^\omega(x)} = -\left((\mathcal{L}_J\omega)(x)+\|\nabla \omega(x)\|_{2}^2\right).
\]
Substituting $\mathcal{L}_J = \mathcal{L}_S + \mathcal{L}_A$ and $\omega=v/2+\psi/2$, $I(\nu)$ is the supremum over $\psi$ of:
\begin{multline}\label{eq:sup_terms_expanded_new_corr} 
    -\int_K \left( \mathcal{L}_S\left(\frac{v+\psi}{2}\right) + \mathcal{L}_A\left(\frac{v+\psi}{2}\right) + \left\|\nabla \left(\frac{v+\psi}{2}\right)\right\|_{2}^2 \right) d\nu \\
    = -\frac12\int_K(\mathcal{L}_S v + \mathcal{L}_A v)d\nu -\frac12\int_K(\mathcal{L}_S\psi + \mathcal{L}_A\psi)d\nu \\
    -\frac14\int_K\|\nabla v\|_2^2d\nu -\frac12\int_K\nabla v\cdot\nabla \psi d\nu -\frac14\int_K\|\nabla \psi\|_2^2d\nu.
\end{multline}
We use the divergence theorem (integration by parts): for a vector field $\mathbf{F}$ and scalar function $g$,
$\int_K(\nabla \cdot \mathbf{F})g \,dx = -\int_K \mathbf{F}\cdot \nabla g \,dx + \int_{\partial K}g(\mathbf{F}\cdot \mathbf{n}) \,dS$,
where $\mathbf{n}$ is the outward unit normal vector.
To simplify terms involving $\mathcal{L}_S$: $\mathcal{L}_S u = e^f \nabla \cdot (e^{-f}\nabla u)$.
The term $-\frac12\int_K (\mathcal{L}_S\psi) d\nu$ becomes:
\begin{align*}
    -\frac12\int_K (\mathcal{L}_S\psi) e^v d\mu
    &= -\frac12\int_K \left( e^{f(x)}\nabla \cdot (e^{-f(x)}\nabla \psi(x)) \right) e^{v(x)} \frac{1}{Z}e^{-f(x)}dx \\
    &= -\frac{1}{2Z}\int_K \left( \nabla \cdot(e^{-f(x)}\nabla \psi(x)) \right) e^{v(x)}dx \\
    &= -\frac{1}{2Z} \Bigg( -\int_K \left(e^{-f(x)}\nabla \psi(x)\right)\cdot(e^{v(x)}\nabla v(x))dx 
    \\
    &\qquad\qquad\qquad\qquad\qquad+ \int_{\partial K}e^{v(x)}\left(e^{-f(x)}\nabla \psi(x) \cdot \mathbf{n}(x)\right)dS \Bigg) \\
    &= \frac{1}{2Z}\int_K e^{-f(x)}e^{v(x)}(\nabla \psi(x)\cdot\nabla v(x))dx \quad (\text{since } \nabla \psi \cdot \mathbf{n}=0 \text{ on } \partial K) \\
    &= \frac12\int_K(\nabla \psi \cdot \nabla v)d\nu.
\end{align*}
Similarly, $-\frac12\int_K (\mathcal{L}_S v)d\nu = \frac12\int_K \|\nabla v\|_2^2 d\nu$.

For the term $\int_K(\mathcal{L}_A v)d\nu$:
Since $\mathcal{L}_A 1 = - \inner{J \nabla f}{\nabla 1} = 0$, and $\mathcal{L}_A$ is anti-self-adjoint with respect to $\mu$ ($\mathcal{L}_A^* = -\mathcal{L}_A$), we obtain
$\int_K(\mathcal{L}_A v)d\nu = \int_K(\mathcal{L}_A v)e^v d\mu$.
Use the anti-self-adjointness of $\mathcal{L}_A$, we can show the above term is zero:
\[ \int_K(\mathcal{L}_A v)e^v d\mu = \int_K(\mathcal{L}_A e^v) \cdot 1 \,d\mu = \int_K e^v (\mathcal{L}_A^* 1) d\mu = -\int_K e^v (\mathcal{L}_A 1) d\mu = 0. \]
Substituting these simplifications into \eqref{eq:sup_terms_expanded_new_corr}, the expression becomes:
\begin{align*}
    \frac14\int_K\|\nabla v\|_2^2d\nu - \frac12\int_K\mathcal{L}_A\psi d\nu - \frac14\int_K\|\nabla \psi\|_2^2d\nu.
\end{align*}
Using the anti-self-adjointness of $\mathcal{L}_A$ with respect to $\mu$ and $\mathcal{L}_A(e^v)=e^v(\mathcal{L}_A v)$:
\begin{align*}
    -\frac12\int_K(\mathcal{L}_A\psi)d\nu = -\frac12\int_K(\mathcal{L}_A\psi)e^v d\mu 
    &= \frac12\int_K \psi (\mathcal{L}_A e^v) d\mu 
    \\
    &= \frac12\int_K \psi e^v (\mathcal{L}_A v) d\mu = \frac12\int_K(\mathcal{L}_A v)\psi d\nu.
\end{align*}
Thus, the expression to maximize over $\psi$ is:
\[
    \frac14\int_K\|\nabla v\|_2^2d\nu + \frac12\int_K(\mathcal{L}_A v)\psi d\nu - \frac14\int_K\|\nabla \psi\|_2^2d\nu.
\]
Therefore, the rate function $I(\nu)$ is
\begin{align*}
    I(\nu)
    &= \sup_{\psi\in D^+(\mathcal L_J)}\left\{ \frac14\int_K\|\nabla v\|_2^2d\nu + \frac12\int_K(\mathcal{L}_A v)\psi d\nu - \frac14\int_K\|\nabla \psi\|_2^2d\nu \right\} \\
    &= \frac14\int_K\|\nabla v\|_2^2 d\nu + \frac14\|\mathcal{L}_A v\|_{\mathcal{H}_S^{-1}(\nu)}^2 \\
    &= \frac14\|v\|_{\mathcal{H}^1_S(\nu)}^2 + \frac14\|\mathcal{L}_A v\|_{\mathcal{H}_S^{-1}(\nu)}^2
    =: I_S(\nu) + I_A(\nu).
\end{align*}
Finally, by the variational representation
\[
I_A(\nu)
=\sup_{\psi \in D^+(\mathcal{L}_J)}\left\{\frac12\int_K (\mathcal L_A v)\,\psi\,d\nu-\frac14\int_K\|\nabla\psi\|_2^2\,d\nu\right\},
\]
we immediately have $I_A(\nu)\ge 0$ by choosing $\psi\equiv 0$ (which yields value $0$ inside the supremum).
This completes the proof.
\hfill$\blacksquare$

Let us discuss the implications for acceleration and structural dependence on $J$. Proposition~\ref{prop:rate_decomposition} shows that the rate function $I(\nu)$ for the SRNLD is the sum of the rate function $I_S(\nu)$ for the RLD and a non-negative term $I_A(\nu)$ due to the skew-symmetric part of the dynamics. Thus, $I(\nu) \ge I_S(\nu)$. A larger rate function implies that the probability of observing an empirical measure $\nu \neq \mu$ decays faster as $t \to \infty$. This indicates that the SRNLD process concentrates more sharply around the target invariant measure $\mu$ than the RLD process, thereby providing a theoretical justification for the acceleration observed with non-reversible dynamics.

The term $I_A(\nu) = \frac{1}{4} \|\mathcal{L}_A v\|_{\mathcal{H}_S^{-1}(\nu)}^2$, where $\mathcal{L}_A v(x) = -\inner{J(x)\nabla f(x)}{\nabla v(x)}$ and $v = \log(d\nu/d\mu)$, quantifies this improvement. To understand the intricate structural dependence of $I_A(\nu)$ on $J(x)$ beyond its overall magnitude, we first define the operator $(-\Delta_{\nu})^{-1}$. This operator acts on functions $\varphi \in L^2(K, d\nu)$ that have zero mean with respect to the measure $\nu$ (i.e., $\int_K \varphi \, d\nu = 0$). For such a $\varphi$, the function $u = (-\Delta_{\nu})^{-1}\varphi$ is defined as the unique solution in $H^1(K,d\nu)$ that also has zero mean with respect to $\nu$ (i.e., $\int_K u \, d\nu = 0$) and satisfies the weak formulation of the Neumann problem:
\[
\int_K \nabla u(x) \cdot \nabla \chi(x) \, \nu(dx) = \int_K \varphi(x) \chi(x) \, \nu(dx), \quad \text{for all } \chi \in H^1(K,d\nu).
\]
As established in the derivation of Proposition \ref{prop:rate_decomposition} (specifically, that $\int_K (\mathcal{L}_A v)d\nu = 0$), the term $\mathcal{L}_A v$ meets the zero-mean condition required for $(-\Delta_{\nu})^{-1}$ to be well-defined.
The rate function component $I_A(\nu)$ can then be expressed using this operator as:
\[
I_A(\nu) = \frac{1}{4} \int_K (\mathcal{L}_A v)(x) \left( (-\Delta_{\nu})^{-1} (\mathcal{L}_A v) \right)(x) \, \nu(dx).
\]
If $G_\nu(x,y)$ denotes the Green's function (integral kernel) of the operator $(-\Delta_{\nu})^{-1}$, this can be written as a quadratic form:
\[
I_A(\nu) = \frac{1}{4} \iint_{K \times K} \left(-\inner{J(x)\nabla f(x)}{\nabla v(x)}\right) G_\nu(x,y) \left(-\inner{J(y)\nabla f(y)}{\nabla v(y)}\right) \, \nu(dx) \, \nu(dy).
\]

\subsection{Variance Reduction in Skew-Reflected Non-Reversible Langevin Dynamics}
In this section, we demonstrate that the non-reversible structure can also lead to a reduction in the asymptotic variance. 
In the unconstrained setting, it is well-documented that breaking reversibility by adding a suitable skew-symmetric drift can significantly improve sampling efficiency in terms of reducing the asymptotic variance (see e.g. \cite{Duncan2017,reyLDP}).
In this section, we will show that this phenomenon 
also holds in the constrained setting.

Let $g \in C^{2}(K)$ be an observable, and consider its time-average estimator 
\[
    \hat{g}_t = \frac{1}{t}\int_0^t g(X_s)d s
\]
for the true mean $\bar{g} = \int_K g d\mu$. The long-term statistical properties of this estimator are governed by a Central Limit Theorem (CLT). The validity of the CLT for the SRNLD process \eqref{eq:sde_local} is guaranteed by its strong ergodic properties, which we briefly outline.

The generator $\calL_J$ of the SRNLD process \eqref{eq:sde_local} is a uniformly elliptic operator on a compact, connected domain $K$ with $\partial K\in C^{2,\alpha}$ and reflecting (Neumann) boundary conditions. These conditions imply that the associated Markov semigroup admits a strictly positive and continuous heat kernel (see, e.g., \cite[Theorem.~3.3.5]{DaviesHeatKernels}), which ensures that the semigroup is strongly Feller and irreducible on $K$. Consequently, the process admits a unique invariant probability measure $\mu$ (Lemma~\ref{lem:invariant_measure_existence}) and is ergodic.

Moreover, the reversible part $\calL_S$ (the generator of the reflected Langevin dynamics) has a spectral gap on the mean-zero subspace $L^2_0(\mu)$: there exists $\lambda_P>0$ (the Poincar\'e constant) such that for all $u\in D(\calL_S)\cap L^2_0(\mu)$,
\begin{equation}\label{eq:poincare_gap}
    \left\langle u,(-\calL_S)u\right\rangle_\mu=\int_K \|\nabla u\|_2^2\,d\mu \ \ge\ \lambda_P \|u\|_{L^2(\mu)}^2 .
\end{equation}
Since Assumptions~\ref{assump:boundary} and~\ref{assump:nablaJ} ensure $\calL_A$ is anti-self-adjoint in $L^2(\mu)$, we have for all $u\in D(\calL_J)\cap L^2_0(\mu)$,
\begin{equation}\label{eq:coercive_realpart}
    \mathrm{Re}\left\langle u,(-\calL_J)u\right\rangle_\mu
    =\mathrm{Re}\left\langle u,(-\calL_S-\calL_A)u\right\rangle_\mu
    =\left\langle u,(-\calL_S)u\right\rangle_\mu
    \ge \lambda_P \|u\|_{L^2(\mu)}^2 .
\end{equation}
In particular, $0$ is a simple eigenvalue of $\calL_J$ (constants), and $-\calL_J$ is injective and coercive on $L^2_0(\mu)$; hence the Poisson equation
\begin{equation}\label{eq:poisson_wellposed}
    -\calL_J \phi = g, \qquad \int_K \phi\,d\mu=0,
\end{equation}
has a unique solution $\phi\in D(\calL_J)\cap L^2_0(\mu)$ for each $g\in L^2_0(\mu)$, and the inverse $(-\calL_J)^{-1}$ is a bounded operator on $L^2_0(\mu)$ with
\begin{equation}\label{eq:inverse_bound}
    \|(-\calL_J)^{-1}g\|_{L^2(\mu)} \ \le\ \lambda_P^{-1}\|g\|_{L^2(\mu)} .
\end{equation}
Therefore, as $t \to \infty$, the estimator $\hat g_t$ satisfies a CLT:
\begin{equation}
    \sqrt{t}\left(\hat g_t-\bar g\right)\Rightarrow \mathcal N(0,\sigma^2_{g,J}) .
\end{equation}
The term $\sigma_{g,J}^2$ is known as the \textbf{asymptotic variance}, which quantifies the magnitude of the estimator's statistical fluctuations around the mean. It is formally defined by the time-integrated auto-correlation function:
\begin{equation} \label{eq:var_def}
    \sigma_{g,J}^2 = 2 \int_0^\infty \E_\mu \left[ (g(X_0) - \bar{g})(g(X_t) - \bar{g}) \right] d t.
\end{equation}

Next lemma is important and its analogous version for the unconstrained setting can be found in \cite{DLP2016}. We conduct our analysis in the Hilbert space of mean-zero functions with respect to the invariant measure $\mu$, defined as:
\[
L^2_0(\mu) := \left\{ g \in L^2(K, \mu) : \int_K g(x) d\mu(x) = 0 \right\},
\]
equipped with the standard $L^2(K,d\mu)$ inner product $\inner{g}{h} = \int_K g(x)h(x)d\mu(x)$.

\begin{lemma}[Characterizations of Asymptotic Variance]
\label{lemma:asymptotic_variance}
Let $(X_t)_{t\ge0}$ be the ergodic process defined by Eq. \eqref{eq:sde_local} with generator $\calL_J$ and invariant measure $\mu$. For any mean-zero observable $g \in L^2_0(\mu)$, 
the asymptotic variance $\sigma_{g,J}^2$ in the Central Limit Theorem can be characterized by the following equivalent expressions:
\begin{enumerate}[label=(\alph*)]
    \item \textbf{The Green-Kubo Formula:}
    \begin{equation}
        \sigma_{g,J}^2 = 2 \int_0^\infty \E_\mu [g(X_0) g(X_t)] dt.
    \end{equation}
    
    \item \textbf{The Operator-Theoretic Formula:}
    \begin{equation} \label{eq:var_poisson}
        \sigma_{g,J}^2 = 2 \inner{g}{(-\calL_J)^{-1}g}_\mu.
    \end{equation}
\end{enumerate}
\end{lemma}

\begin{proof}
The proof will be provided in Appendix~\ref{app:proof_asymptotic_variance}.
\end{proof}

\begin{remark}[PDE Formulation for the Asymptotic Variance]
  The asymptotic variance, $\sigma_{g,J}^2$ can be characterized through the solution of a partial differential equation (PDE). The variance is given by the operator-theoretic formula:
\begin{equation}
    \sigma_{g,J}^2 = 2\langle g, (-\mathcal{L}_J)^{-1}g \rangle_\mu.
\end{equation}
To compute this quantity, one can first find an auxiliary function, $\psi(x)$, by solving for the action of the resolvent operator $(-\mathcal{L}_J)^{-1}$ on the observable $g(x)$:
\begin{equation}
    \psi(x) = (-\mathcal{L}_J)^{-1}g(x).
\end{equation}
This is equivalent to solving the following second-order, linear, elliptic partial differential equation for $\psi(x)$:
\begin{equation}
    -\mathcal{L}_J \psi(x) = g(x), \quad x \in K.
\end{equation}
By substituting the explicit form of the infinitesimal generator $\mathcal{L}_J$ as defined in the manuscript, we obtain the full PDE:
\begin{equation}
    -\left( \Delta \psi(x) - \langle(I+J(x))\nabla f(x), \nabla \psi(x)\rangle \right) = g(x).
\end{equation}
This equation is subject to the boundary conditions imposed by the domain of the operator, $D(\mathcal{L}_J)$. Under Assumption~\ref{assump:boundary} ($J(x)\mathbf{n}(x)=0$ on $\partial K$), this simplifies to the standard Neumann boundary condition:
\begin{equation}
    \langle \nabla \psi(x), \mathbf{n}(x) \rangle = 0, \quad \text{for every $x \in \partial K$},
\end{equation}
where $\mathbf{n}(x)$ is the outward unit normal vector on the boundary of the domain $K$.

Once the solution $\psi(x)$ to this boundary value problem is found, the asymptotic variance is obtained by computing the inner product with respect to the invariant measure $\mu$:
\begin{equation}
    \sigma_{g,J}^2 = 2 \int_K g(x) \psi(x) d\mu(x).
\end{equation}
\end{remark}

\subsubsection{Quantitative Variance Reduction via Spectral Analysis}
\label{subsec:quantitative_variance}

We now establish that the SRNLD strictly reduces the asymptotic variance compared to the RLD and quantify this reduction explicitly. While the general inequality $\sigma_{g,J}^2 \le \sigma_{g,0}^2$ can be inferred from abstract operator inequalities, a spectral analysis provides a precise characterization of the efficiency gain.

We work in the Hilbert space $\mathcal{H}^1(\mu)$, defined as the completion of $D(\mathcal{L}_J)$ with respect to the inner product $\langle u, v \rangle_1 := \langle u, (-\mathcal{L}_S) v \rangle_\mu = \int_K \nabla u \cdot \nabla v \, d\mu$. Let $\mathcal{G} := (-\mathcal{L}_S)^{-1}\mathcal{L}_A$ be the operator characterizing the non-reversible perturbation.

It is important to emphasize that the validity of the following spectral analysis rests fundamentally on Assumption \ref{assump:boundary} ($J(x)\mathbf{n}(x)=0$ on $\partial K$). This condition serves two critical purposes:
\begin{enumerate}
    \item \textbf{Domain Compatibility:} It ensures that the non-reversible generator $\mathcal{L}_J$ shares the same domain (functions satisfying Neumann boundary conditions) as the reversible generator $\mathcal{L}_S$. This allows for the well-defined operator decomposition $\mathcal{L}_J = \mathcal{L}_S + \mathcal{L}_A$.
    \item \textbf{Skew-Adjointness:} It eliminates the boundary terms arising from integration by parts (as detailed in Appendix \ref{app:operator_decomposition_simplified}). Consequently, $\mathcal{L}_A$ becomes anti-self-adjoint on $L^2(\mu)$, which implies that $\mathcal{G}$ is a compact, \emph{skew-adjoint} operator on $\mathcal{H}^1(\mu)$ (i.e., $\mathcal{G}^* = -\mathcal{G}$).
\end{enumerate}

Due to this skew-adjointness, the spectrum of $\mathcal{G}$ consists of purely imaginary eigenvalues $\{\pm i \lambda_n\}_{n \ge 1}$ with $\lambda_n > 0$, and a possible kernel. Let $\{e_n\}_{n \ge 1}$ be the corresponding orthonormal eigenfunctions in $\mathcal{H}^1(\mu)$. Let $\phi_g = (-\mathcal{L}_S)^{-1}g$ be the solution to the Poisson equation for the reversible dynamics, decomposed as $\phi_g = \phi_0 + \sum_{n=1}^\infty c_n e_n$, where $\phi_0 \in \ker(\mathcal{G})$ and $c_n = \langle \phi_g, e_n \rangle_1$.

\begin{theorem}[Quantitative Variance Reduction]\label{thm:variance_reduction}
Under Assumptions \ref{assump:regularity_local}, \ref{assump:boundary}, and \ref{assump:nablaJ}, let $\sigma_{g,0}^2$ and $\sigma_{g,J}^2$ be the asymptotic variances for the reversible RLD and non-reversible SRNLD, respectively. The asymptotic variance of SRNLD is explicitly given by:
\begin{equation} \label{eq:spectral_variance}
    \sigma_{g,J}^2 = 2 \|\phi_0\|_1^2 + 2 \sum_{n=1}^\infty \frac{|c_n|^2}{1 + \lambda_n^2}.
\end{equation}
This implies a strict asymptotic variance reduction relative to the reversible case:
\begin{equation} \label{eq:variance_diff}
    \sigma_{g,0}^2 - \sigma_{g,J}^2 = 2 \sum_{n=1}^\infty |c_n|^2 \frac{\lambda_n^2}{1 + \lambda_n^2} \ge 0.
\end{equation}
The inequality is strict ($\sigma_{g,J}^2 < \sigma_{g,0}^2$) unless the observable $g$ is uncorrelated with the non-reversible part of the dynamics (i.e., $c_n = 0$ for all $n$).
\end{theorem}

\begin{proof}
Recall from Lemma~\ref{lemma:asymptotic_variance} that $\sigma_{g,J}^2 = 2 \langle g, (-\mathcal{L}_J)^{-1} g \rangle_\mu$. Using the decomposition $\mathcal{L}_J = \mathcal{L}_S + \mathcal{L}_A$ and the definition of $\mathcal{G}$, we can factorize the resolvent as $(-\mathcal{L}_J)^{-1} = (I + \mathcal{G})^{-1} (-\mathcal{L}_S)^{-1}$. Substituting $g = (-\mathcal{L}_S)\phi_g$, we obtain:
\[
    \sigma_{g,J}^2 = 2 \left\langle (-\mathcal{L}_S)\phi_g, (I + \mathcal{G})^{-1} \phi_g \right\rangle_\mu = 2 \left\langle \phi_g, (I + \mathcal{G})^{-1} \phi_g \right\rangle_1.
\]
Since the asymptotic variance is a real scalar, it is determined by the symmetric part of the resolvent operator in $\mathcal{H}^1(\mu)$. Crucially, because Assumption \ref{assump:boundary} guarantees $\mathcal{G}^* = -\mathcal{G}$, the symmetric part of $(I+\mathcal{G})^{-1}$ is simply $(I-\mathcal{G}^2)^{-1}$. Applying the spectral decomposition of $\phi_g$ to the operator $(I - \mathcal{G}^2)^{-1}$, where $\mathcal{G} e_n = \mathrm i \lambda_n e_n$, yields:
\[
    \left\langle \phi_g, (I - \mathcal{G}^2)^{-1} \phi_g \right\rangle_1 = \|\phi_0\|_1^2 + \sum_{n=1}^\infty \frac{1}{1 - (\mathrm i\lambda_n)^2} |c_n|^2 = \|\phi_0\|_1^2 + \sum_{n=1}^\infty \frac{|c_n|^2}{1 + \lambda_n^2}.
\]
Multiplying by 2 gives Eq.~\eqref{eq:spectral_variance}. For the reversible case ($J\equiv 0$), we have $\mathcal{G}=0$ (implying $\lambda_n=0$), which yields $\sigma_{g,0}^2 = 2 \|\phi_g\|_1^2 = 2 \|\phi_0\|_1^2 + 2 \sum_{n=1}^{\infty} |c_n|^2$. Subtracting the two expressions directly yields Eq.~\eqref{eq:variance_diff}. Since $\lambda_n^2 > 0$, the difference is non-negative, proving the reduction.
The proof is complete.
\end{proof}


\subsubsection{Connection to Large Deviations Theory}

We have established a Large Deviation Principle for the empirical measure $L_T = \frac{1}{T}\int_0^T \delta_{X_t} dt$ with a good rate function $I(\nu)$ (Theorem~\ref{thm:ldp_empirical_measures}). The decomposition $I(\nu)=I_S(\nu)+I_A(\nu)$ immediately suggests that convergence is accelerated, as $I(\nu) \ge I_S(\nu)$. We now show how this LDP structure directly translates to the quantitative variance reduction results obtained in the previous section.

A fundamental result from LDP theory connects the asymptotic variance to the curvature of the rate function for a specific observable. Let 
\begin{align*}
{I}_{g,J}(l) := \inf_{\nu \in \calP(K) : \int_K g d\nu = l} I(\nu)
\end{align*}
be the rate function for large deviations of the time-average $\hat{g}_t$, obtained via the Contraction Principle (see, e.g., \cite[Theorem 4.2.1]{DZ1998}).

\begin{theorem}[LDP Formulations for Asymptotic Variance]
\label{thm:equiv_formulations}
Let $\sigma_{g,J}^2$ be the asymptotic variance of the time-average estimator for a mean-zero observable $g \in L^2_0(\mu)$ under the SRNLD process. The following two characterizations of the variance are equivalent:
\begin{enumerate}[label=(\alph*)]
    \item The operator-theoretic formula:
    \begin{equation} \label{eq:var_op_form}
        \sigma_{g,J}^2 = 2 \inner{g}{(-\calL_J)^{-1}g}_\mu.
    \end{equation}
    \item The LDP formulation via the curvature of the rate function at the mean ($\bar{g}=0$):
    \begin{equation} \label{eq:var_ldp_form}
        \sigma_{g,J}^2 = \frac{1}{I_{g,J}''(0)}.
    \end{equation}
\end{enumerate}
\end{theorem}

\begin{proof}
The proof proceeds by demonstrating that both formulations are equivalent to a third quantity: the second derivative of the Scaled Cumulant Generating Function (SCGF) at the origin.

The connection between the rate function $I_{g,J}(l)$ and the SCGF is established by the G\"artner-Ellis theorem, which provides a variational representation for the rate function. The theorem states that under suitable conditions (finiteness and differentiability of the SCGF, and exponential tightness, which are satisfied on our compact domain $K$), the rate function is the Legendre-Fenchel transform of the SCGF (see, e.g., \cite[Theorem 2.3.6]{DZ1998}):
\begin{equation}
    I_{g,J}(l) = \sup_{\beta \in \R} \{ \beta l - \lambda_J(\beta) \}.
\end{equation}
The SCGF $\lambda_J(\beta)$ itself is defined via the long-time limit of moment generating functions:
\begin{equation}
    \lambda_J(\beta) = \lim_{T \to \infty} \frac{1}{T} \log \E \left[ \exp\left( \beta \int_0^T g(X_t) dt \right) \right].
\end{equation}
By the Feynman-Kac formula, this limit corresponds to the principal eigenvalue of the perturbed operator $\calL_\beta := \calL_J + \beta g$. The existence and properties of this eigenvalue for elliptic operators on compact domains are well-established results in spectral theory, see for example \cite[Section~6.5, Theorem~3]{evans2022partial}.

A fundamental property of the Legendre transform is that the second derivatives of a function and its dual are reciprocals at corresponding points. We are interested in the point $l = \bar{g} = 0$. The corresponding dual point is $\beta=0$, since the derivative of the SCGF at zero is the expectation of the observable:
\begin{equation*}
    \lambda_J'(0) = \lim_{T\to\infty} \E\left[ \frac{1}{T}\int_0^T g(X_t)dt \right] = \E_\mu[g] = \bar{g} = 0.
\end{equation*}
A key property of the Legendre-Fenchel transform is the duality of derivatives. Let $\beta(l)$ be the value of $\beta$ that achieves the supremum in the definition of $I_{g,J}(l)$. The first-order optimality conditions imply the dual relations: $I_{g,J}'(l) = \beta(l) \quad \text{and} \quad l = \lambda_J'(\beta(l)).$
Differentiating the second identity, $l = \lambda_J'(\beta(l))$, with respect to $l$ using the chain rule gives $1 = \lambda_J''(\beta(l)) \cdot \frac{d\beta}{dl}$. Since $\frac{d\beta}{dl} = I_{g,J}''(l)$ from the first identity, we establish the reciprocal relationship between the second derivatives, $I_{g,J}''(l) \cdot \lambda_J''(\beta(l)) = 1.$
We are interested in the curvature at the mean value, $l = \bar{g} = 0$. The corresponding dual point is $\beta(0)=0$, since $\lambda_J'(0) = \E_\mu[g] = 0$. Evaluating at this point yields the desired relation: $I_{g,J}''(0) = \frac{1}{\lambda_J''(0)}.$
This shows that the LDP formulation of variance, Eq. \eqref{eq:var_ldp_form}, is equivalent to the statement $\sigma_{g,J}^2 = \lambda_J''(0)$.
Now, we show that the operator-theoretic formulation, Eq. \eqref{eq:var_op_form}, is also equivalent to $\sigma_{g,J}^2 = \lambda_J''(0)$.
The quantity $\lambda_J''(0)$ is the second derivative of the principal eigenvalue of the perturbed operator $\calL_\beta = \calL_J + \beta g$ at $\beta=0$. This can be computed using perturbation theory for linear operators. Let $\lambda(\beta)$, $u_\beta$, and $v_\beta$ be the principal eigenvalue, right eigenfunction, and left eigenfunction of $\calL_\beta$, respectively, with normalizations $\inner{u_\beta}{v_\beta}_\mu=1$ and $\inner{1}{v_\beta}_\mu=1$.

The first derivative is given by $\lambda'(0) = \inner{g u_0}{v_0}_\mu = \inner{g}{1}_\mu = \bar{g} = 0$, where $u_0=v_0=1$.
The second derivative formula, derived from the perturbation series (see, e.g., \cite[Chapter II, Equation~2.33]{kato2013perturbation}), is given by:
\begin{equation} \label{eq:perturbation_result}
    \lambda_J''(0) = \inner{g u_0'}{v_0}_\mu + \inner{g u_0}{v_0'}_\mu,
\end{equation}
where $u_0'$ and $v_0'$ are the first-order corrections to the right and left eigenfunctions, respectively. 
Solving the first-order equations yields $u_0' = (-\calL_J)^{-1}g$ and $v_0' = (-\calL_J^*)^{-1}g$. 
Due to the duality in $L^2(\mu)$, the two terms in \eqref{eq:perturbation_result} are equal:
\[
    \inner{g u_0}{v_0'}_\mu = \inner{g}{(-\calL_J^*)^{-1}g}_\mu = \inner{(-\calL_J)^{-1}g}{g}_\mu = \inner{u_0'}{g u_0}_\mu.
\]
Thus, the expression simplifies to the dominant term:
\begin{equation*}
    \lambda_J''(0) = 2 \inner{g u_0'}{v_0}_\mu = 2 \inner{g (-\calL_J)^{-1} g}{1}_\mu = 2\inner{g}{(-\calL_J)^{-1}g}_\mu.
\end{equation*}

This formula thus arises fundamentally from solving for the first-order perturbation of the eigenfunction, which necessitates inverting the generator $\calL_J$ on the space of mean-zero functions. For a full treatment of perturbation theory, \cite[Chapter II]{kato2013perturbation} is the standard reference.

By comparing \eqref{eq:perturbation_result} with the operator-theoretic variance formula \eqref{eq:var_op_form}, we immediately see that, $\sigma_{g,J}^2 = \lambda_J''(0).$
Since we have $\sigma_{g,J}^2 = 1 / I_{g,J}''(0) \iff \sigma_{g,J}^2 = \lambda_J''(0)$ and $\sigma_{g,J}^2 = 2\inner{g}{(-\calL_J)^{-1}g}_\mu \iff \sigma_{g,J}^2 = \lambda_J''(0)$, this establishes the theorem.
\end{proof}

\paragraph{Derivation of Quantitative Variance Reduction via LDP.}
We can now utilize the equivalence $\sigma_{g,J}^2 = 1/I_{g,J}''(0)$ to recover the quantitative spectral formula \eqref{eq:spectral_variance} directly from the local geometry of the rate function $I(\nu)$. This confirms that the acceleration phenomenon is encoded in the ``steepness'' of the LDP rate function.

Consider the tangent space of $\mathcal{P}(K)$ at $\mu$, which can be identified with the space of mean-zero functions $L^2_0(\mu)$. For any direction $v \in L^2_0(\mu)$, let $(\nu_\epsilon)_{\epsilon \ge 0}$ be a smooth path of probability measures starting at $\mu$ with initial velocity $v$ (e.g., defined by the density $d\nu_\epsilon \propto e^{\epsilon v} d\mu$). We define the second variation of the rate function along this direction as:
\[
    \mathcal{Q}(v) := \lim_{\epsilon \to 0} \frac{1}{\epsilon^2} I(\nu_\epsilon).
\]
For the symmetric part, a direct calculation using the definition of $I_S$ yields the Dirichlet energy:
\[
    \mathcal{Q}_S(v) = \frac{1}{4} \int_K \|\nabla v\|_2^2 d\mu = \frac{1}{4} \|v\|_1^2.
\]
For the skew-symmetric part, using the operator $\mathcal{G} = (-\mathcal{L}_S)^{-1}\mathcal{L}_A$ and the identity $\mathcal{L}_A = -\mathcal{L}_S \mathcal{G}$, the second variation corresponds to the squared dual norm in $\mathcal{H}^1(\mu)$:
\[
    \mathcal{Q}_A(v) = \frac{1}{4} \|\mathcal{L}_A v\|_{\mathcal{H}_S^{-1}(\mu)}^2 = \frac{1}{4} \langle \mathcal{L}_A v, (-\mathcal{L}_S)^{-1} \mathcal{L}_A v \rangle_\mu = \frac{1}{4} \|\mathcal{G} v\|_1^2.
\]
Thus, the total second variation is strictly equal to the following quadratic form:
\[
    \mathcal{Q}(v) = \frac{1}{4} \left( \|v\|_1^2 + \|\mathcal{G} v\|_1^2 \right) = \frac{1}{4} \left\langle v, (I + \mathcal{G}^*\mathcal{G}) v \right\rangle_1 = \frac{1}{4} \left\langle v, (I - \mathcal{G}^2) v \right\rangle_1,
\]
where we used the skew-adjointness $\mathcal{G}^* = -\mathcal{G}$ in $\mathcal{H}^1(\mu)$.

To find the variance, we compute the curvature of the observable's rate function. This amounts to minimizing the quadratic form $\mathcal{Q}(v)$ over all directions $v$ compatible with a small deviation in the observable mean, i.e., subject to the linearized constraint $\langle \phi_g, v \rangle_1 = l$:
\[
    \frac{1}{2} I_{g,J}''(0) l^2 = \inf_{v \in L^2_0(\mu)} \left\{ \frac{1}{4} \langle v, \left(I - \mathcal{G}^2\right) v \rangle_1 \;\middle|\; \langle \phi_g, v \rangle_1 = l \right\}.
\]
The exact solution to this quadratic programming problem is $v^* = \lambda (I - \mathcal{G}^2)^{-1} \phi_g$ for a Lagrange multiplier $\lambda$. Substituting this back yields the minimum value:
\[
    \frac{1}{2} I_{g,J}''(0) l^2 = \frac{l^2}{4 \langle \phi_g, (I - \mathcal{G}^2)^{-1} \phi_g \rangle_1}.
\]
Using the relation $\sigma_{g,J}^2 = 1/I_{g,J}''(0)$, we immediately recover the operator-theoretic variance formula:
\[
    \sigma_{g,J}^2 = 2 \langle \phi_g, \left(I - \mathcal{G}^2\right)^{-1} \phi_g \rangle_1.
\]
Applying the spectral decomposition of the skew-adjoint operator $\mathcal{G}$ (with eigenvalues $\mathrm i\lambda_n$) to the term $(I - \mathcal{G}^2)^{-1}$ yields the quantitative series expansion derived in Theorem \ref{thm:variance_reduction}:
\[
    \sigma_{g,J}^2 = 2 \|\phi_0\|_1^2 + 2 \sum_{n=1}^\infty \frac{|c_n|^2}{1 + \lambda_n^2}.
\]
This derivation demonstrates that the variance reduction ($\sigma_{g,J}^2 < \sigma_{g,0}^2$) is a direct consequence of the additional positive term $\|\mathcal{G}v\|_1^2$ in the LDP rate function, which increases the curvature $I_{g,J}''(0)$ and thus decreases the variance.

\begin{remark}[Connection to the Cram\'{e}r-Rao Lower Bound]
Note that the identity $\sigma_{g,J}^2 = 1/I_{g,J}''(0)$ in Eq.~\eqref{eq:var_ldp_form} identifies the asymptotic variance with the inverse curvature of the rate function. In the theory of large deviations for estimators, \cite{bahadur1967rates} established that this quantity corresponds to the ``asymptotic effective variance,'' which coincides with the Cram\'{e}r-Rao lower bound (the inverse Fisher information) for efficient estimators. Thus, $I_{g,J}''(0)$ can be interpreted as the dynamical Fisher information. This suggests that an optimal design of $J$ should aim to maximize the curvature $I_{g,J}''(0)$ to minimize the variance bound.
\end{remark}

\section{Numerical Experiments}\label{sec:numerical_experiments}

We present numerical results to validate our theoretical findings, specifically to demonstrate the impact of choosing different skew-symmetric matrices $J(x)$ on the sampling efficiency of \textit{skew-reflected non-reversible Langevin Monte Carlo} (SRNLMC). We compare the performance of standard \textit{projected Langevin Monte Carlo} (PLMC), corresponding to $J(x) \equiv 0$, against SRNLMC employing various forms of $J(x)$. These forms include a constant skew-symmetric matrix and state-dependent skew-symmetric matrices designed such that $J(x)\mathbf n(x) = 0$ on the boundary of the unit ball (where $\mathbf n(x)$ is the normal vector). Experiments are conducted on a toy problem of sampling a truncated Gaussian, a constrained Bayesian linear regression task, and constrained Bayesian logistic regressions using both synthetic and real data sets. In this work, to make fair comparison, we adopt similar experimental setup as in \cite{DFTWZ2025}. Our goal is to demonstrate that by using an appropriately chosen state-dependent skew-symmetric matrix $J_s(x)$, SRNLMC can achieve better acceleration performance than simply using a constant skew-symmetric matrix $J_a$.  In particular, consider the constraint set to be a $3$-dimensional centered $\ell^2$ unit ball such that $K = \{x \in \mathbb{R}^3: \Vert x \Vert_2^2 \leq r\}$, and smoothed $\ell_p$ ball constraint such that the sublevel domain is $K  = \{x \in \mathbb{R}^3: g(x) \leq \lambda\}$, 
with 
$g(x)
:= \sum_{i=1}^{d} \left(x_i^2+\varepsilon^2\right)^{\frac{p}{2}}$
as given in \eqref{eq:smoothlp_Rd:f}, 
we will conduct a toy example of truncated multivariate normal distribution, constrained Bayesian linear regression and constrained Bayesian logistic regression using synthetic data. And then we will provide constrained Bayesian logistic regression using real data as an example in high-dimension. In the $3$-dimensional examples, we choose the following constant skew-symmetric matrix $J_a$ as a nonstate-dependent skew-symmetric matrix. 
\begin{equation}
\label{eqn:skew:matrix}
J_a = \begin{pmatrix}
0 & a & 0 \\
-a & 0 & a \\
0 & -a & 0
\end{pmatrix},
\end{equation}
where $a \neq 0$ is a self-chosen constant parameter. In general, we will consider two state-dependent skew-symmetric matrices. For the $\ell^2$ unit ball constraint set, the state-dependent skew-symmetric matrix $J_s(x)$ is defined as the following,
\begin{equation}
\label{eqn:sym:matrix}
    J_s(x) = \begin{pmatrix}
    0 & -sx_3 & sx_2 \\
    sx_3 & 0 & -sx_1 \\
    -sx_2 & sx_1 & 0
\end{pmatrix},
\end{equation}
 where $s \neq 0$ is a self-chosen constant parameter. For the smoothed $\ell_p$ ball constraint with the sublevel-set $ \{x \in \mathbb{R}^d: g(x)\leq \lambda\}$, with 
$g(x)
:= \sum_{i=1}^{d} \left(x_i^2+\varepsilon^2\right)^{\frac{p}{2}}$
as given in \eqref{eq:smoothlp_Rd:f}, the skew-symmetric matrix $J_g(x)$ is defined as:
\begin{equation}
\label{eq:smooth:matrix}
J_g(x)=\left(\begin{array}{ccc}
0 & -k_3^{\left(a_{\ell}, b_{\ell}, c_{\ell}\right)}(x) & k_2^{\left(a_{\ell}, b_{\ell}, c_{\ell}\right)}(x) \\
k_3^{\left(a_{\ell}, b_{\ell}, c_{\ell}\right)}(x) & 0 & -k_1^{\left(a_{\ell}, b_{\ell}, c_{\ell}\right)}(x) \\
-k_2^{\left(a_{\ell}, b_{\ell}, c_{\ell}\right)}(x) & k_1^{\left(a_{\ell}, b_{\ell}, c_{\ell}\right)}(x) & 0
\end{array}\right),
\end{equation}
with the coefficients given in~\eqref{eq:smoothlp_Rd:k_explicit}.

 In Section~\ref{subsec:toy:example}, we will first compare the \emph{iteration complexity} of SRNLMC ($J_a$) using constant skew-symmetric matrix $J_a$ to the one of SRNLMC ($J_s(x)$),  the one of SRNLMC ($J_g(x)$) and the one of \textit{projected Langevin Monte Carlo} (PLMC) in $\mathcal{W}_1$ distance by using synthetic data in a toy example of the truncated standard multivariate normal distribution. By introducing \emph{stochastic gradients}, we will propose \emph{skew-reflected non-reversible stochastic gradient Langevin dynamics} (SRNSGLD) and \emph{projected stochastic gradient Langevin dynamics} (PSGLD). 
 In Section~\ref{subsec:Bayesian:linear:example}, we will compare SRNSGLD to PSGLD in terms of \emph{mean squared error} by using synthetic data in the example of constrained Bayesian linear regression. Furthermore, we will consider the example of constrained Bayesian logistic regression and we compare the \emph{accuracy}, which is defined as the ratio of the correctly predicted labels over the whole dataset in deep learning experiments, of the algorithms by using either synthetic or real data in Section~\ref{subsec:Bayesian:log:example}.


\subsection{Toy Example: Truncated Multivariate Normal Distribution}
\label{subsec:toy:example}
\begin{equation}
\mu(x) = \frac{1}{Z}\exp\left\{-\frac{x^{\top }\Sigma^{-1}x}{2}\right\} \mathbf{1}_{K},\quad \Sigma = \begin{pmatrix}
0.25 & 0 & 0 \\
0 & 1 & 0 \\
0 & 0 & 4
\end{pmatrix},
\end{equation}
where $\mathbf{1}_{K}$ denotes the indicator function on $K$ and $Z:=\int_{K}e^{-\frac{x^{\top }\Sigma^{-1}x}{2}}dx$
is the normalizing constant. 

In our first set of experiments, we take the convex constraint sets to be ball $K$ in $\mathbb{R}^3$: 
\begin{equation}
 K = \{x \in \mathbb{R}^3:\|x\|^2_2 \leq 1\},
\end{equation}
we choose the skew-symmetric matrix $J_a$ with $a = 1$ in~\eqref{eqn:skew:matrix} and the state-dependent skew-symmetric matrix $J_s$ with $s = 5$ in~\eqref{eqn:sym:matrix}. 

In our second set of experiments, we consider the sublevel-set constraint: 
\begin{equation}
\label{eq:K:smoothed}
 K = \{x \in \mathbb{R}^3: g(x) \leq \lambda\}, \qquad g(x):= \sum_{i=1}^3\left(x_i^2 + \varepsilon^2\right)^{p/2}.
\end{equation}
that follows from the definition in \eqref{eq:smoothlp_Rd:f}. 
 To allow fine-tuning, we incorporate a scaling parameter to the skew-symmetric matrix in \eqref{eq:smooth:matrix} to obtain:
\begin{equation}
\label{eq:smooth:matrix:scale}
J_g(x)=\left(\begin{array}{ccc}
0 & -sk_3^{\left(a_{\ell}, b_{\ell}, c_{\ell}\right)}(x) & sk_2^{\left(a_{\ell}, b_{\ell}, c_{\ell}\right)}(x) \\
sk_3^{\left(a_{\ell}, b_{\ell}, c_{\ell}\right)}(x) & 0 & -sk_1^{\left(a_{\ell}, b_{\ell}, c_{\ell}\right)}(x) \\
-sk_2^{\left(a_{\ell}, b_{\ell}, c_{\ell}\right)}(x) & sk_1^{\left(a_{\ell}, b_{\ell}, c_{\ell}\right)}(x) & 0
\end{array}\right), \qquad s \neq 0.
\end{equation}
In this toy example, we will take $p = 4$, $\varepsilon = 0.2$ and $\lambda = 1.0$ in~\eqref{eq:K:smoothed} and take the scaling parameter $s = 8$ in~\eqref{eq:smooth:matrix:scale}.

In the proceeding, we simulate $5000$ samples and take $2000$ iterates starting from the initial point $x_0 = [0.2, 0.3, 0.5]^\top$ with step size $\eta=5\times 10^{-4}$, and we first derive a skew-projection formula explicitly by computing the skew unit normal vector direction on the convex set $K$, and then we approximate the Gibbs distribution by doing rejecting and accepting sampling, i.e. we take \textit{i.i.d.} samples from the 3-dimensional standard normal distribution and discard the samples that are out of the constraint set until obtaining the required number of sample points. Finally, we compare $\mathcal{W}_1$ distance between the target distribution and the sampling stationary distributions driven by SRNLMC ($J_a$), SRNLMC ($J_s(x)$), SRNLMC ($J_g(x)$) and PLMC in each dimension. 

In Figure~\ref{visualized_ball}, we show the first $2$ dimensions of the target distribution and the ones of the sampling stationary distributions by SRNLMC ($J_a$), SRNLMC ($J_s(x)$) and PLMC sequentially.  In the same way, we visulize the target distribution and sampling distribution in first $2$ dimensions by SRNLMC ($J_a$), SRNLMC ($J_g(x)$) and PLMC sequentially in Figure~\ref{visualized_sub}.  These results show that SRNLMC ($J_a$), SRNLMC ($J_s(x)$), SRNLMC ($J_g(x)$) and PLMC can successfully converge to the truncated standard normal distribution under our setting. 

\begin{figure}[htbp]
    \centering
    \begin{subfigure}[c]{0.3\textwidth}
        \centering
        \includegraphics[width=\linewidth]{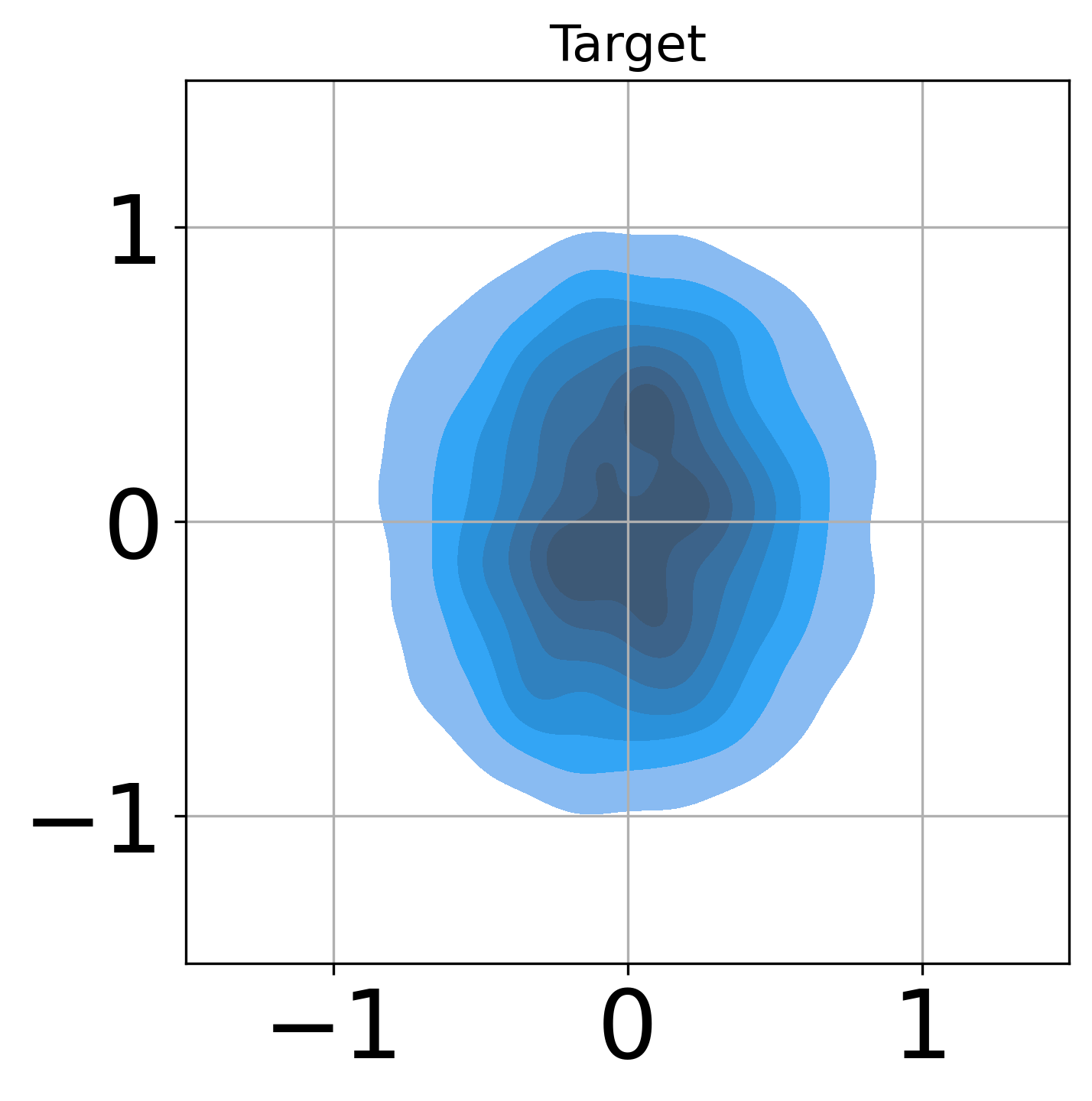}
        \caption{Target distribution}
    \end{subfigure}%
    \hspace{0.1\textwidth}
    \begin{subfigure}[c]{0.3\textwidth}
        \centering
        \includegraphics[width=\linewidth]{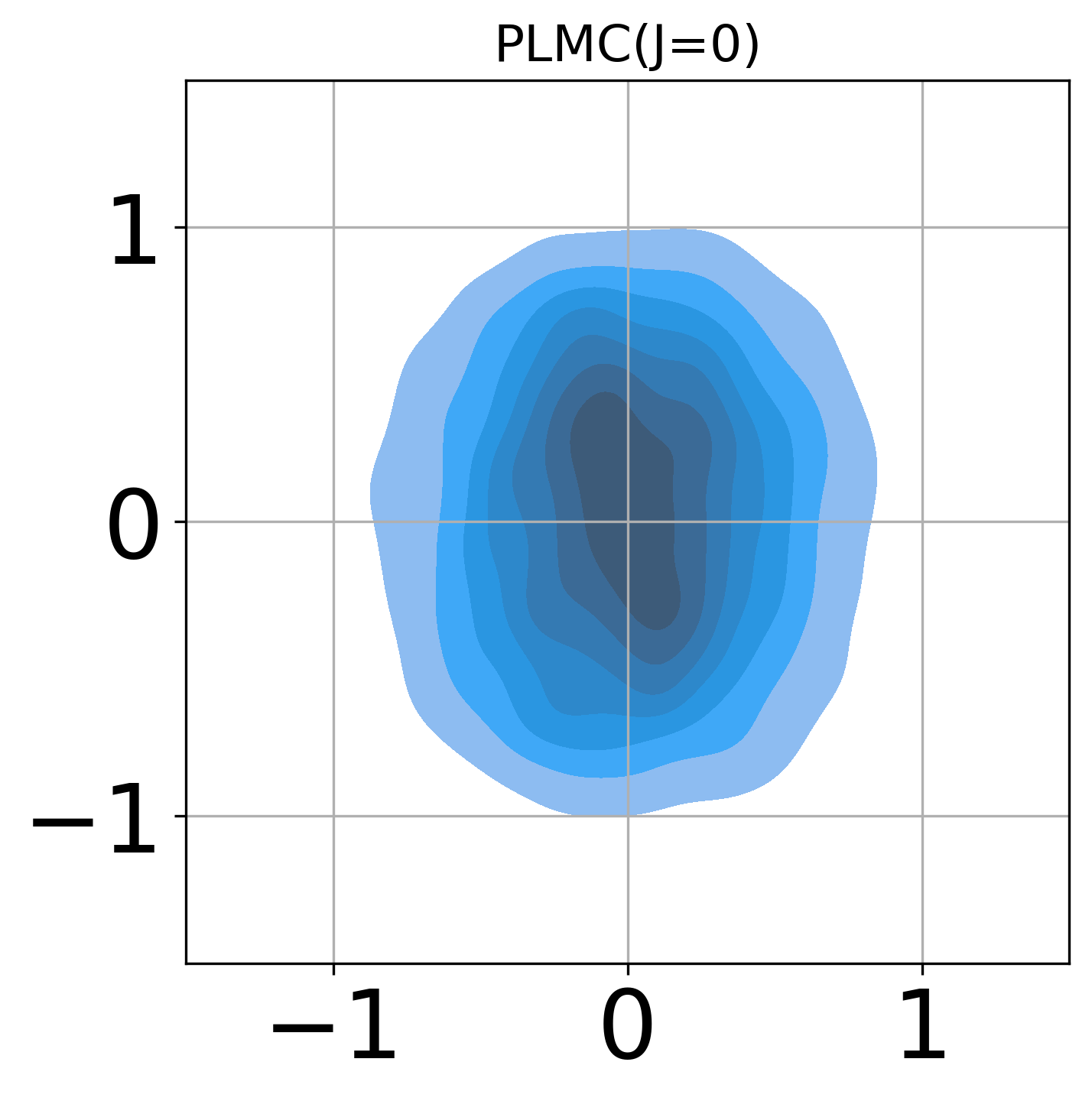}
        \caption{PLMC ($J = 0$)}
    \end{subfigure}
    
    \vspace{0.8cm}
    
    \begin{subfigure}[c]{0.3\textwidth}
        \centering
        \includegraphics[width=\linewidth]{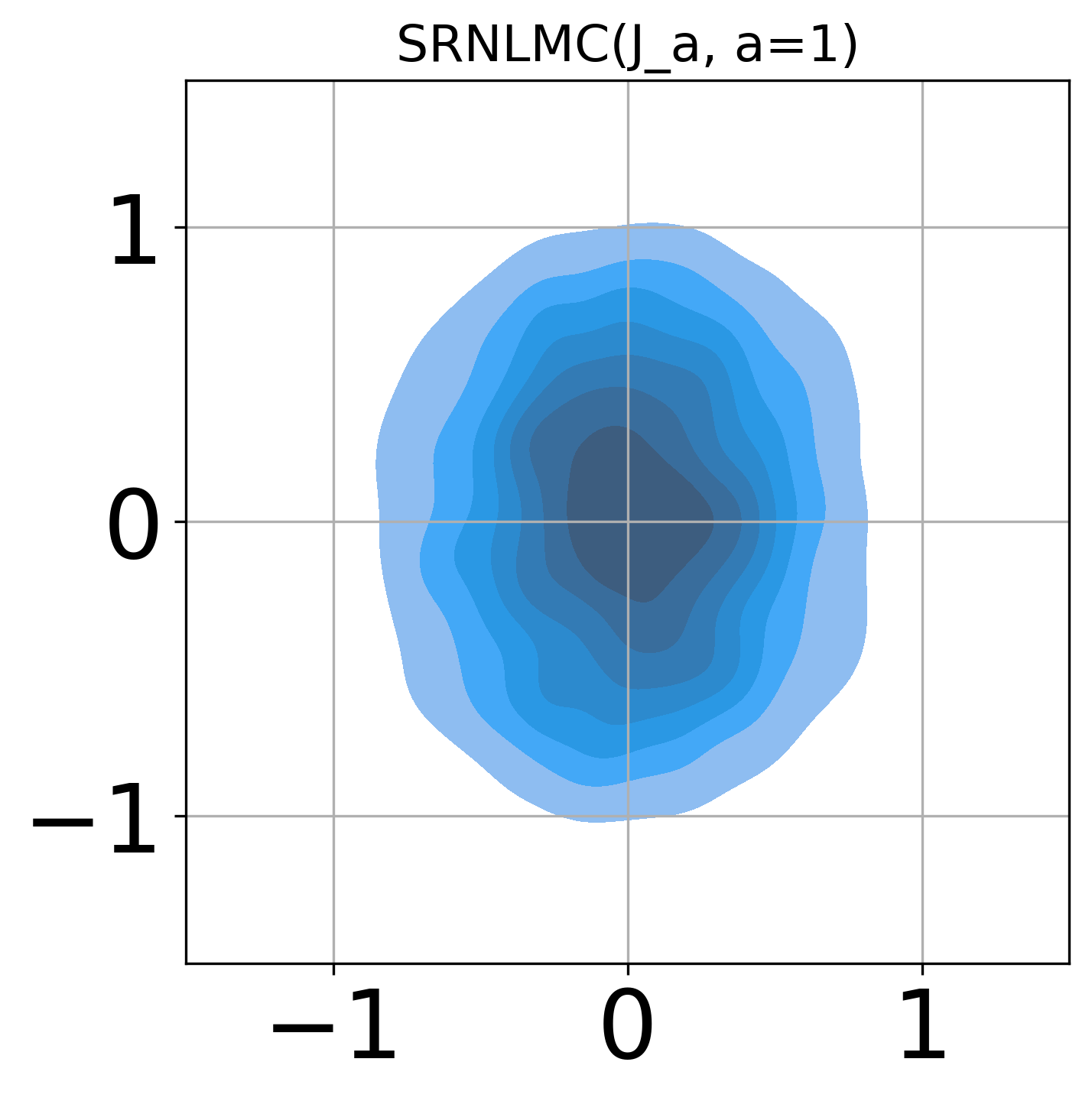}
\caption{SRNLMC ($J_{a = 1}$)}
    \end{subfigure}%
    \hspace{0.1\textwidth}    
    \begin{subfigure}[c]{0.3\textwidth}
        \centering
        \includegraphics[width=\linewidth]{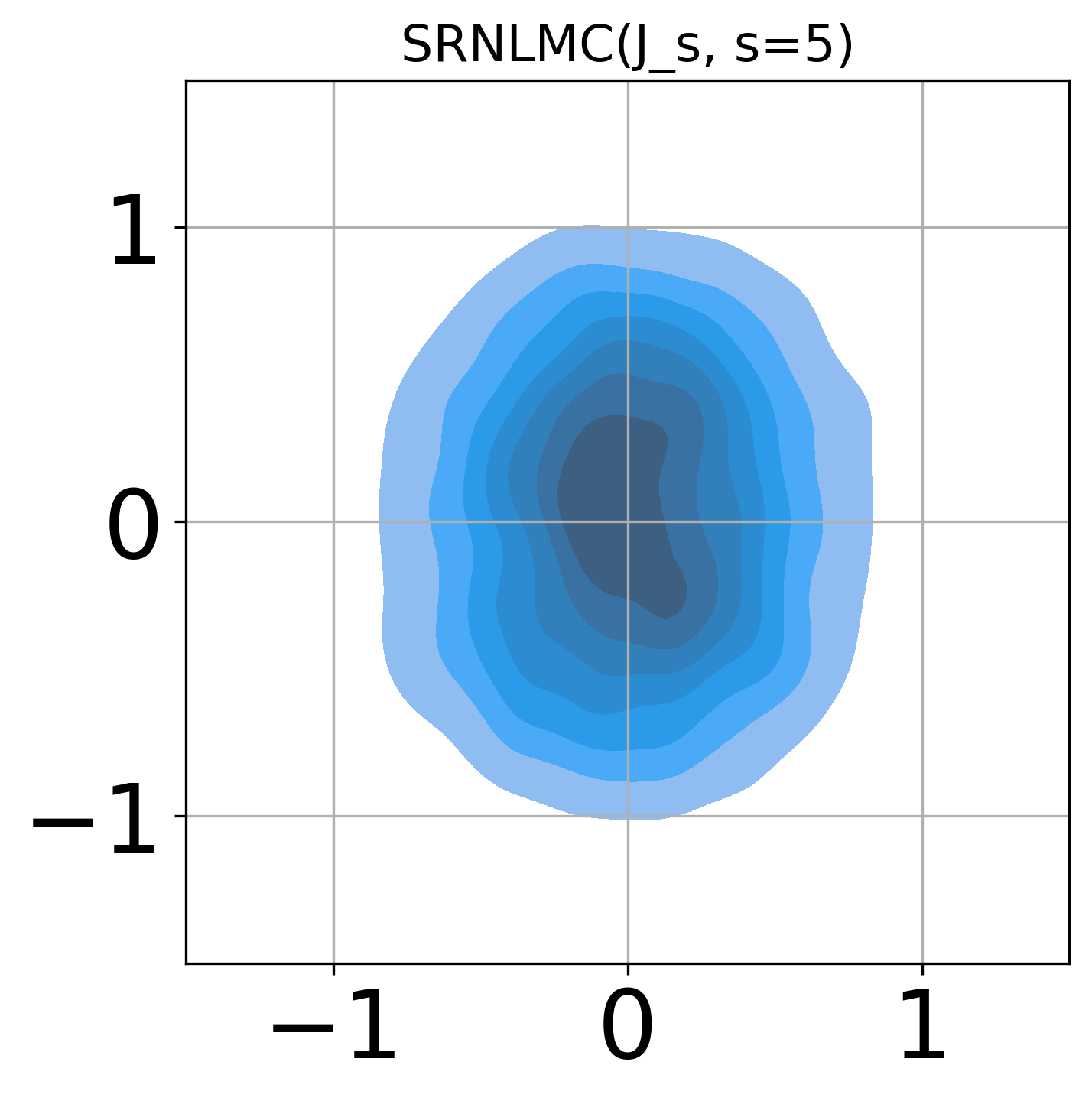}
        \caption{SRNLMC ($J_{s = 5}$)}
    \end{subfigure}
    
    \caption{Visualized density plots for the first 2 dimensions with a centered ball constraint.}
    \label{visualized_ball}
\end{figure}

Then we compare the convergence speed of SRNLMC ($J_s(x)$) to either SRNLMC ($J_a$) or PLMC by computing the $\mathcal{W}_1$ distance between the target distribution and the sampled stationary distributions in each dimension,  and then we do the similar comparison for SRNLMC ($J_g(x)$). We obtain the plots in Figure~\ref{1wasserball} and Figure~\ref{1wassersub} where the blue line represents SRNLMC ($J_a$), the orange line represents SRNLMC ($J_s(x)$)  or SRNLMC ($J_g(x)$) and the green line represents PLMC in each subfigure.
\begin{figure}[htbp]
    \centering
    \begin{subfigure}{0.3\textwidth}
        \centering
        \includegraphics[width=\linewidth]{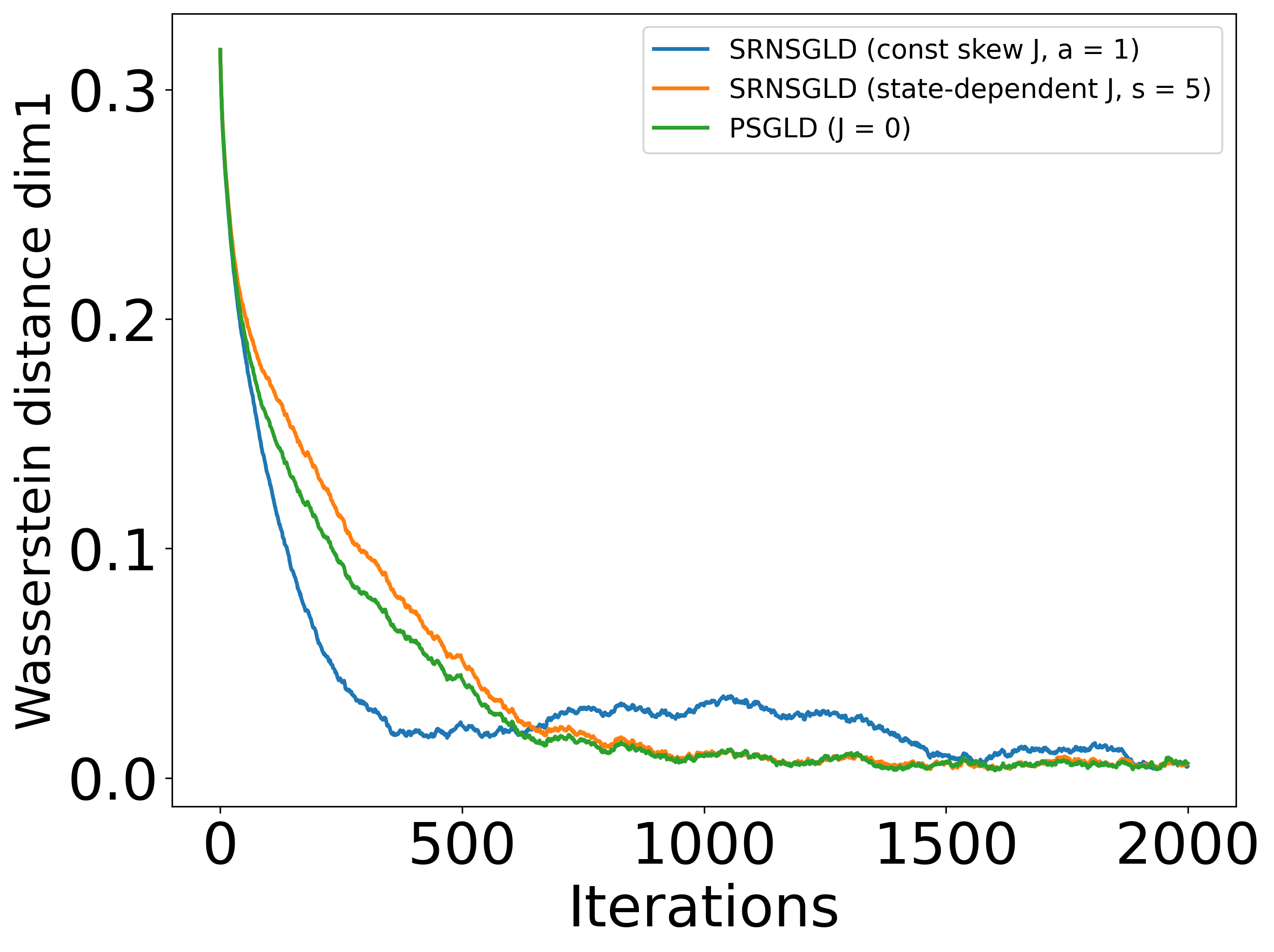}
        \caption{Dimension 1}
    \end{subfigure}%
    \hspace{0.5cm}
    \begin{subfigure}{0.3\textwidth}
        \centering
        \includegraphics[width=\linewidth]{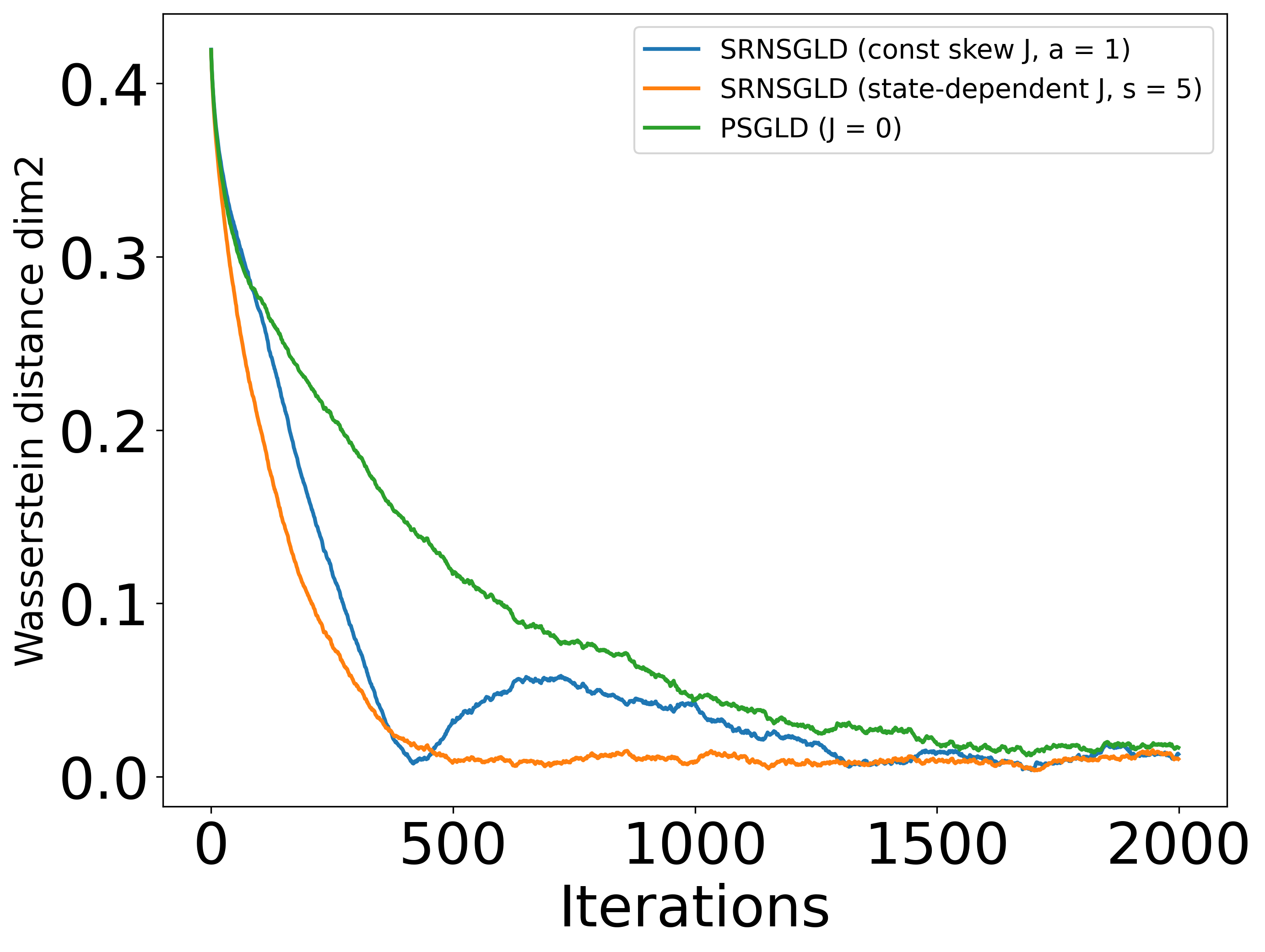}
        \caption{Dimension 2}
    \end{subfigure}%
    \hspace{0.5cm}
    \begin{subfigure}{0.3\textwidth}
        \centering
        \includegraphics[width=\linewidth]{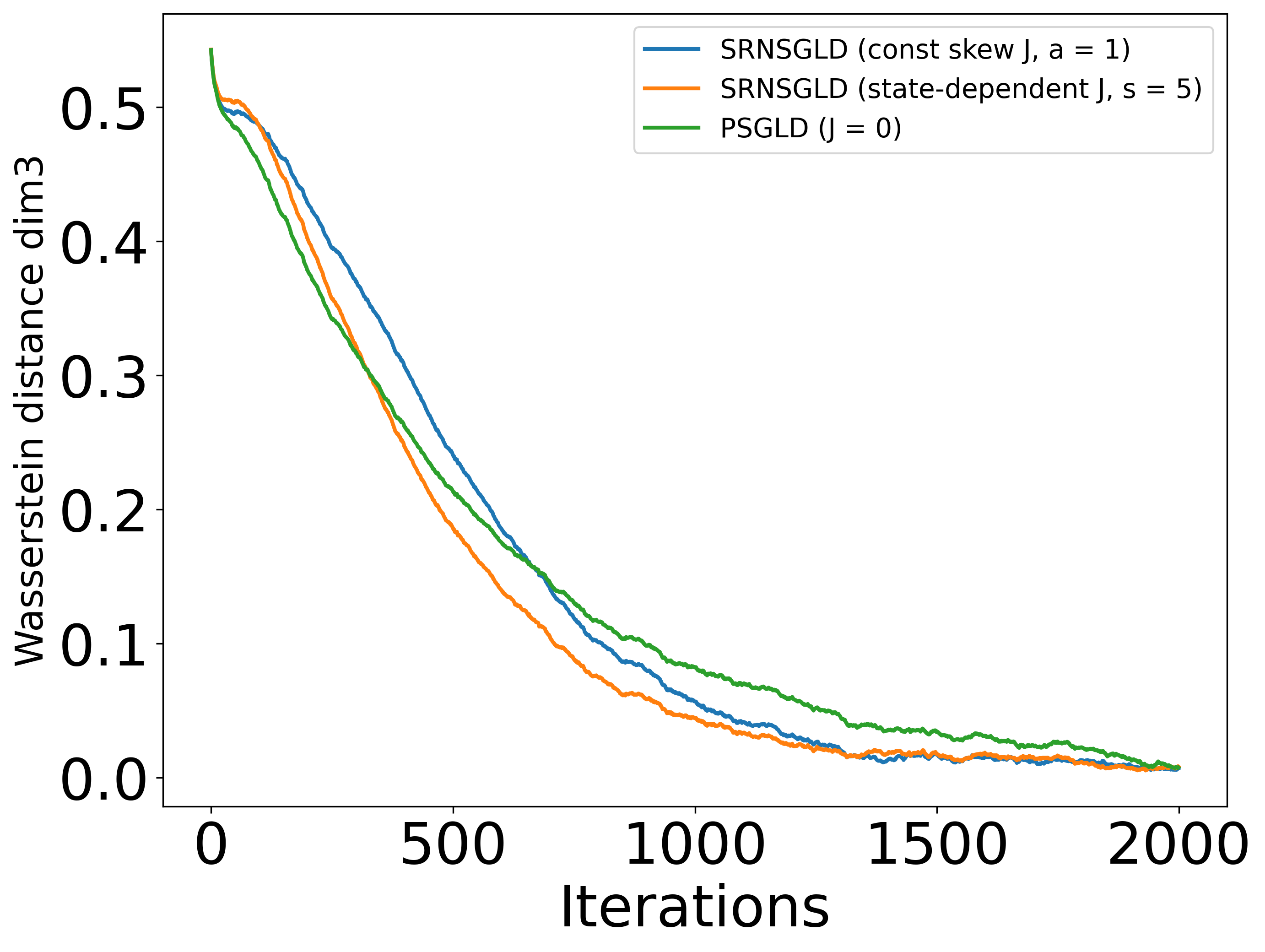}
        \caption{Dimension 3}
    \end{subfigure}
    \caption{$\mathcal{W}_1$ distance in each dimension of PLMC and SRNLMC with a centered ball constraint.}
    \label{1wasserball}
\end{figure}

Figure~\ref{1wasserball} and  Figure~\ref{1wassersub} demonstrates that both SRNLMC variants achieve superior iteration complexity compared to PLMC under fixed step sizes, corroborating the theoretical results in \cite{DFTWZ2025}. Notably, when employing the skew-symmetric matrix $J_s(x)$ (defined in~\eqref{eqn:sym:matrix}) and $J_g(x)$ (defined in~\eqref{eq:smooth:matrix}), both SRNLMC ($J_s(x)$) and   SRNLMC ($J_g(x)$) can attain a lower iteration complexity than SRNLMC ($J_a$) for equivalent  $\mathcal{W}_1$ distance targets.

\begin{figure}[htbp]
    \centering
    \begin{subfigure}[c]{0.3\textwidth}
        \centering
        \includegraphics[width=\linewidth]{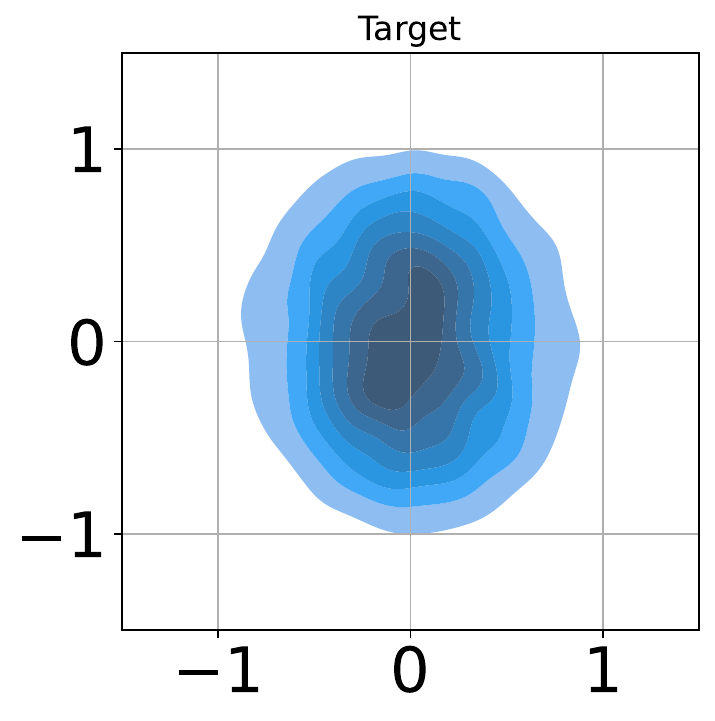}
        \caption{Target distribution}
    \end{subfigure}%
    \hspace{0.1\textwidth}
    \begin{subfigure}[c]{0.3\textwidth}
        \centering
        \includegraphics[width=\linewidth]{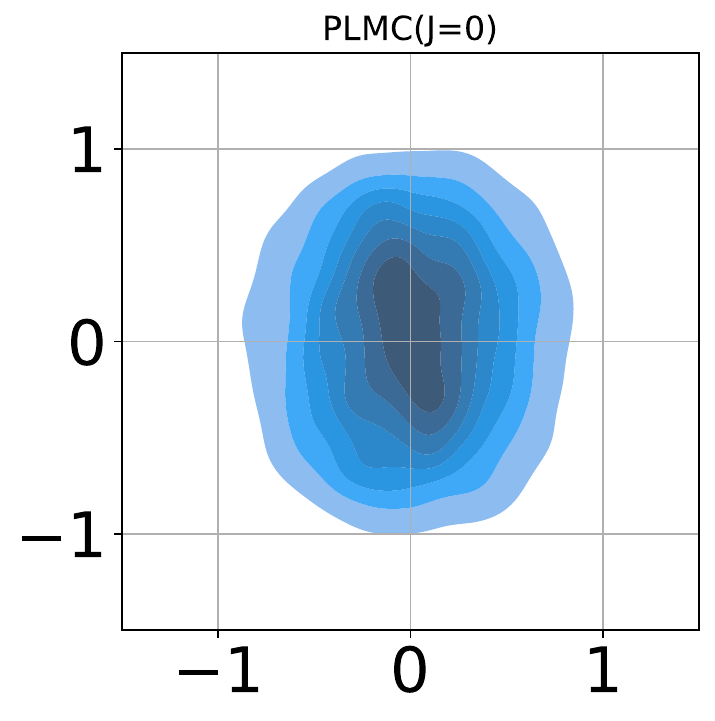}
        \caption{PLMC ($J = 0$)}
    \end{subfigure}
    
    \vspace{0.8cm}
    
    \begin{subfigure}[c]{0.3\textwidth}
        \centering
        \includegraphics[width=\linewidth]{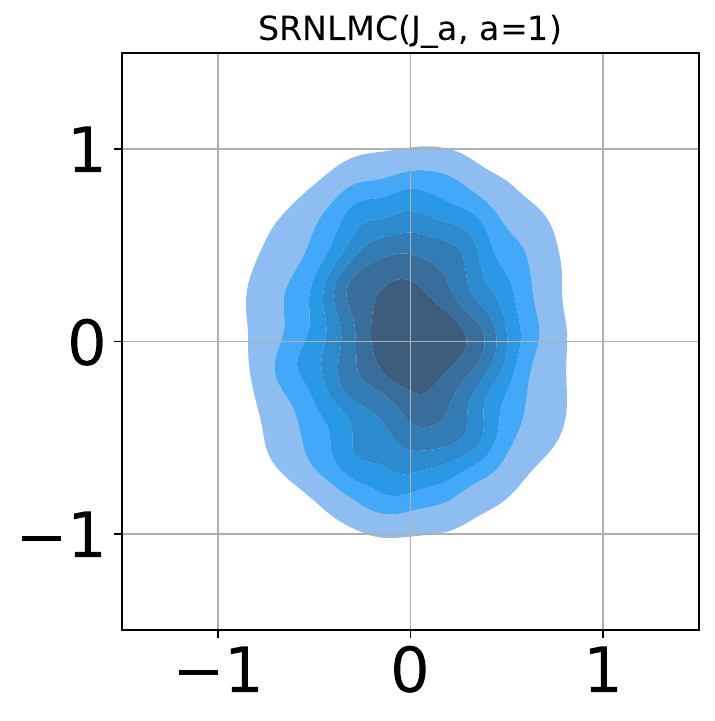}
\caption{SRNLMC ($J_{a = 1}$)}
    \end{subfigure}%
    \hspace{0.1\textwidth}    
    \begin{subfigure}[c]{0.3\textwidth}
        \centering
        \includegraphics[width=\linewidth]{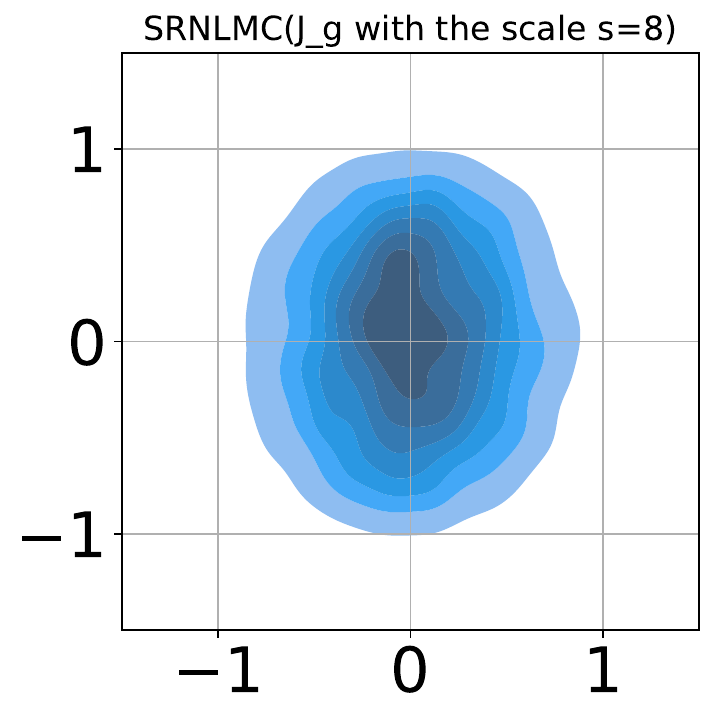}
        \caption{SRNLMC ($J_{s = 5}$)}
    \end{subfigure}
    
    \caption{Visualized density plots for the first 2 dimensions with a smoothed $\ell_p$ ball constraint with a sublevel-set.}
    \label{visualized_sub}
\end{figure}

\begin{figure}[htbp]
    \centering
    \begin{subfigure}{0.3\textwidth}
        \centering
        \includegraphics[width=\linewidth]{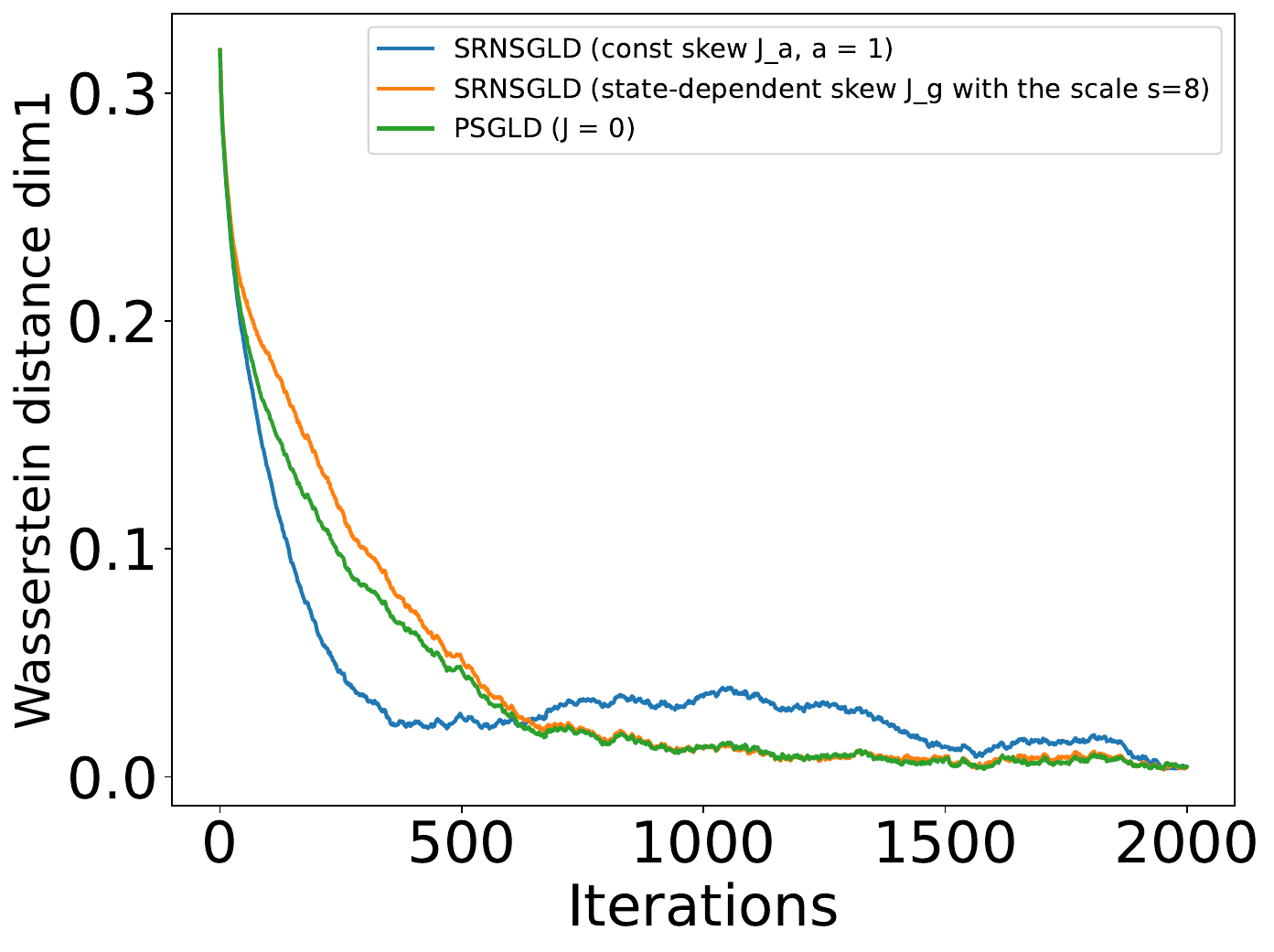}
        \caption{Dimension 1}
    \end{subfigure}%
    \hspace{0.5cm}
    \begin{subfigure}{0.3\textwidth}
        \centering
        \includegraphics[width=\linewidth]{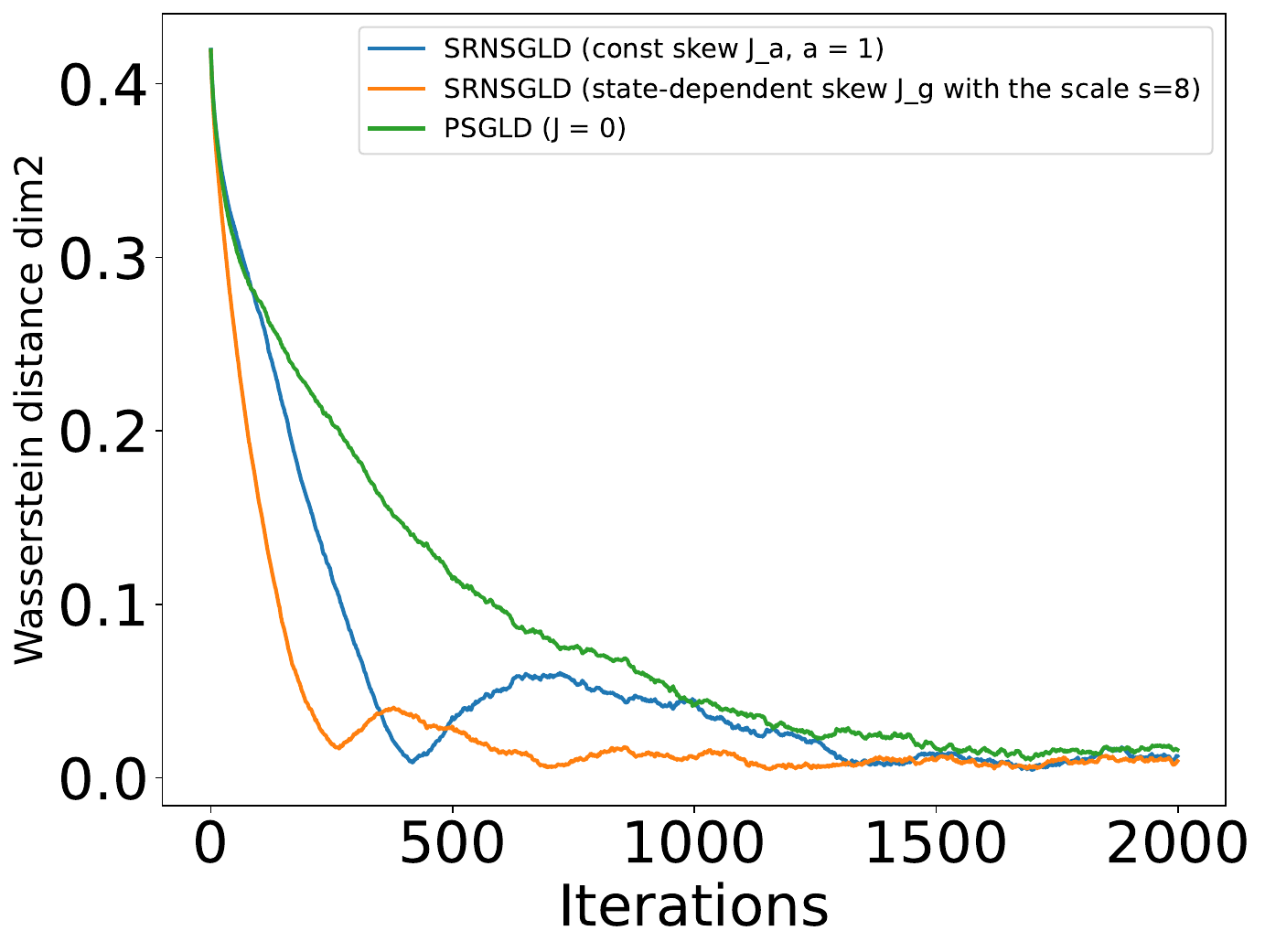}
        \caption{Dimension 2}
    \end{subfigure}%
    \hspace{0.5cm}
    \begin{subfigure}{0.3\textwidth}
        \centering
        \includegraphics[width=\linewidth]{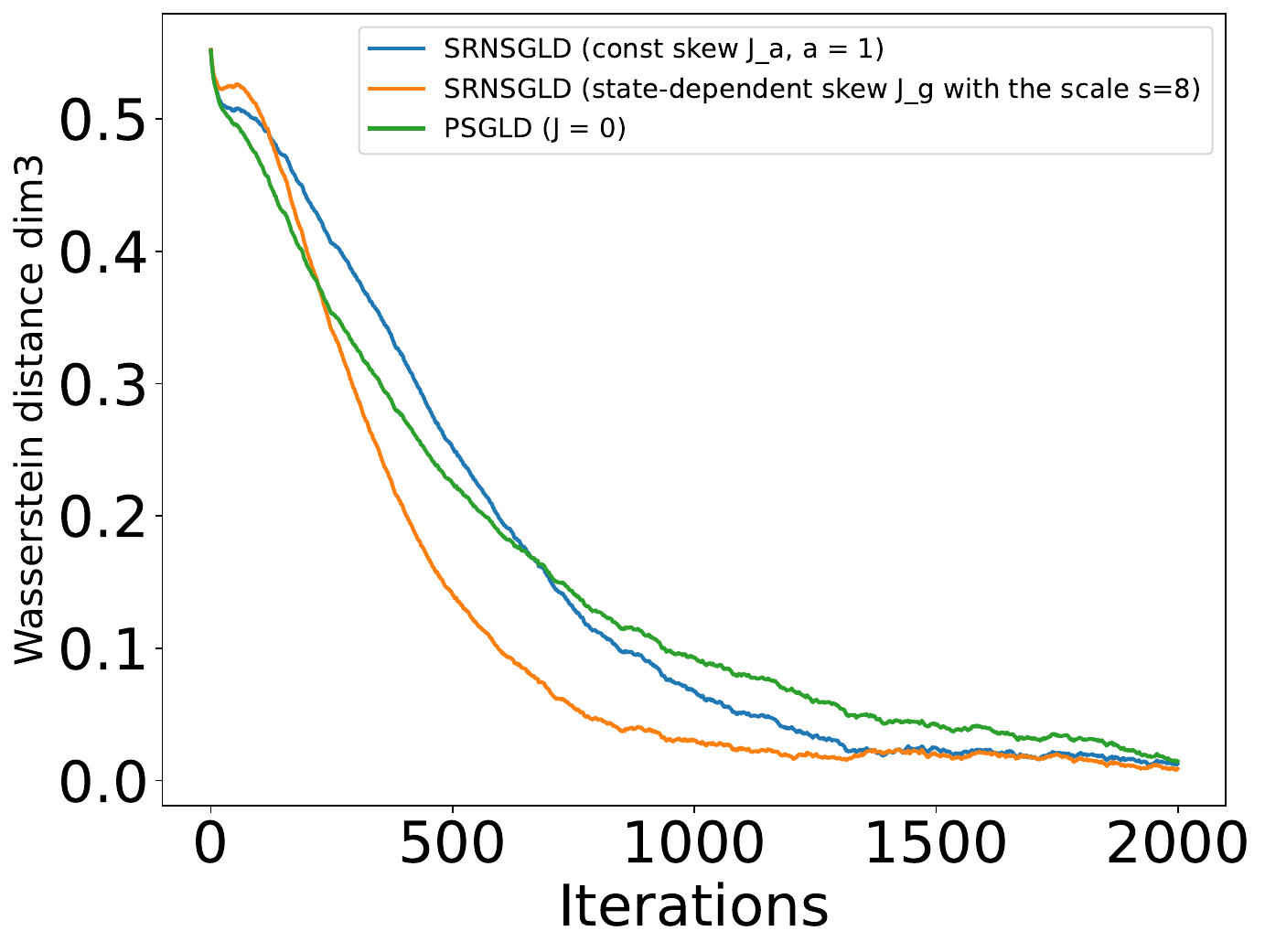}
        \caption{Dimension 3}
    \end{subfigure}
    \caption{$\mathcal{W}_1$ distance in each dimension of PLMC and SRNLMC with a smoothed $\ell_p$ ball constraint with a sublevel-set.}
    \label{1wassersub}
\end{figure}

This empirical finding validates our main result, confirming the theoretical advantage of SRNLMC over PLMC. The enhanced performance stems from the ability of the state-dependent skew-symmetric matrix to provide more effective acceleration in the constrained sampling scenarios compared to its skew-symmetric counterpart $J_a$. The state-dependent nature of SRNLMC ($J_s(x)$) and SRNLMC ($J_g(x)$) demonstrates robustness improvements over SRNLMC ($J_a$), where we can observe the orange line leads to better performance. Although constant matrices enforce fixed projection angles, it may fail for skew-projection of the sample points beyond the constraint boundary, the adaptive projection in SRNLMC ($J_s(x)$) and SRNLMC ($J_g(x)$) dynamically adjusts based on the systematical state. This state awareness prevents excessive projection angles to handle extreme sample points and gives room to apply larger step sizes while maintaining equivalent error tolerance levels.


\subsection{Constrained Bayesian Linear Regression}
\label{subsec:Bayesian:linear:example}

In our next set of experiments, we evaluate our algorithms in the context of constrained Bayesian linear regression models. We first consider the example with the ball  constraint set, a $3$-dimensional centered $\ell^2$ unit ball such that $K=\left\{x \in \mathbb{R}^3 :\|x\|_2^2 \leq 1\right\}$,
which corresponds to the ridge Bayesian linear regression. We consider the linear regression model as the following. 
\begin{equation}
\label{linear:setting}
\delta_j \sim \mathcal{N}(0,0.25), \quad a_j \sim \mathcal{N}\left(2 * \mathbf{1}, \frac{1}{2}I_3\right), \quad y_j=x_*^\top a_j+\delta_j, \quad x_*=[1,-0.7, -0.5]^\top, 
\end{equation}
where $\mathbf{1}$ denotes the all-one vector.
The prior distribution is a uniform distribution, in which case the constraints are satisfied. Our goal is to generate the posterior distribution given by
$$\mu(x) \propto \exp\left\{-\frac{1}{2}\sum_{j=1}^n\left(y_j-x^\top a_j\right)^2\right\}\mathbf{1}_{K},$$
where $\mathbf{1}_{K}$ is the indicator function for the constraint set $K$ and $n$ is the total number of data points in the training set. 

To further illustrate the efficiency of our algorithms, we will introduce a stochastic gradient setting, and we will show that our algorithms work well in the presence of stochastic gradients. \cite{DFTWZ2025} proposed \emph{skew-reflected non-reversible stochastic gradient Langevin dynamics} (SRNSGLD):
\begin{equation}
\label{eqn:SRNSGLD}
x_{k+1}=\mathcal{P}^J_{K}\left(x_{k}-\eta(I+J(x_{k}))\nabla f(x_{k},\Omega_{k+1})+\sqrt{2\eta}\xi_{k+1}\right),
\end{equation}
where $\mathcal{P}^J_{K}$ is the skew-projection onto $K$, $\xi_{k}$ are $i.i.d.$ Gaussian random vectors $\mathcal{N}(0,I_d)$ and $\nabla f(x_{k},\Omega_{k+1})$ is a conditionally unbiased estimator of $\nabla f(x_{k})$ over the dataset $\Omega_{k+1}$. For example, a common choice is the mini-batch setting, in which the full gradient is of the form 
$\nabla f(x)=\frac{1}{n}\sum_{i=1}^{n}\nabla f_{i}(x)$, where $n$ is the number of data points and $f_{i}(x)$ associates with the $i$-th data point, and the stochastic gradient is given by 
$\nabla f(x_{k},\Omega_{k+1}):=\frac{1}{m}\sum\nolimits_{i\in\Omega_{k+1}}\nabla f_{i}(x_{k})$
with $m < n$ being the batch-size and $\Omega_{k+1}^{(j)}$ are uniformly sampled random subsets of $\{1,2,\ldots,n\}$ with $|\Omega_{k+1}|=m$ and i.i.d. over $k$. In particular, if $J \equiv 0$, we obtain \emph{projected stochastic gradient Langevin dynamics}~(PSGLD). 

Under the stochastic gradient setting, we choose the batch size $m = 50$ and generate $n=10^5$ data points $\left(a_j, y_j\right)$ by SRNSGLD, where we take the skew-symmetric matrix $J_a$ with $a=1$ defined by~\eqref{eqn:skew:matrix} and skew-symmetric matrix $J_s(x)$ with $s=5$ defined by~\eqref{eqn:sym:matrix}. By  setting the red starred point as the first $2$ dimensions of $x_*$ defined in~\eqref{linear:setting} and taking $300$ iterates with the step size $\eta = 10^{-4}$, we get Figure~\ref{linear} and Figure~\ref{mse} as the following.
 
\begin{figure}[htbp]
    \centering
    \begin{subfigure}[c]{0.3\textwidth}
        \centering
        \includegraphics[width=\linewidth]{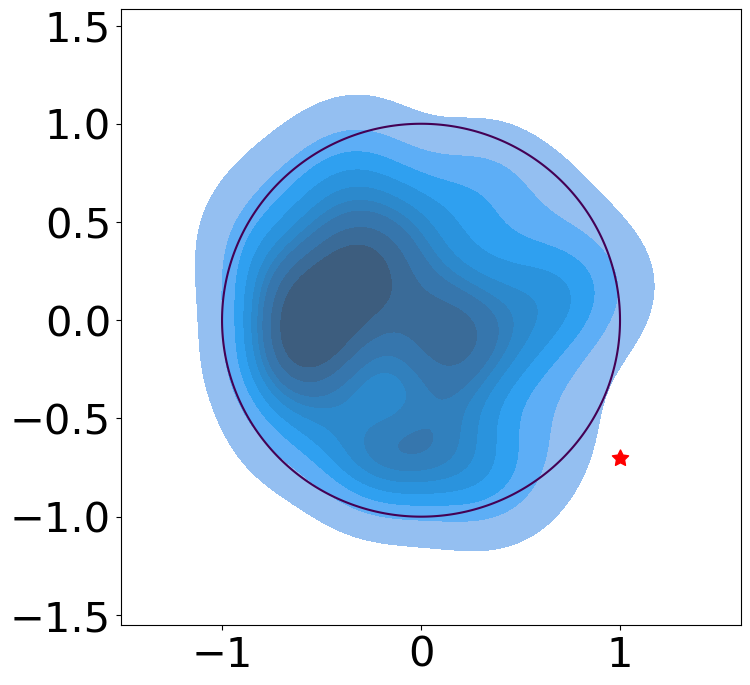}
        \caption{Prior}
    \end{subfigure}%
    \hspace{0.1\textwidth}
    \begin{subfigure}[c]{0.3\textwidth}
        \centering
        \includegraphics[width=\linewidth]{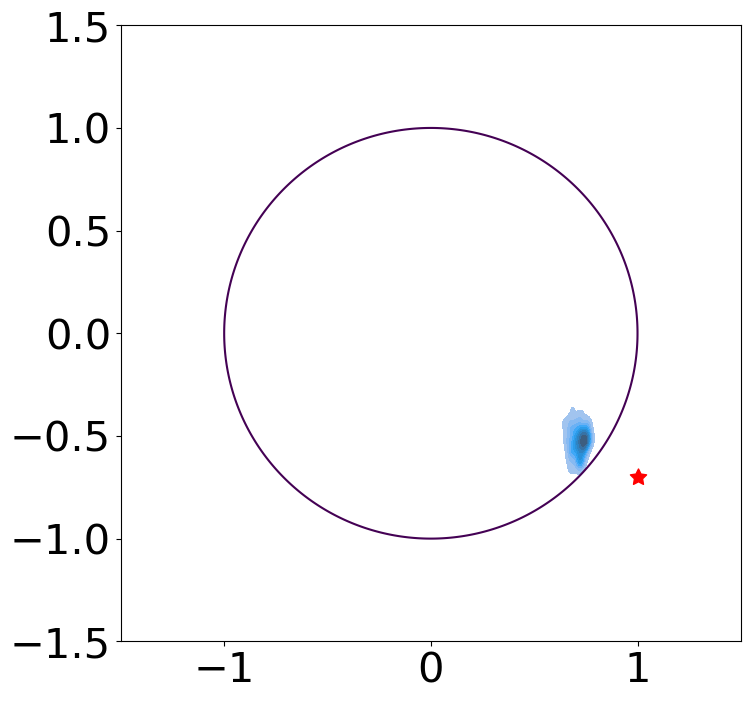}
        \caption{SRNSGLD ($J_{a = 1}$)}
    \end{subfigure}
    
    \vspace{0.4cm}
    
    \begin{subfigure}[c]{0.3\textwidth}
        \centering
        \includegraphics[width=\linewidth]{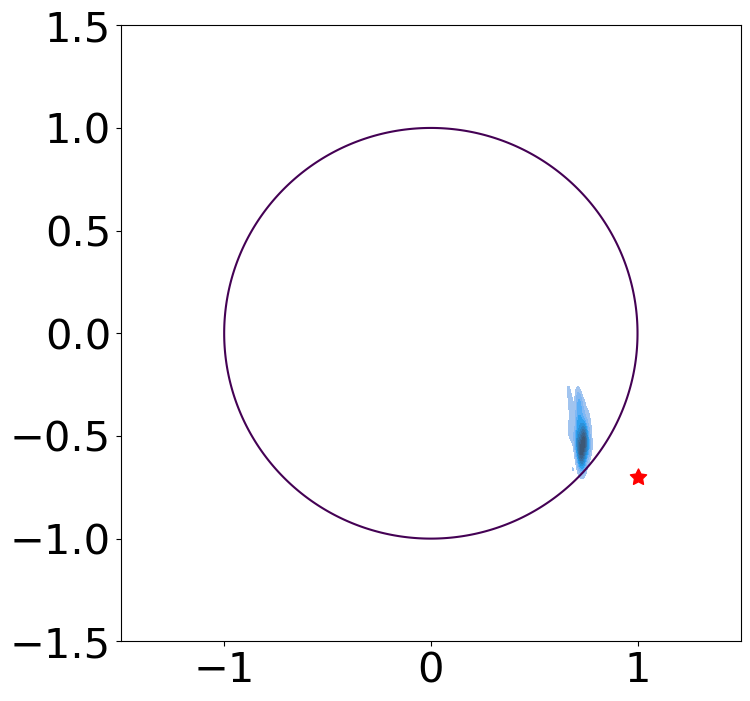}
        \caption{SRNSGLD ($J_{s = 5}$)}
    \end{subfigure}%
    \hspace{0.1\textwidth}    
    \begin{subfigure}[c]{0.3\textwidth}
        \centering
        \includegraphics[width=\linewidth]{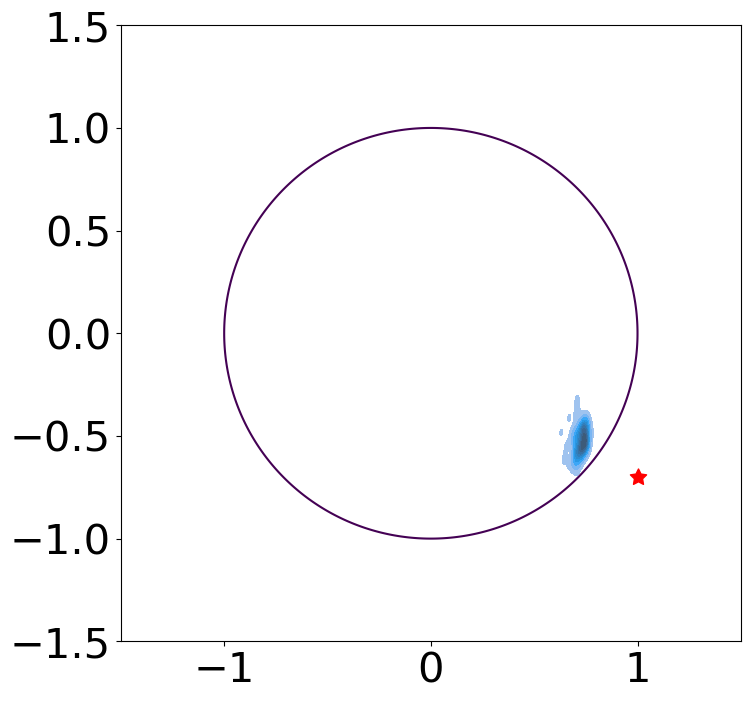}
        \caption{PSGLD ($J = 0$)}
    \end{subfigure}
    
   \caption{Prior and posterior distributions plots with a centered ball constraint.}
    \label{linear}
\end{figure}

\begin{figure}[htbp]
        \centering
        \includegraphics[width=0.5\textwidth]{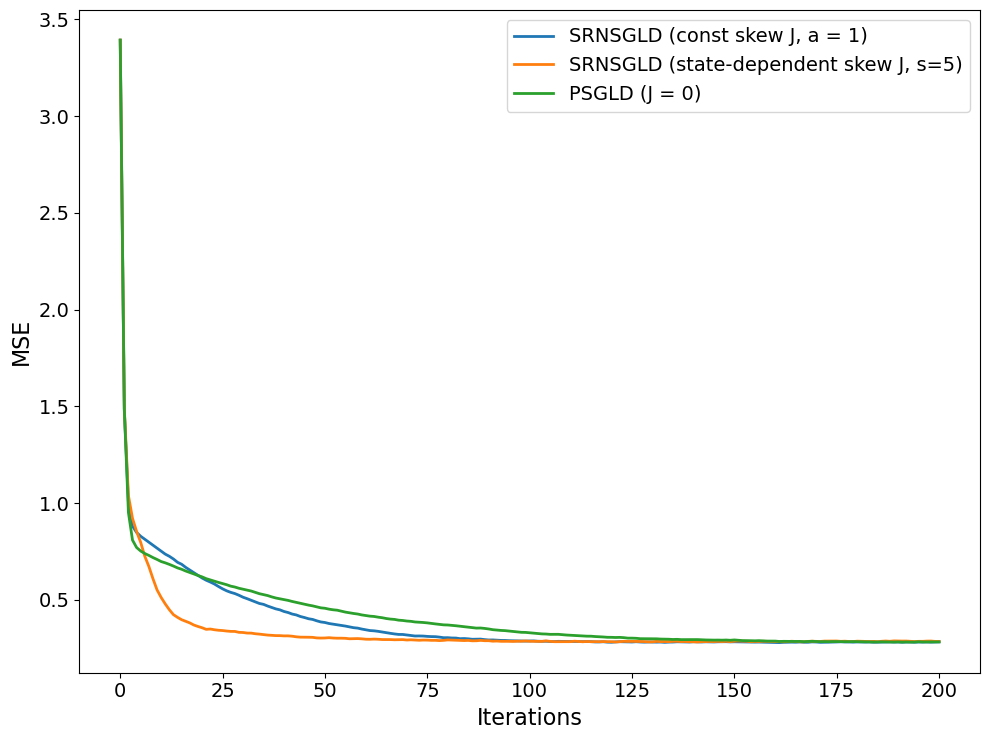}
        \caption{MSE results of SRNSGLD ($J_{a = 1}$), SRNSGLD ($J_{s = 5}$) and PSGLD ($J = 0$) for the constrained Bayesian linear regression.}
        \label{mse}
\end{figure}

Under the same setting, we consider this constrained Bayesian linear regression problem by using a smoothed $\ell_p$ ball constraint with a sublevel-set defined by $K = \{x \in \mathbb{R}^3: g(x) \leq \lambda\}$ with $g(x):= \sum_{i=1}^d(x_i^2 + \varepsilon^2)^{p/2}$
from the definition of ~\eqref{eq:smoothlp_Rd:f}, where we take $p = 4$, $\varepsilon = 0.2$, $\lambda = 1.0$ and the scaling parameter $s = 8$ for $J_g(x)$ defined in~\eqref{eq:smooth:matrix:scale}. We summarize the results in Figure~\ref{linear2} and Figure~\ref{mse2}.

\begin{figure}[htbp]
    \centering
    \begin{subfigure}[c]{0.3\textwidth}
        \centering
        \includegraphics[width=\linewidth]{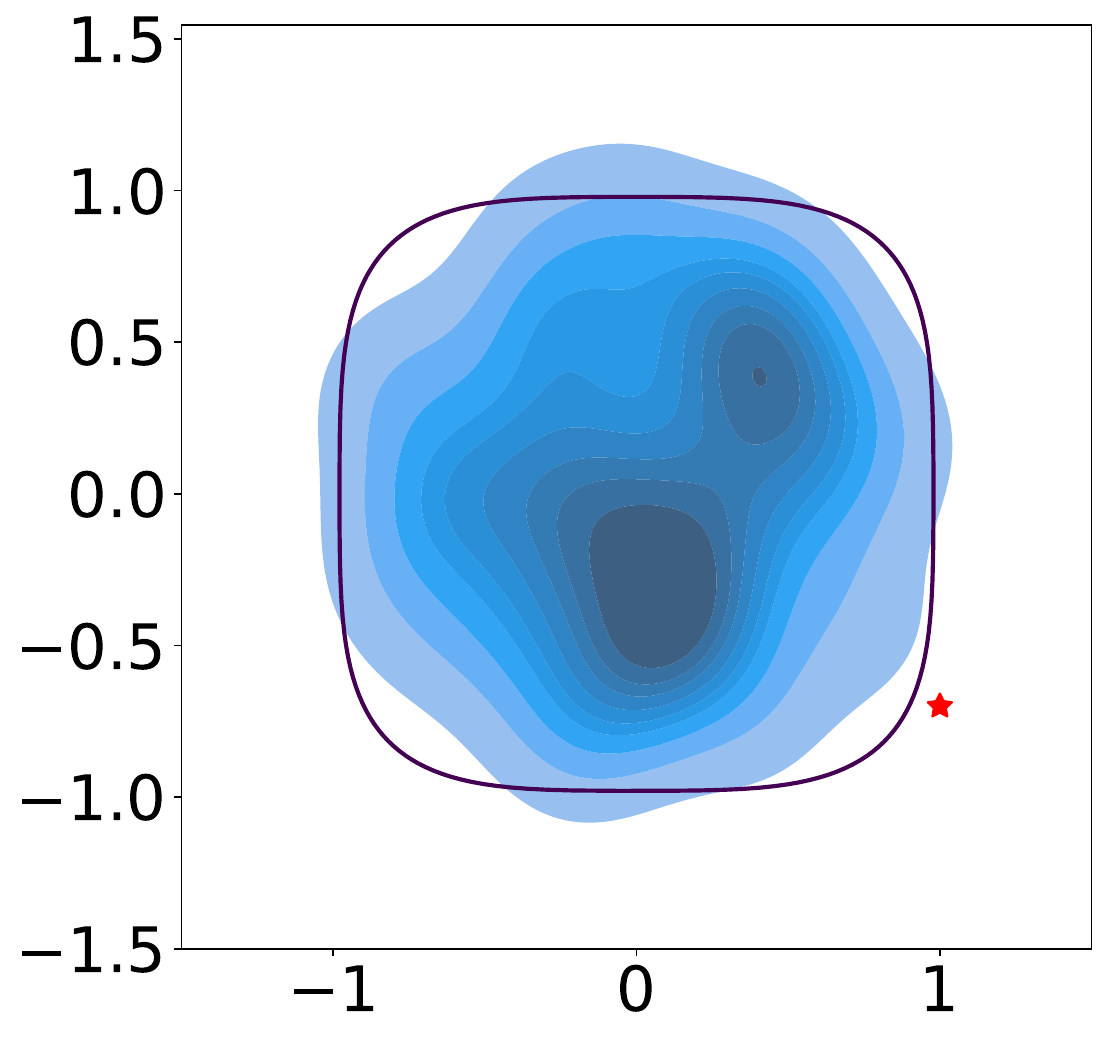}
        \caption{Prior}
    \end{subfigure}%
    \hspace{0.1\textwidth}
    \begin{subfigure}[c]{0.3\textwidth}
        \centering
        \includegraphics[width=\linewidth]{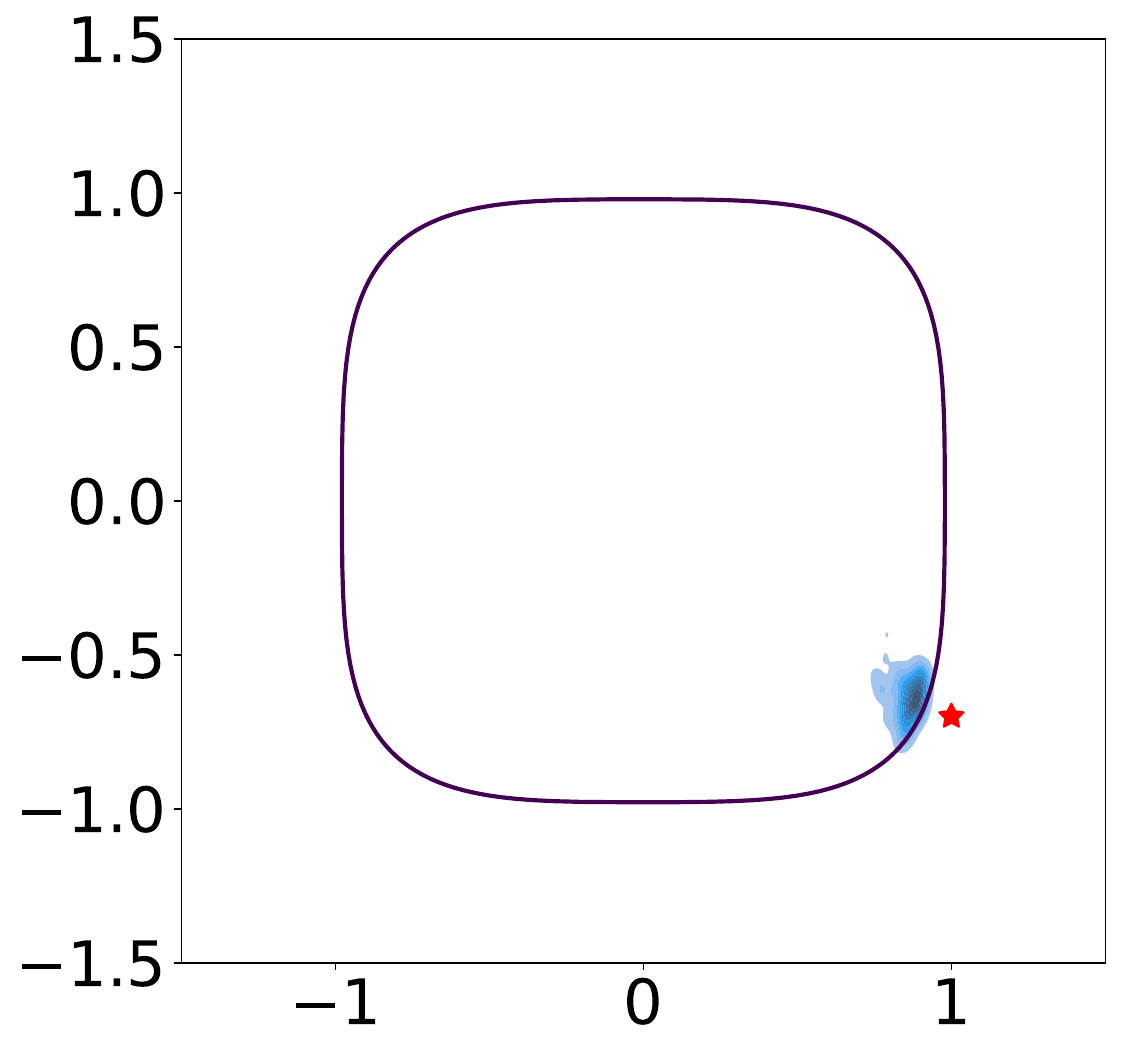}
        \caption{SRNSGLD ($J_{a = 1}$)}
    \end{subfigure}
    
    \vspace{0.4cm}
    
    \begin{subfigure}[c]{0.3\textwidth}
        \centering
        \includegraphics[width=\linewidth]{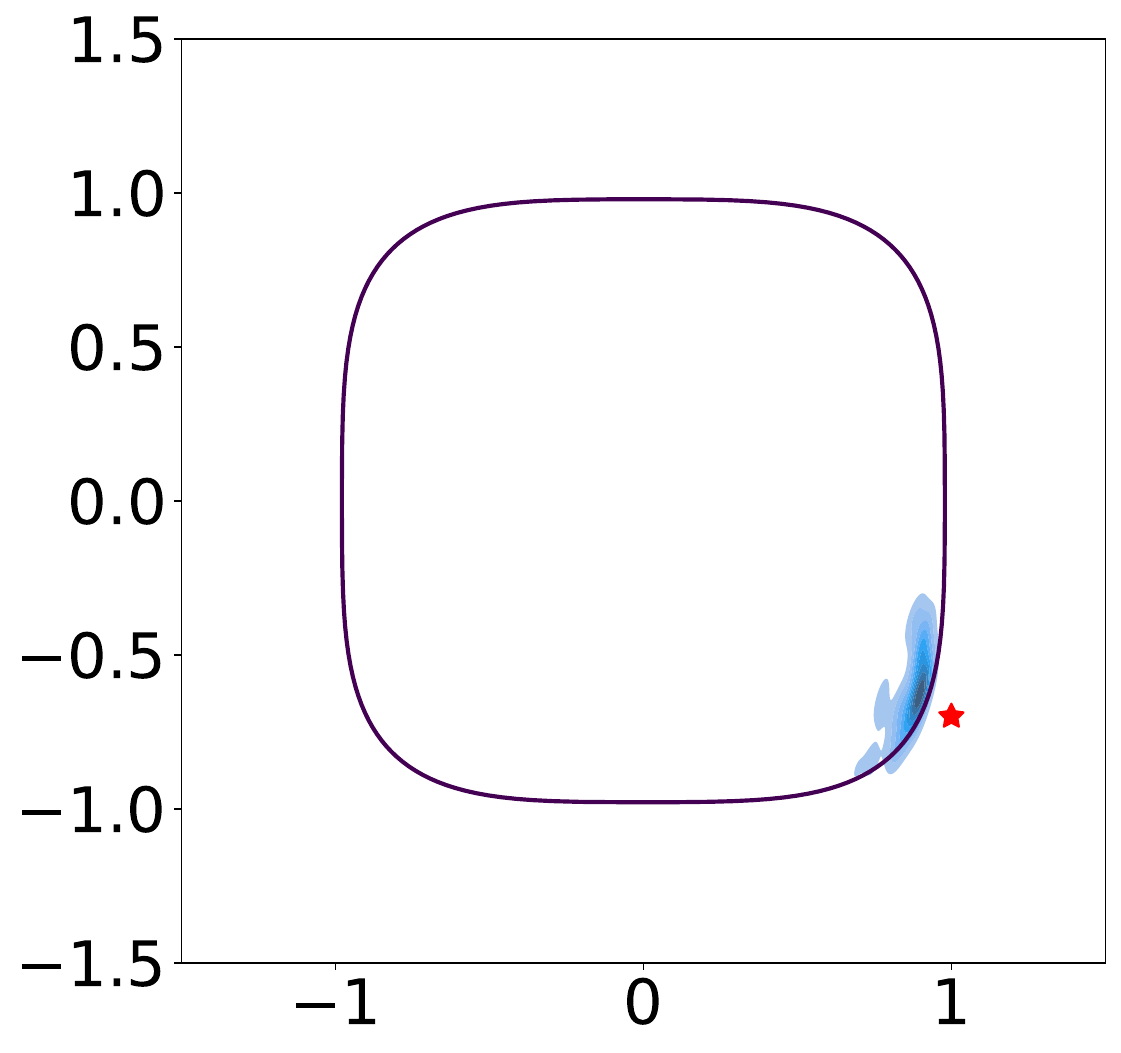}
        \caption{SRNSGLD ($J_{g}$)}
    \end{subfigure}%
    \hspace{0.1\textwidth}    
    \begin{subfigure}[c]{0.3\textwidth}
        \centering
        \includegraphics[width=\linewidth]{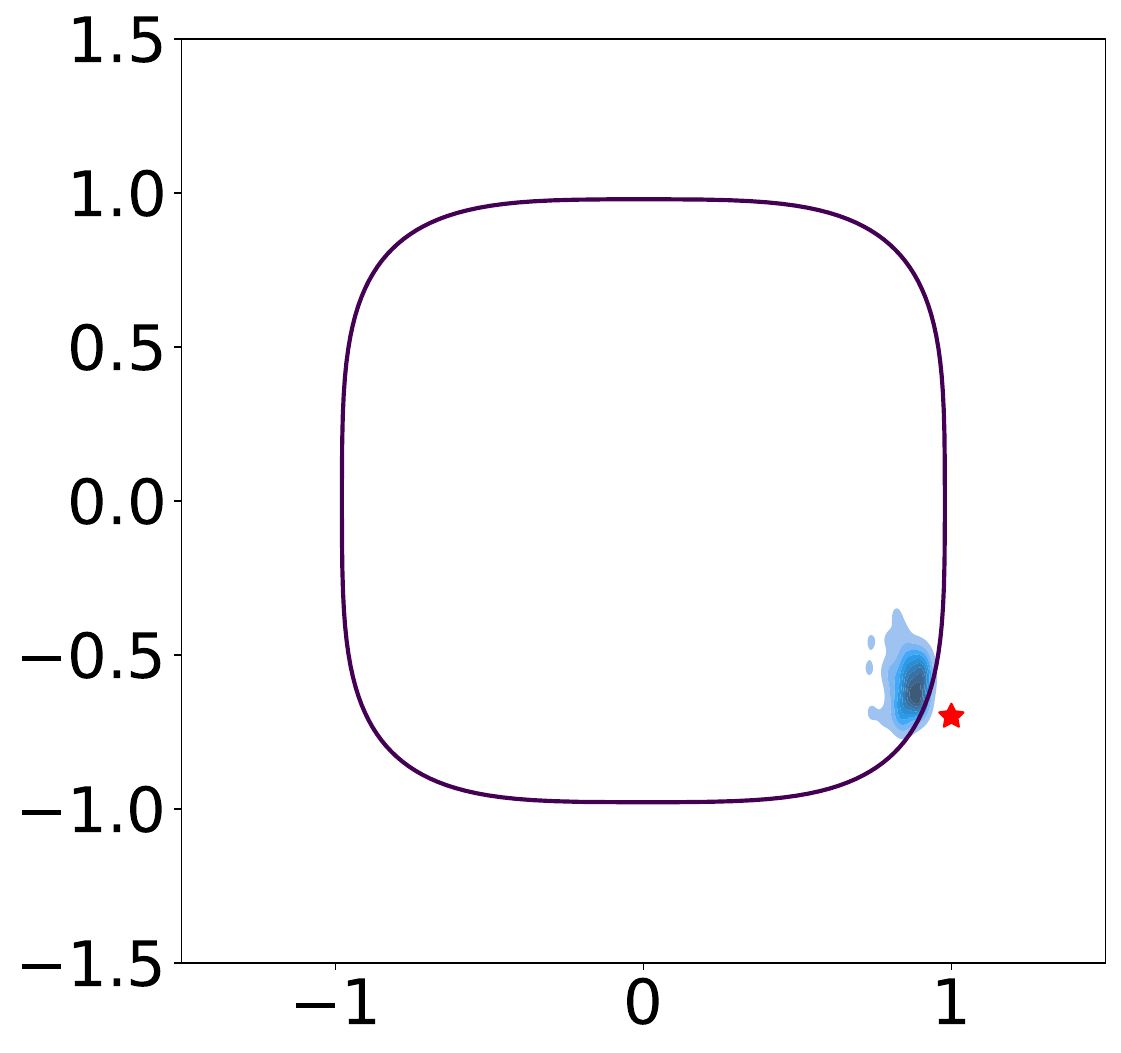}
        \caption{PSGLD ($J = 0$)}
    \end{subfigure}
    
   \caption{Prior and posterior distributions plots with a smoothed $\ell_p$ ball constraint with a sublevel-set.}
    \label{linear2}
\end{figure}

\begin{figure}[htbp]
        \centering
        \includegraphics[width=0.5\textwidth]{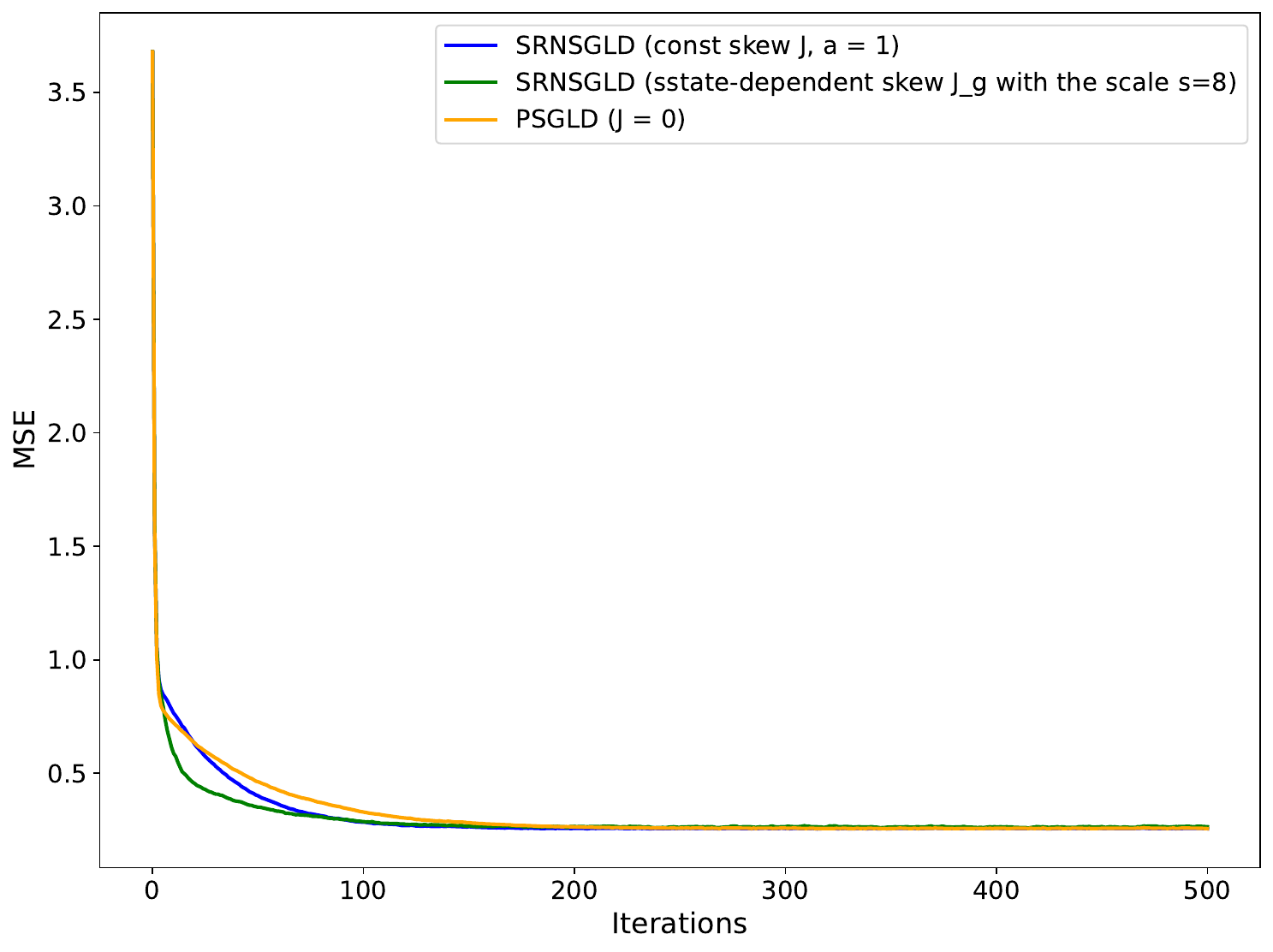}
        \caption{MSE results of SRNSGLD ($J_{a = 1}$), SRNSGLD ($J_{g}$) and PSGLD ($J = 0$) for the constrained Bayesian linear regression with a smoothed $\ell_p$ ball constraint with a sublevel-set.}
        \label{mse2}
\end{figure}

 In Figure~\ref{linear} and Figure~\ref{linear2}, the prior uniform distribution is shown in panel (a). In panels (b), (c), (d), our simulation shows that SRNSGLD ($J_a$),  SRNSGLD ($J_s(x)$ in Figure~\ref{linear}), SRNSGLD ($J_g(x)$ in Figure~\ref{linear2}) and PSGLD can converge to the samples whose distribution concentrates around the closest position to the target value, which are the red stars shown outside the constraints. 
By computing the MSE at the $k$-th run using the formula
$$\operatorname{MSE}_k := \frac{1}{n} \sum_{j=1}^n \left( y_j - \left( x_k \right)^\top a_j \right)^2,$$
we get  Figure~\ref{mse} and Figure~\ref{mse2}. The blue line represents the mean squared error (MSE) of SRNSGLD ($J_a$), the orange line corresponds to  SRNSGLD ($J_s(x)$) in Figure~\ref{mse} and SRNSGLD ($J_g(x)$) in Figure~\ref{mse2}, and the green line represents PSGLD. Although we do not provide theoretical guarantees for acceleration with respect to MSE, our numerical results show that  either SRNSGLD ($J_s(x)$) or SRNSGLD ($J_g(x)$) achieves significantly better acceleration performance than either SRNSGLD ($J_a$) or PSGLD.

\subsection{Constrained Bayesian Logistic Regression}
\label{subsec:Bayesian:log:example}

To test the performance of our algorithm in binary classification problems, we implement the constrained Bayesian linear regression models on both synthetic and real data, where the constraint set is the ball of radius $r$ centered at $0\in\mathbb{R}^{d}$ given by
\begin{equation}
\label{eqn:radius}
K_r=\left\{x \in \mathbb{R}^d :\|x\|_2^2 \leq r\right\}.
\end{equation}

Suppose we can access a dataset $Z=\left\{z_j\right\}_{j=1}^{n}$ where $z_j=\left(X_j, y_j\right), X_j \in \mathbb{R}^d$ are the features and $y_j \in\{0,1\}$ are the labels with the assumption that $X_j$ are independent and the probability distribution of $y_j$ given $X_j$ and the regression coefficients $\beta \in \mathbb{R}^d$ are given by
$$\mathbb{P}\left(y_j=1 \mid X_j, \beta\right)=\frac{1}{1+e^{-\beta^\top X_j}}.$$
In our experiments, we choose the prior distribution of $\beta$ to be the uniform distribution in the ball constraint. Then the goal of the constrained Bayesian logistic regression is to sample from $\mu(\beta) \propto e^{-f(\beta)}\mathbf{1}_K$ where:
\begin{equation}
f(\beta):=-\sum_{j=1}^{n} \log p\left(y_j \mid X_j, \beta\right)-\log p(\beta)=\sum_{j=1}^{n} \log \left(1+e^{-\beta^\top X_j}\right)+\log V_d,
\end{equation}
and $V_d:=\frac{\pi^{d/2}}{\Gamma(\frac{d}{2}+1)}$ is the volume of the unit ball in $\mathbb{R}^d$.

 We will conduct two sets of experiments by using either synthetic data or real data, where one considers the constrained set to be a centered ball defined in~\eqref{eqn:radius}, and another considers the constrained set to be the smoothed $\ell_p$ ball constraint with a sublevel-set defined by
\begin{equation}
 K = \{x \in \mathbb{R}^3: g(x) \leq \lambda\}, \qquad g(x):= \sum_{i=1}^3(x_i^2 + \varepsilon^2)^{p/2}.
 \label{eq:sublevel}
\end{equation}
from the definition of \eqref{eq:smoothlp_Rd:f}. We will specify the parameters for this contraint set later.

\paragraph{Synthetic Data}

Consider the following example of $d = 3$. We first generate $n = 2000$ synthetic data by the following model
\begin{equation}
X_j \sim \mathcal{N}\left(0,2 I_3\right), \quad p_j \sim \mathcal{U}(0,1), \quad y_j= \begin{cases}1 & \text { if } p_j \leq \frac{1}{1+e^{-\beta^\top X_j}} \\ 0 & \text { otherwise }\end{cases},
\end{equation}
where $\mathcal{U}(0,1)$ stands for the uniform distribution on $[0,1]$ and the prior distribution of $ \beta=\left[\beta_1, \beta_2, \beta_3\right]^\top \in \mathbb{R}^3$ is a uniform distribution on the centered ball $K_{r=1}$ in $\mathbb{R}^3$ defined in~\eqref{eqn:radius}. Then we implement them with $1000$ iterates by choosing the step size $\eta=10^{-4}$ with the batch size $m=50$. For the constrained Bayesian logistic regression, it is not practical for us to compute the $\mathcal{W}_1$ distance between the approximated Gibbs distribution and the empirical distribution. Hence, for such a binary classification problem, we can use the accuracy over the training set and test set to measure the goodness of the convergence performance. 

 In the experiment with the ball constraint set in~\eqref{eqn:radius} with $r = 1$, we choose the skew-symmetric matrix $J_a$ in~\eqref{eqn:skew:matrix} with $a = 1$ for SRNSGLD ($J_a$) shown by the blue line and the skew-symmetric matrix $J_s(x)$ in~\eqref{eqn:sym:matrix} with $s = 10$ for SRNSGLD ($J_s(x)$) shown by the green dash line. We use $20\%$ of the whole dataset as the test set. The mean and standard deviation of the accuracy distribution are shown in Figure~\ref{logsyth}.
\begin{figure}[htbp]
    \centering
    \begin{subfigure}{0.45\textwidth}
        \centering
        \includegraphics[width=\linewidth]{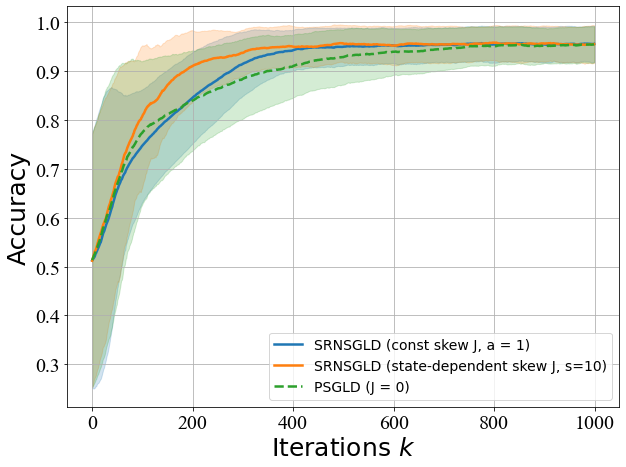}
        \caption{Accuracy over the training set with a centered ball constraint.}
    \end{subfigure}%
    \hspace{0.8cm}
    \begin{subfigure}{0.45\textwidth}
        \centering
        \includegraphics[width=\linewidth]{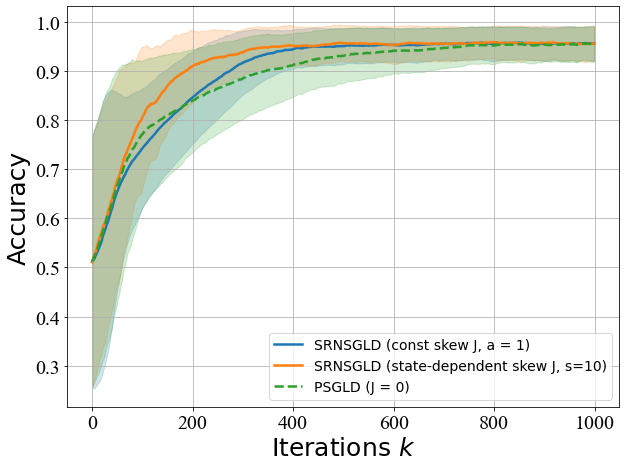}
        \caption{Accuracy over the test set with a centered ball constraint.}
    \end{subfigure}%
    \caption{Accuracy over the training set and the test set for the synthetic data. The blue part (solid line) denotes the mean and standard deviation of SRNSGLD ($J_a$), the green part (dash-dot line) denotes the mean and standard deviation of SRNSGLD ($J_s(x)$) and the orange part (dashed line) denotes the mean and standard deviation of PSGLD ($J\equiv 0$).}
    \label{logsyth}    
\end{figure}

Under the same setting, we consider the smoothed $\ell_p$ ball constraint with a sublevel-set in~\eqref{eq:sublevel} with $p = 4$ in this synthetic data example, $\varepsilon = 0.2$, $\lambda = 1.0$ and the scaling parameter $s = 10$ for $J_g(x)$ defined in~\eqref{eq:smooth:matrix:scale}.
We obtain Figure~\ref{logsyth2}.

\begin{figure}[htbp]
    \centering
    \begin{subfigure}{0.45\textwidth}
        \centering
        \includegraphics[width=\linewidth]{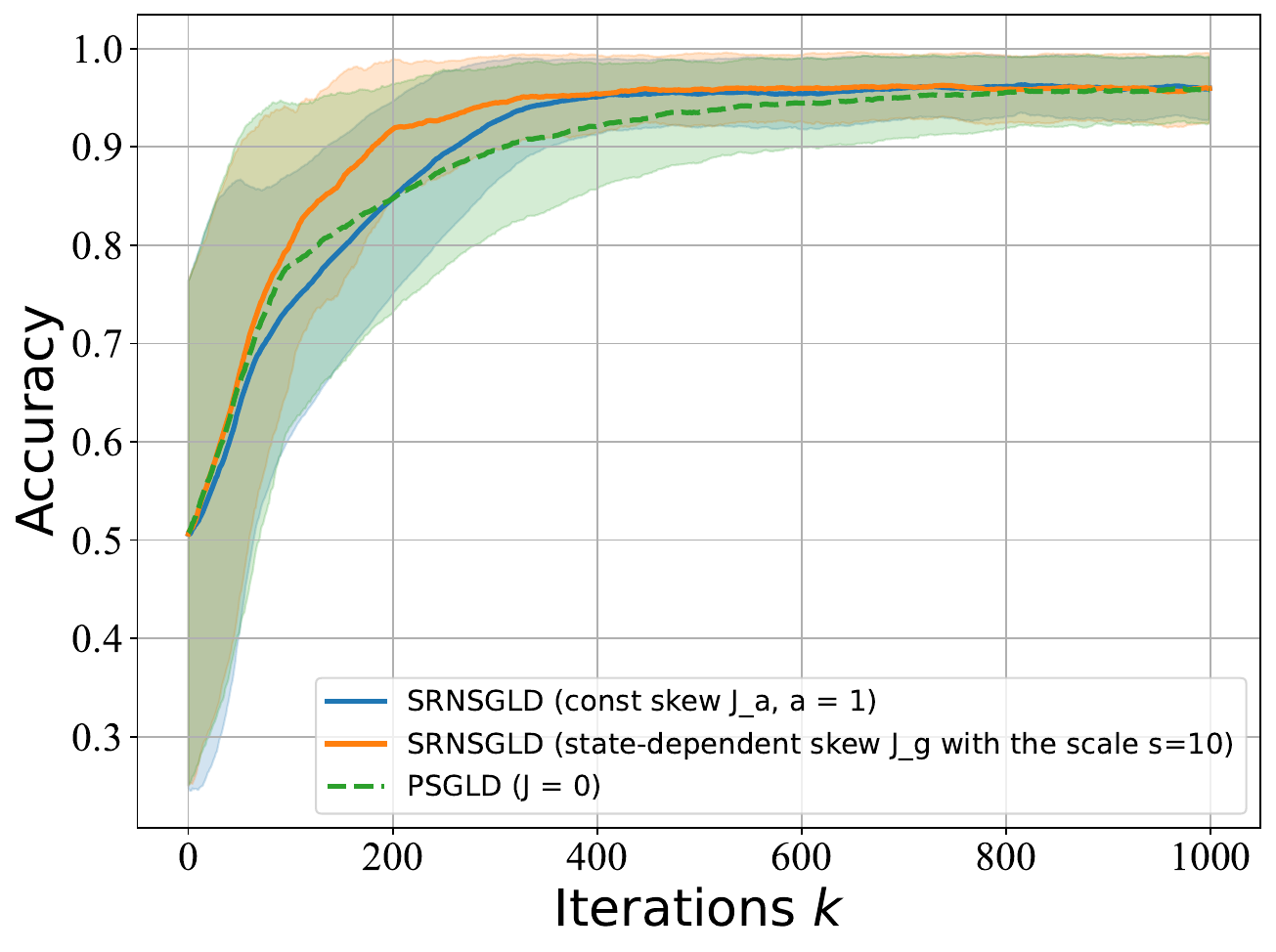}
        \caption{Accuracy over the training set with a smoothed $\ell_p$ ball constraint with a sublevel-set.}
    \end{subfigure}%
    \hspace{0.8cm}
    \begin{subfigure}{0.45\textwidth}
        \centering
        \includegraphics[width=\linewidth]{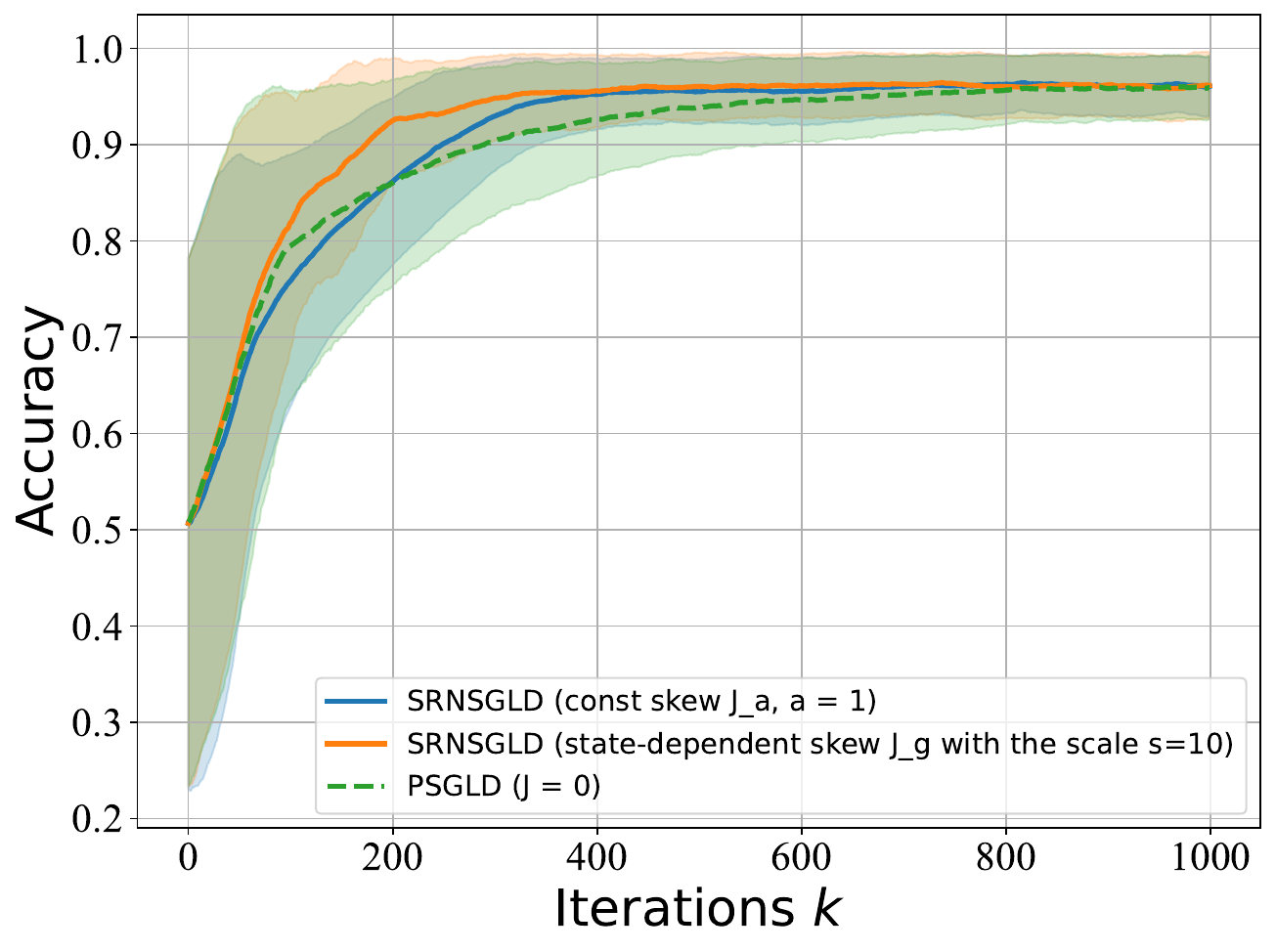}
        \caption{Accuracy over the test set with a smoothed $\ell_p$ ball constraint with a sublevel-set.}
    \end{subfigure}%
    \caption{Accuracy over the training set and the test set for the synthetic data. The blue part (solid line) denotes the mean and standard deviation of SRNSGLD ($J_a$), the green part (dash-dot line) denotes the mean and standard deviation of SRNSGLD ($J_g(x)$) and the orange part (dashed line) denotes the mean and standard deviation of PSGLD ($J\equiv 0$).}
    \label{logsyth2}    
\end{figure}
 We observe that, both SRNSGLD ($J_s(x)$) in green line in Figure~\ref{logsyth} and SRNSGLD ($J_g(x)$) in orange line in Figure~\ref{logsyth2} requires fewer iterations to achieve the same level of accuracy on the test set and exhibits more stable sampling behavior comparing to PSGLD in green dash line. Moreover, both SRNSGLD ($J_s(x)$) in green line in Figure~\ref{logsyth} and SRNSGLD ($J_g(x)$) in orange line in Figure~\ref{logsyth2} demonstrate suprior performance to SRNSGLD ($J_a$) with constant skew-symmetric matrix and PSGLD ($J = 0$).

\paragraph{Real Data}

In this section, we consider the constrained Bayesian logistic regression problem on the MAGIC Gamma Telescope dataset\footnote{The Telescope dataset is publicly available from:\\ \href{https://archive.ics.uci.edu/ml/datasets/magic+gamma+telescope}{\texttt{https://archive.ics.uci.edu/ml/datasets/magic+gamma+telescope.}} } and the Titanic dataset\footnote{The Titanic dataset is publicly available from:\\ \href{https://www.kaggle.com/c/titanic}{\texttt{https://www.kaggle.com/c/titanic}.}}. The Telescope dataset contains $n=19020$ samples with dimension $d=9$, describing the registration of high energy gamma particles in a ground-based atmospheric Cherenkov gamma telescope using the imaging technique. The Titanic dataset contains $891$ samples with $10$ features representing information about the passengers. And the goal is to predict whether a passenger survived or not based on these features. However, some of the features are irrelevant to our goal, such as the ID number. Therefore, after a pre-proceeding on the raw dataset, we obtained a dataset containing $n=891$ labeled samples with $d=9$ features. 

For both real data sets, we initialize SRNSGLD and PSGLD with the uniform distribution on the centered unit ball in the respective dimensions. To fit the dimension, we need to adjust the constant skew-symmetric matrix $J_a$ to fit the $9$-dimensional setting with the whole superdiagonal set to be $a$ and the subdiagonal set to be $-a$. We let the state-dependent skew-symmetric matrix $J_s(x)$ be a $9 \times 9$ skew-symmetric matrix defined for $x \in K_r \subset \mathbb{R}^9$. We construct $J_s(x)$ as a block-diagonal matrix:
\begin{align*}
J_s(x)=\left(\begin{array}{ccc}
J^{(1)}\left(x_1, x_2, x_3\right) & \mathbf{0}_{3 \times 3} & \mathbf{0}_{3 \times 3} \\
\mathbf{0}_{3 \times 3} & J^{(2)}\left(x_4, x_5, x_6\right) & \mathbf{0}_{3 \times 3} \\
\mathbf{0}_{3 \times 3} & \mathbf{0}_{3 \times 3} & J^{(3)}\left(x_7, x_8, x_9\right)
\end{array}\right),
\end{align*}
where $\mathbf{0}_{m \times n}$ denotes an $m \times n$ zero matrix and we define the blocks by
\begin{align*}
&J^{(1)}\left(x_1, x_2, x_3\right)=\left(\begin{array}{ccc}
0 & -s_1 x_3 & s_1 x_2 \\
s_1 x_3 & 0 & -s_1 x_1 \\
-s_1 x_2 & s_1 x_1 & 0
\end{array}\right),
\\
& J^{(2)}\left(x_4, x_5, x_6\right)=\left(\begin{array}{ccc}
0 & -s_2 x_6 & s_2 x_5 \\
s_2 x_6 & 0 & -s_2 x_4 \\
-s_2 x_5 & s_2 x_4 & 0
\end{array}\right),
\\
&J^{(3)}\left(x_7, x_8, x_9\right)=\left(\begin{array}{ccc}
0 & -s_3 x_9 & s_3 x_8 \\
s_3 x_9 & 0 & -s_3 x_7 \\
-s_3 x_8 & s_3 x_7 & 0
\end{array}\right).
\end{align*}
The $3$-dimensional constant $s = [s_1, s_2,s_3]$ can be chosen independently (e.g., $s_1, s_2, s_3 \neq 0$).

 Based on the formula of the smoothed $\ell_p$ ball constraint with a sublevel-set in~\eqref{eq:smooth:matrix}, we can also construct the skew-symmetric matrix $J_g(x)$ as a block-diagonal matrix. Moreover, we incorporate a scaling factor $s = [s_1,s_2,s_3] \in \mathbb{R}^3$ into $J_{g}(x)$ that allows fine-tuning such that
\begin{align}
\label{eq:block:matrix:scale}
J_g(x)=\left(\begin{array}{ccc}
s_1 J_g^{(1)}\left(x_1, x_2, x_3\right) & \mathbf{0}_{3 \times 3} & \mathbf{0}_{3 \times 3} \\
\mathbf{0}_{3 \times 3} & s_2 J_g^{(2)}\left(x_4, x_5, x_6\right) & \mathbf{0}_{3 \times 3} \\
\mathbf{0}_{3 \times 3} & \mathbf{0}_{3 \times 3} & s_3 J_g^{(3)}\left(x_7, x_8, x_9\right)
\end{array}\right),
\end{align}
where $\mathbf{0}_{m \times n}$ denotes an $m \times n$ zero matrix and we denote
\begin{align*}
J^{(k_1,k_2,k_3)}(x)=\left(\begin{array}{ccc}0 & -k_3^{\left(a_{\ell}, b_{\ell}, c_{\ell}\right)}(x) & k_2^{\left(a_{\ell}, b_{\ell}, c_{\ell}\right)}(x) \\ k_3^{\left(a_{\ell}, b_{\ell}, c_{\ell}\right)}(x) & 0 & -k_1^{\left(a_{\ell}, b_{\ell}, c_{\ell}\right)}(x) \\ -k_2^{\left(a_{\ell}, b_{\ell}, c_{\ell}\right)}(x) & k_1^{\left(a_{\ell}, b_{\ell}, c_{\ell}\right)}(x) & 0\end{array}\right),
\end{align*}
and define
\begin{align*}
& J_g^{(1)}\left(x_1, x_2, x_3\right)=J^{(k_1,k_2,k_3)}(x), \quad x = [x_1~x_2~x_3]^{\top}, 
\\
&J_g^{(2)}\left(x_4, x_5, x_6\right)=J^{(k_1,k_2,k_3)}(x), \quad x = [x_4~x_5~x_6]^{\top},
\\
&J_g^{(3)}\left(x_7, x_8, x_9\right)=J^{(k_1,k_2,k_3)}(x), \quad x = [x_7~x_8~x_9]^{\top}.
\end{align*}

In the experiment of using Telescope dataset  with a centered ball constraint set, we take the constraint set as the centered ball $K_{r=2} \subset\mathbb{R}^9$ defined in~\eqref{eqn:radius}, and we set the step size $\eta = 10^{-4}$ and batch size $m=30$ and implement  SRNSGLD and PSGLD $1000$ iterates with $100$ samples over the training set, where we choose the skew-symmetric matrix as $J_a$ defined in~\eqref{eqn:skew:matrix} with $a=2$ for SRNSGLD ($J_a$) shown by the blue line and the skew-symmetric matrix as $J_s(x)$ defined in~\eqref{eqn:sym:matrix} with $s=[5,5,5]$ for SRNSGLD ($J_s(x)$) shown by the orange dash line. Figure~\ref{tele} reports the accuracy of two algorithms over the training set and the test set where the test set accounts for $20\%$ of the whole dataset.
\begin{figure}[htbp]
    \centering
    \begin{subfigure}{0.45\textwidth}
        \centering
        \includegraphics[width=\linewidth]{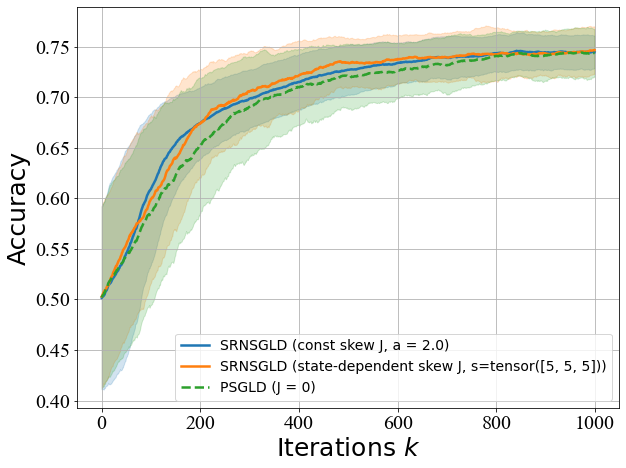}
        \caption{Accuracy over the training set with a centered ball constraint.}
    \end{subfigure}%
    \hspace{0.8cm}
    \begin{subfigure}{0.45\textwidth}
        \centering
        \includegraphics[width=\linewidth]{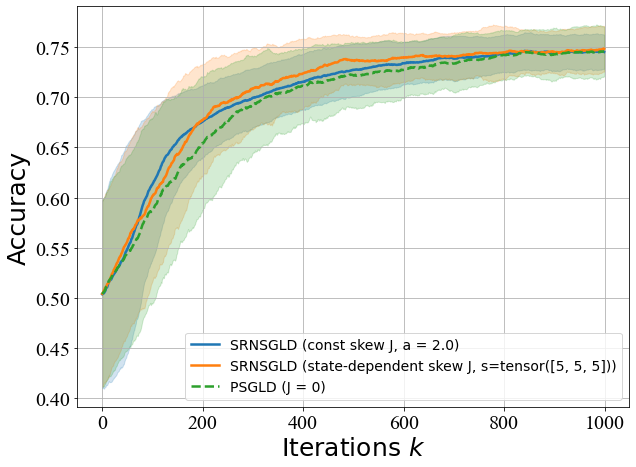}
        \caption{Accuracy over the test set with a centered ball constraint.}
    \end{subfigure}%
    \caption{Accuracy over the training set and the test set for the Telescope dataset. The blue part (solid line) denotes the mean and standard deviation of SRNSGLD ($J_a$), the orange part (solid line) denotes the mean and standard deviation of SRNSGLD ($J_s(x)$) and the green part (dashed line) denotes the mean and standard deviation of PSGLD ($J\equiv 0$).}
    \label{tele}    
\end{figure}

\begin{figure}[htbp]
    \centering
    \begin{subfigure}{0.45\textwidth}
        \centering
        \includegraphics[width=\linewidth]{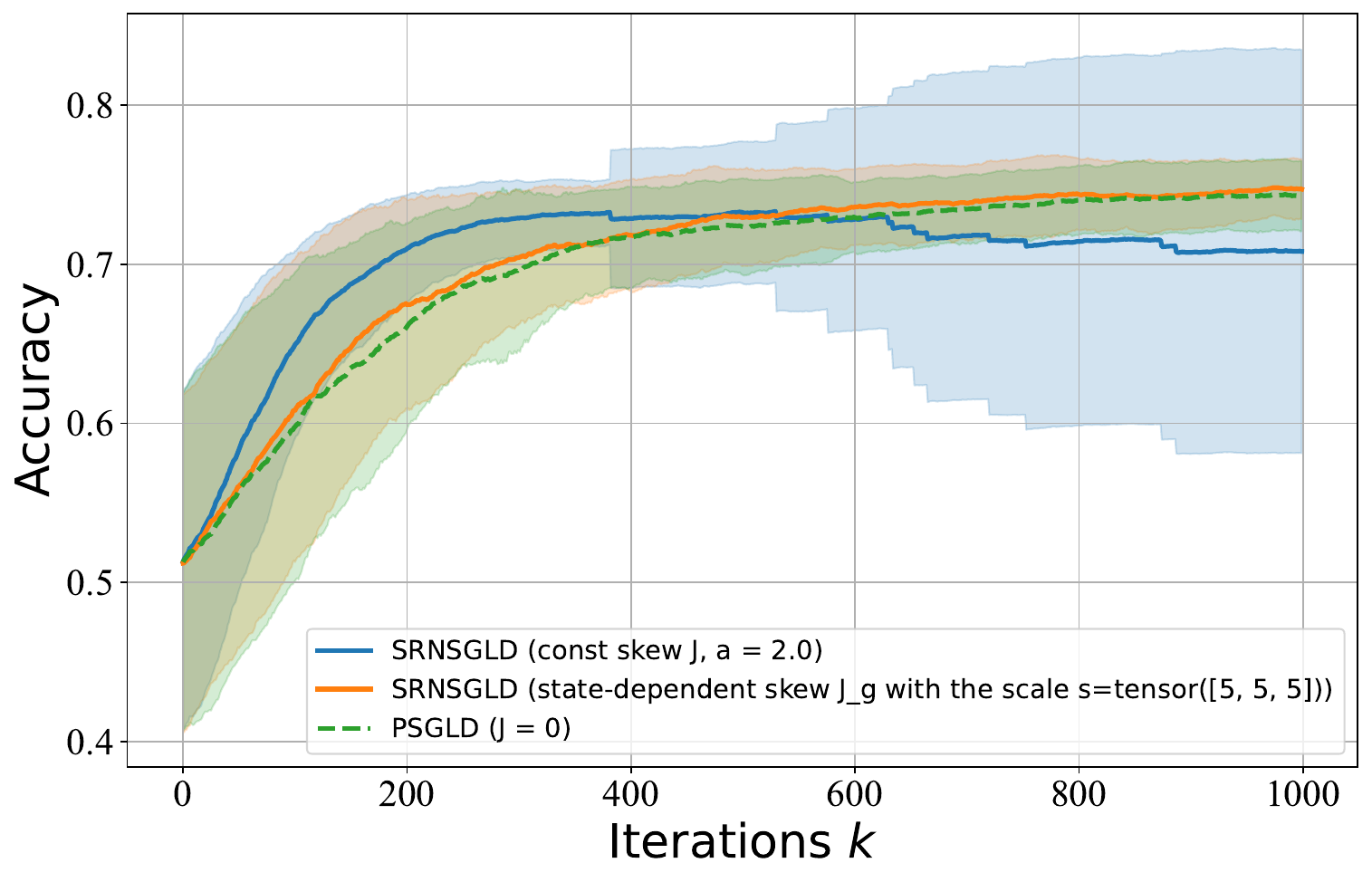}
        \caption{Accuracy over the training set with a smoothed $\ell_p$ ball constraint with a sublevel-set.}
    \end{subfigure}%
    \hspace{0.8cm}
    \begin{subfigure}{0.45\textwidth}
        \centering
        \includegraphics[width=\linewidth]{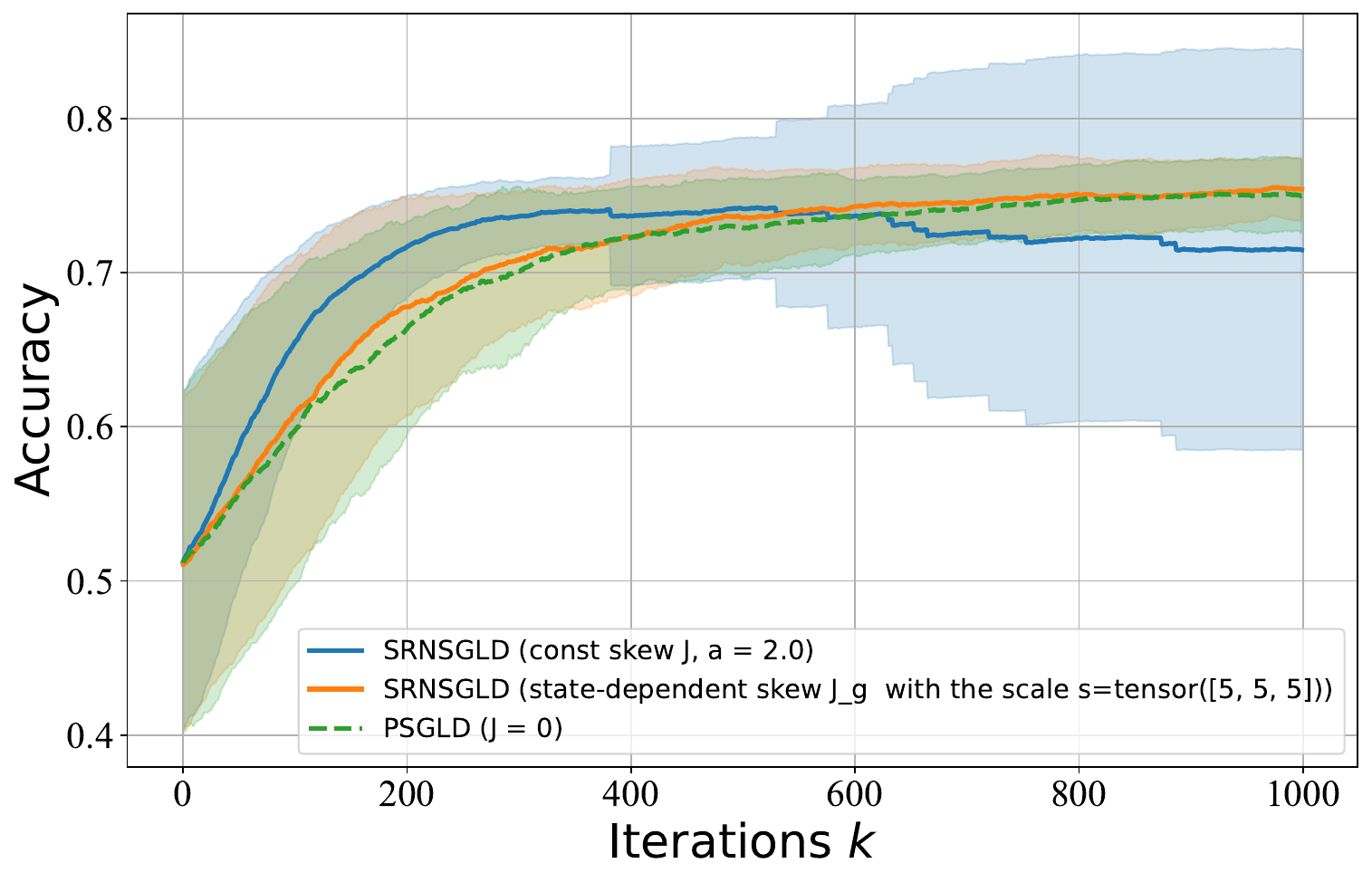}
        \caption{Accuracy over the test set with a smoothed $\ell_p$ ball constraint with a sublevel-set.}
    \end{subfigure}%
    \caption{Accuracy over the training set and the test set for the Telescope dataset. The blue part (solid line) denotes the mean and standard deviation of SRNSGLD ($J_a$), the orange part (solid line) denotes the mean and standard deviation of SRNSGLD ($J_g(x)$) and the green part (dashed line) denotes the mean and standard deviation of PSGLD ($J\equiv 0$).}
    \label{tele2}    
\end{figure}

In the experiment of using Telescope dataset with a smoothed $\ell_p$ ball constraint with a sublevel-set defined in~\eqref{eq:sublevel} with $p = 2.4$ in this Telescope dataset example, $\varepsilon = 0.2$, $\lambda = 4.0$ 
and the scaling factor of $J_g(x)$ is $s = [5,5,5]$ in the definition~\eqref{eq:block:matrix:scale}.
then we implement algorithms with the same implementation parameters as the one used in the example with the centered ball constraint set to obtain Figure~\ref{tele2}.

For the Titanic data, we first consider the constraint set as the centered ball $K_{r=2} \subset\mathbb{R}^9$, and we set the step size $\eta = 10^{-4}$ and batch size $m=30$ and implement SRNSGLD and PSGLD $1500$ iterates over the training set with $100$ samples, we choose the skew-symmetric matrix as $J_a$ defined in~\eqref{eqn:skew:matrix} with $a=3$ for SRNSGLD ($J_a$) shown by the green line and the skew-symmetric matrix as $J_s(x)$ defined in~\eqref{eqn:sym:matrix} with $s=[2,7,2]$ for SRNSGLD ($J_s(x)$) shown by the orange dash line. Figure~\ref{Titanic} reports the accuracy level of algorithms over the training and test sets, where the test set accounts for $20\%$ of the whole dataset. 

\begin{figure}[htbp]
    \centering
    \begin{subfigure}{0.45\textwidth}
        \centering
        \includegraphics[width=\linewidth]{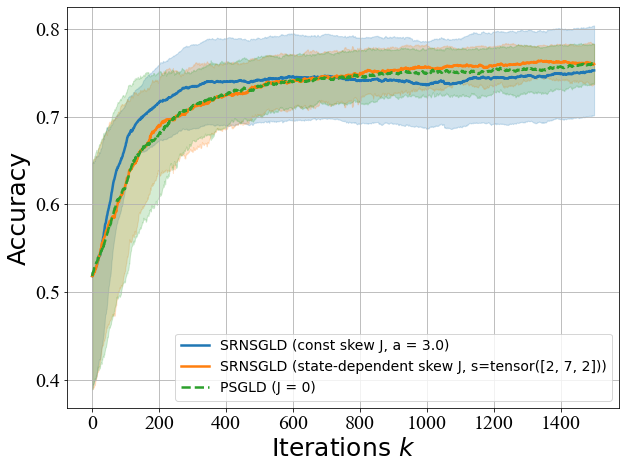}
        \caption{Accuracy over the training set with a centered ball constraint.}
    \end{subfigure}%
    \hspace{0.8cm}
    \begin{subfigure}{0.45\textwidth}
        \centering
        \includegraphics[width=\linewidth]{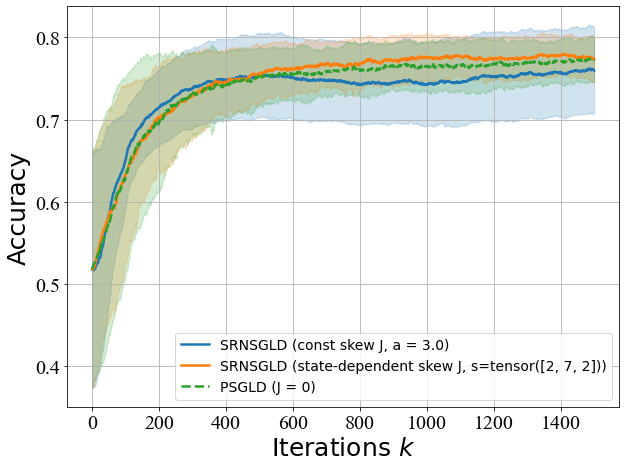}
        \caption{Accuracy over the test set with a centered ball constraint.}
    \end{subfigure}%
    \caption{Accuracy over the training set and the test set for the Titanic dataset. The green part (dash line) denotes the mean and standard deviation of SRNSGLD ($J_a$), the orange part (dash-dot line) denotes the mean and standard deviation of SRNSGLD ($J_s(x)$) and the blue part (solid line) denotes the mean and standard deviation of PSGLD ($J\equiv 0$).}
    \label{Titanic}    
\end{figure}

 Taking the constraint set as the smoothed $\ell_p$ ball constraint with a sublevel-set defined in~\eqref{eq:sublevel} with $p = 2.4$ in this Telescope dataset example, $\varepsilon = 0.18$, $\lambda = 4.0$ 
and the scaling factor of $J_g(x)$ is $s = [2,7,2]$ in the definition~\eqref{eq:block:matrix:scale}, 
then we use the same implementation parameters with $2000$ steps as for the centered ball constraint set and obtain Figure~\ref{Titanic2}.

\begin{figure}[htbp]
    \centering
    \begin{subfigure}{0.45\textwidth}
        \centering
        \includegraphics[width=\linewidth]{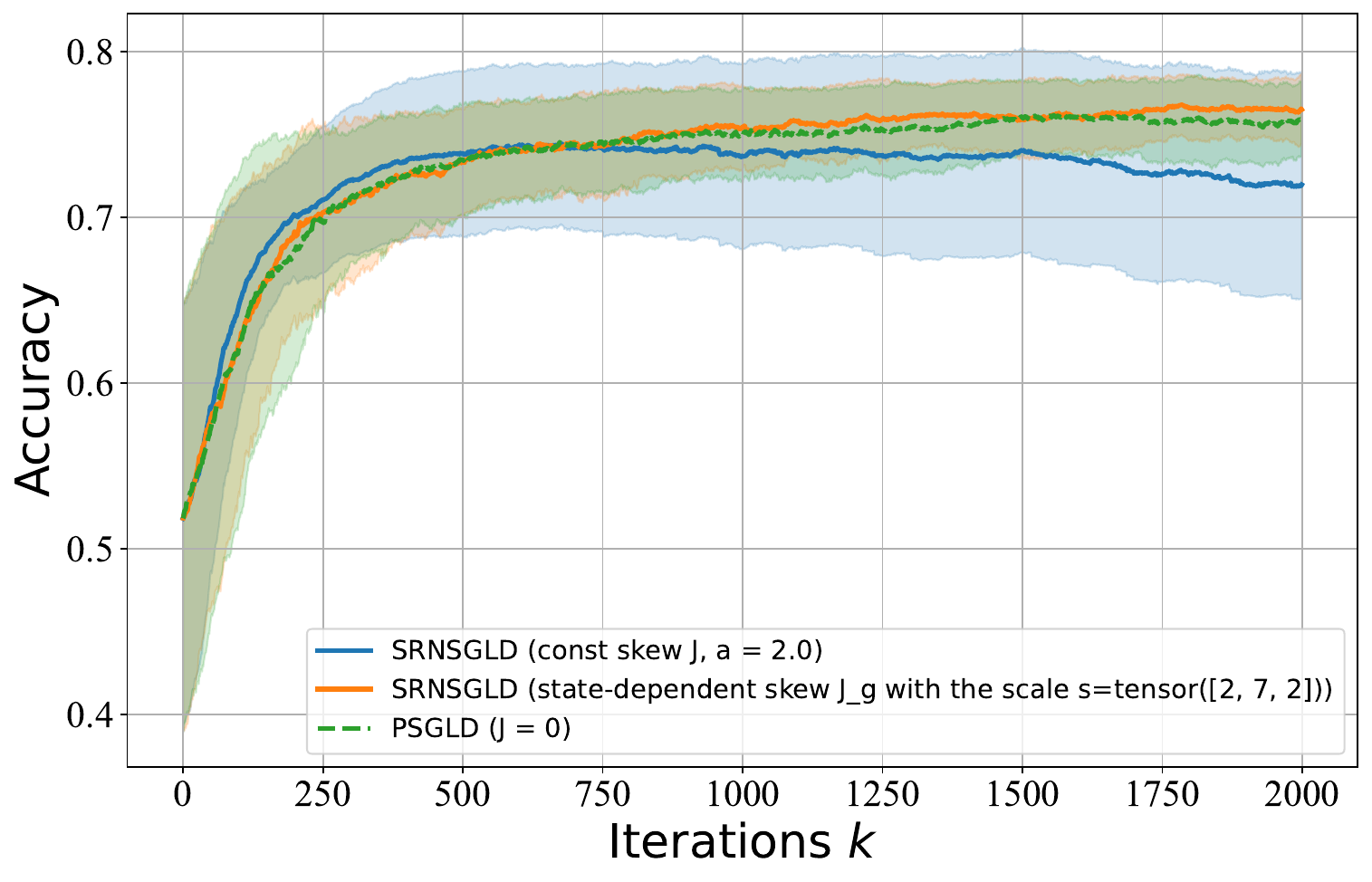}
        \caption{Accuracy over the training set with a smoothed $\ell_p$ ball constraint with a sublevel-set.}
    \end{subfigure}%
    \hspace{0.8cm}
    \begin{subfigure}{0.45\textwidth}
        \centering
        \includegraphics[width=\linewidth]{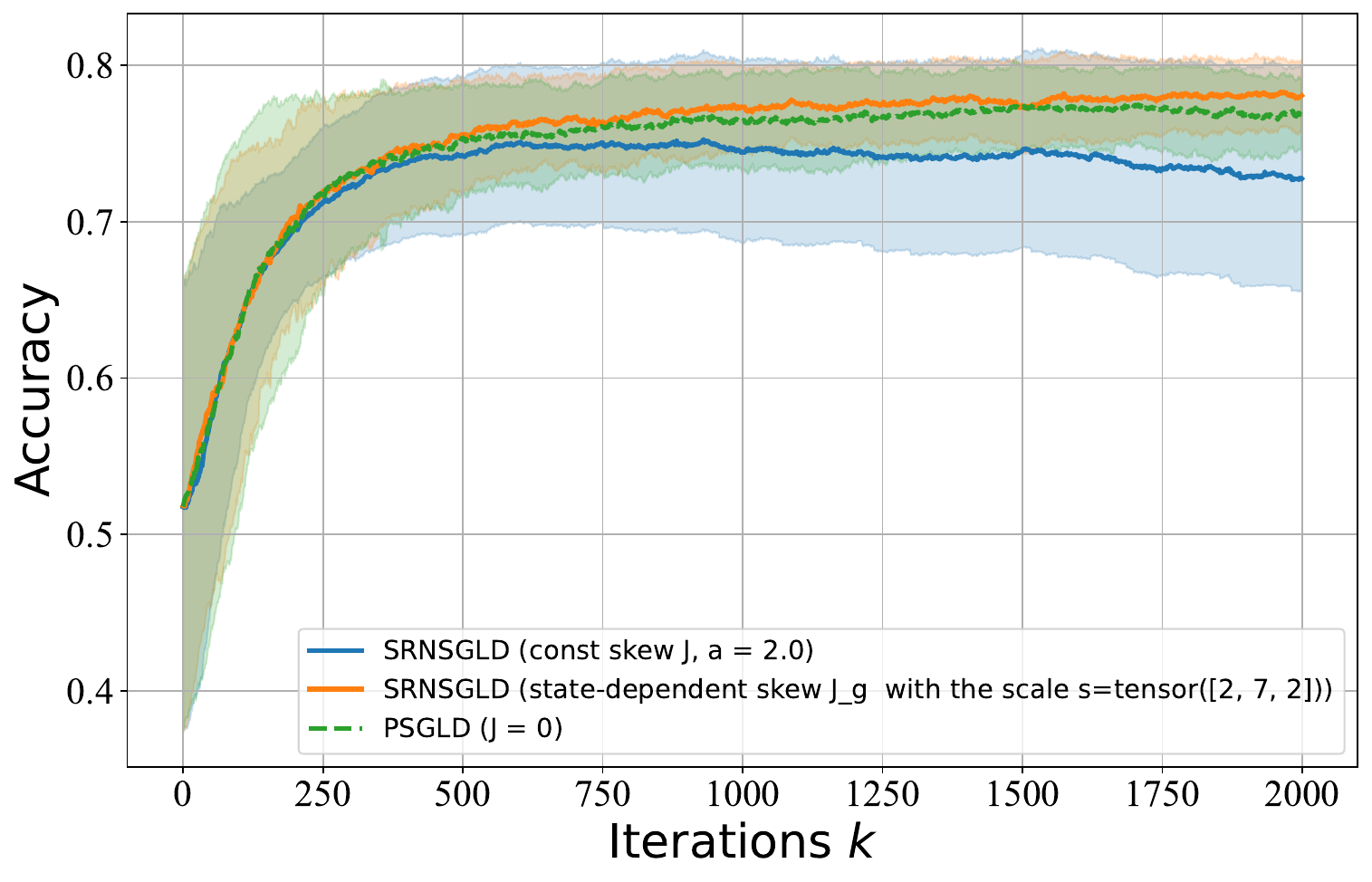}
        \caption{Accuracy over the test set with a smoothed $\ell_p$ ball constraint with a sublevel-set.}
    \end{subfigure}%
    \caption{Accuracy over the training set and the test set for the Titanic dataset. The green part (dash line) denotes the mean and standard deviation of SRNSGLD ($J_a$), the orange part (dash-dot line) denotes the mean and standard deviation of SRNSGLD ($J_g(x)$) and the blue part (solid line) denotes the mean and standard deviation of PSGLD ($J\equiv 0$).}
    \label{Titanic2}    
\end{figure}

 From Figure~\ref{tele} and Figure~\ref{tele2} for Telescope dataset, Figure~\ref{Titanic} and Figure~\ref{Titanic2} for Titanic dataset, we can conclude that both SRNSGLD ($J_s(x)$) and SRNSGLD ($J_g(x)$) consistently outperforms SRNSGLD ($J_a$) in terms of classification accuracy in the Telescope and Titanic datasets,   and the performance of both SRNSGLD ($J_s(x)$) and SRNSGLD ($J_g(x)$) is slightly better than the performance of PSGLD. In particular, SRNSGLD ($J_s(x)$)  and SRNSGLD ($J_g(x)$) attain higher accuracy with fewer iterates, which indicates a faster convergence rate. This improvement can be attributed to the adaptive nature and better acceleration rate of the state-dependent skew-symmetric matrix $J_s(x)$ and $J_g(x)$, as it enables more effective exploration of the parameter space compared to the fixed skew-symmetric matrix $J_a$.  In particular, we can observe from Figure~\ref{tele2} and Figure~\ref{Titanic2} the performance of SRNSGLD ($J_a$) degrades substantially with the smoothed $\ell_p$ ball constraint with the sublevel-set.

In summary, these results demonstrate that incorporating a well-chosen state-dependent matrix into the SRNSGLD framework not only leads to theoretical benefits, but also translates into practical efficiency and better empirical performance across different datasets.
\section{Concluding Remarks}\label{sec:conclude}

We established a large deviation principle for empirical measures of skew-reflected non-reversible Langevin dynamics (SRNLD), where a skew-symmetric matrix $J$ with $\nabla\!\cdot J=0$ is added to the (reversible) reflected Langevin dynamics (RLD), that samples a target distribution on a constrained domain $K$. We choose $J$ such that its product with the outward unit normal vector field on the boundary is zero ($J(x)\mathbf{n}(x)=0$ on $\partial K$). This simplifies the oblique boundary condition to the standard Neumann condition, which helps overcome the technical challenges that arise from the large deviations theory.

As a by-product, this simplification also provides us a natural choice
for the skew-symmetric matrix, that is constructed explicitly for examples including the ball constraint, 
and more generally the constrained domains characterized by sublevel sets that include special cases such as the smoothed $\ell_p$ ball constraint. 
The compactness of $K$ also simplifies several technical aspects, particularly related to the existence of $\lambda(g)$, properties of the Feynman-Kac semigroup, and exponential tightness. The core results on the variational representation and decomposition of the rate function $I(\nu)$ hold, providing valuable insight into the contributions of the reversible ($\calL_S$) and non-reversible ($\calL_A$) parts of the dynamics to the likelihood of fluctuations. The explicit forms for $I_S(\nu)$ and $I_A(\nu)$ depend on the precise definitions of the Sobolev (semi)norms and the properties of $\calL_A$ under the invariant measure $\mu$.

By explicitly characterizing the rate functions, we show that SRNLD accelerates the convergence to the target distribution when compared with RLD. Furthermore, we provide an operator-theoretic proof that these dynamics also reduce the asymptotic variance of time-average estimators, solidifying the advantages of the non-reversible approach. 

Numerical experiments based on our choice of the skew-symmetric matrix, that is constructed to satisfy all assumptions in our theory for the ball constraint and the smoothed $\ell_{p}$ ball constraint applying to a toy example of truncated multivariate normal distribution, constrained Bayesian linear regression and constrained Bayesian logistic regression using synthetic and real data, show superior performance compared to the existing literature, that validate the theoretical findings from large deviations theory.

There are several promising directions for future research.
First, while Theorem \ref{thm:variance_reduction} provides a spectral characterization of the variance reduction, deriving an explicit \emph{quantitative lower bound} depending solely on the geometry of $K$ and the potential $f$ remains a challenging open problem.
Second, building on the connection to the Cram\'{e}r-Rao lower bound established in Section \ref{subsec:quantitative_variance}, future work could focus on the theoretical optimization of the matrix field $J(x)$ to maximize the rate function curvature, thereby minimizing the asymptotic variance.
These will be left for future studies.

\section*{Acknowledgments}
We would like to thank Hengrong Du and Qi Feng for helpful discussions.
Xiaoyu Wang and Yingli Wang are supported by the Guangzhou-HKUST(GZ) Joint Funding Program (No.2025A03J3556).
Lingjiong Zhu is partially supported by the grants NSF DMS-2053454 and NSF DMS-2208303.

\clearpage

\appendix
\section{Summary of Notations}\label{sec:notations}

\subsection{Definitions and Terminologies}

\begin{tabular}{p{0.2\textwidth}p{0.75\textwidth}}
\toprule
Term & Description \\
\midrule
$C^k(K)$ & The space of functions defined on $K$ that are $k$ times continuously differentiable. \\
\addlinespace
$C^{k,\alpha}(K)$ & The H\"{o}lder space consisting of functions $u \in C^k(K)$ whose $k$-th order partial derivatives are H\"{o}lder continuous with exponent $\alpha \in (0,1]$. The associated norm is
\begin{footnotesize}
\[ \|u\|_{C^{k,\alpha}(K)} := \sum_{|\beta| \le k} \sup_{x \in K} |D^\beta u(x)| + \sum_{|\beta| = k} \sup_{x,y \in K, x \neq y} \frac{|D^\beta u(x) - D^\beta u(y)|}{|x-y|^\alpha}. \]
\end{footnotesize} \\
\addlinespace
$\partial K \in C^{k,\alpha}$ & The boundary $\partial K$ is of class $C^{k,\alpha}$ if locally it is the graph of a $C^{k,\alpha}$ function. Specifically, for every $x_0 \in \partial K$, there exists a neighborhood $U$ and a bijective map $\psi: U \to B_1(0)$ such that $\psi, \psi^{-1} \in C^{k,\alpha}$, with $\psi(U \cap K) = \{y \in B_1(0) : y_d \ge 0\}$. \\
\addlinespace
Gateaux-differentiable & For a function $\lambda$, means that for any $g, h \in \CK$, the limit 
\begin{footnotesize}
\[
D\lambda(g)[h] := \lim_{\varepsilon \to 0} \frac{\lambda(g+\varepsilon h) - \lambda(g)}{\varepsilon}
\]
\end{footnotesize}
exists. \\
\midrule
$H^1(K,d\nu)$ & Sobolev space consisting of functions $u$ defined on domain $K$ such that both $u$ and its first-order weak derivatives are square-integrable with respect to measure $\nu$. More formally, $u \in H^1(K,d\nu)$ if $u \in L^2(K,d\nu)$ and its gradient $\nabla u$ (where derivatives are understood in the weak sense) satisfies $\nabla u \in (L^2(K,d\nu))^d$. The squared norm is
\begin{footnotesize}
\[
    \|u\|_{H^1(K,d\nu)}^2 = \int_K |u(x)|^2 \nu(dx) + \int_K \|\nabla u(x)\|_{2}^2 \nu(dx).
\]
\end{footnotesize}
Weak derivatives generalize derivatives to functions that may not be continuously differentiable but still exhibit regularity. \\
$\mathcal{H}^1_S(\nu)$ & A Sobolev $H^1$-type semi-norm with respect to measure $\nu$, defined for a sufficiently regular function $\varphi$ as
\begin{footnotesize}
\[
    \|\varphi\|_{\mathcal{H}^1_S(\nu)}^2 := \int_K \|\nabla \varphi\|_{2}^2 d\nu.
\]
\end{footnotesize}
This semi-norm is associated with the symmetric part of the infinitesimal generator $\mathcal{L}_S$. \\
$\mathcal{H}^{-1}_S(\nu)$ & The dual Sobolev semi-norm corresponding to $\|\cdot\|_{\mathcal{H}^1_S(\nu)}$. For an element $\varphi \in L^2(K, d\nu)$ that has zero mean with respect to $\nu$ (i.e., $\int_K \varphi \, d\nu = 0$), it is defined via a variational problem:
\begin{footnotesize}
\[
\|\varphi\|_{\mathcal{H}_S^{-1}(\nu)}^2 = \sup_{\psi\in D^+(\mathcal L_J)}\left\{ 2\int_K \psi\varphi d\nu - \|\psi\|_{\mathcal{H}^1_S(\nu)}^2 \right\}.
\]
\end{footnotesize} \\
\bottomrule
\end{tabular}

\subsection{Sets and Spaces}

\begin{tabular}{ll}
\toprule
Notation & Description \\
\midrule
$\R^d$ & $d$-dimensional Euclidean space \\
$K$ & Compact, connected domain with smooth boundary $\partial K$ \\
$\CK$ & Space of continuous functions on $K$ \\
$\CbK$ & Space of bounded continuous functions on $K$ \\
$L^\infty(K)$ & Space of measurable functions on $K$ that are essentially bounded. \\
$\calP(K)$ & Space of probability measures on $K$ \\
$\DomLJ$ & Domain of $\calL_J$, which is $\{ u \in C^2(K) : \inner{\nabla u(x)}{(I + J(x))\mathbf n(x)} = 0 \text{ on } \partial K \}$ \\
$\DomLplus$ & Space of positive functions $u$ in $\DomLJ \cap \CK$ with $\frac{\calL_J u}{u} \in \CbK$ \\
\bottomrule
\end{tabular}

\subsection{Operators and Functions}

\begin{tabular}{ll}
\toprule
Notation & Description \\
\midrule
$\calL_J$ & Infinitesimal generator of SRNLD \\
$\calL_S$ & Symmetric part of $\calL_J$ \\
$\calL_A$ & Skew-symmetric part of $\calL_J$ \\
$\nabla$ & Gradient operator \\
$\nabla \cdot$ & Divergence operator\\
$\Delta$ & Laplacian operator \\
$f$ & Potential function \\
$J$ & Skew-symmetric matrix field \\
$\mathbf n$ & outward unit normal vector on $\partial K$ \\
\bottomrule
\end{tabular}

\subsection{Measures and Norms}

\begin{tabular}{ll} 
\toprule
Notation & Description \\
\midrule
$\mu$ & Invariant measure of SRNLD \\
$\nu$ & A realization of the empirical measure of $X_t$\\
$\|\cdot\|_2$ & Euclidean 2-norm \\
$\|\cdot\|_\infty$ & Supremum norm \\
$\|\cdot\|_{\mathcal{H}^1_S(\nu)}$ & Sobolev semi-norm with respect to $\nu$ \\
$\|\cdot\|_{\mathcal{H}^{-1}_S(\nu)}$ & Dual Sobolev semi-norm \\
$\mathcal{W}_1$ & 1-Wasserstein distance \\
\bottomrule
\end{tabular}

\subsection{Other Notations}

\begin{tabular}{ll}
\toprule
Notation & Description \\
\midrule
$\inner{\cdot}{\cdot}$ & Inner product in the Euclidean space $\mathbb R^d$\\
$\langle\cdot,\cdot\rangle_\mu$ & Inner product in $L^2(K,d\mu)$\\
$\calL_J^*$ & Adjoint of operator $\calL_J$ \\
$I(\nu)$ & Rate function of the empirical measure $\nu$\\
$\lambda(g)$ & Scaled cumulant generating function \\
\bottomrule
\end{tabular}

%

\section{Proofs of Technical Lemmas}
\label{app:proofs_of_technical_lemmas}

\subsection{Proof of Lemma~\ref{existence}}
\label{app:existence}
        \begin{enumerate}
            \item \textbf{Properties of the operator $\calL_J+g$ and the semigroup $P_t^g$:}
            Since $K$ is a compact domain in $\R^d$ and $g \in C(K)$, $g$ is bounded on $K$. Let $g_{\max} = \max_{x \in K} |g(x)|$.
            The operator $\calL_J$ of $(X_{t})_{t\geq 0}$ in \eqref{eq:sde_local} is given by:
            $\calL_J u(x) = \Delta u(x) - \inner{(I + J(x))\nabla f(x)}{\nabla u(x)}$.
            The term $\Delta u(x)$ is uniformly elliptic. 
            Assume $\partial K\in C^{2,\alpha}$, $f\in C^{2,\alpha}(K)$ and $J\in C^{1,\alpha}(K)$. 
            Then the drift coefficients $(I+J(x))\nabla f(x)$ belong to $C^{1,\alpha}(K)$, and hence are H\"older continuous. 
            Therefore $\mathcal L_J$ is a second-order linear uniformly elliptic operator on $K$ with $C^{1,\alpha}$ coefficients. 
            Moreover, multiplication by $g(x)$ is a bounded (zeroth-order) perturbation (e.g. $g\in L^\infty(K)$).
            The domain $D(\calL_J+g)$ is specified by functions in $C^2(K)$ satisfying Neumann boundary conditions 
            $\inner{\nabla u(x)}{\mathbf n(x)} = 0$
            on $\partial K$.
        
            \begin{enumerate}[label=\alph*)]
                \item \textbf{Compactness of $P_t^g$:} The semigroup $P_t^g$ is generated by $\calL_J+g$. For an elliptic operator on a compact manifold with boundary conditions such as Neumann, the resolvent 
                $(\zeta I - (\calL_J+g))^{-1}$
                is compact for $\zeta$ in the resolvent set. This implies that the semigroup $P_t^g = e^{t(\calL_J+g)}$ is compact for all $t>0$ (see, e.g., \cite[Chapter 2, Corollary 3.5]{pazy1983semigroups}).
                Alternatively, $P_t^g$ can be represented via the fundamental solution (Green's function) $p^g(t,x,y)$ of the parabolic PDE 
                $\partial_t u = (\calL_J+g)u$.
                For elliptic operators, this fundamental solution is smooth for $t>0$ (see, e.g., \cite[Chapter 3.3, Theorem 3.3.11]{stroock2008partial}). Since $K$ is compact, 
                $(P_t^g \phi)(x) = \int_K p^g(t,x,y) \phi(y)\, dy$
                is an integral operator with a smooth kernel, which implies that $P_t^g$ is compact on $C(K)$ (also on $L^p(K)$).
        
                \item \textbf{Strong positivity of $P_t^g$.}
                
                Recall that an operator $T$ on $C(K)$ is \emph{strongly positive} if for any $\phi\in C(K)$ with
                $\phi\ge 0$ and $\phi\not\equiv 0$, one has $T\phi(x)>0$ for all $x\in K$.
                Let $(P_t)_{t\ge 0}$ be the Markov semigroup associated with the reflected diffusion generated by
                $\calL_J$ on the bounded connected domain $K$.
                Since the diffusion matrix is the identity, $\calL_J$ is uniformly elliptic; in particular the
                Neumann heat kernel $p(t,x,y)$ exists, is jointly continuous on $(0,\infty)\times K\times K$, and is
                \emph{strictly positive} for every $t>0$ and $x,y\in K$ (see, e.g., \cite[p.~98]{DaviesHeatKernels}).
                Consequently, for any $\phi\ge 0$, $\phi\not\equiv 0$,
                \[
                (P_t\phi)(x)=\int_K p(t,x,y)\phi(y)\,dy>0,\qquad x\in K,
                \]
                so $P_t$ is positivity improving; equivalently, the semigroup is irreducible in the sense of
                \cite[Theorem~3.3.5]{DaviesHeatKernels}.

                Now let $g\in L^\infty(K)$ and define the Feynman--Kac semigroup
                \[
                (P_t^g\phi)(x)
                :=\E^x\!\left[\phi(X_t)\exp\!\left(\int_0^t g(X_s)\,ds\right)\right].
                \]
                The exponential weight is strictly positive and satisfies
                $$e^{-t\|g\|_\infty}\le \exp\left(\int_0^t g(X_s)\,ds\right)\le e^{t\|g\|_\infty}.$$
                Hence, for $\phi\ge 0$,
                \[
                (P_t^g\phi)(x)\;\ge\; e^{-t\|g\|_\infty}\,\E^x[\phi(X_t)]
                \;=\; e^{-t\|g\|_\infty}(P_t\phi)(x)\;>\;0,\qquad x\in K,
                \]
                which proves that $P_t^g$ is strongly positive for every $t>0$.
                
            \end{enumerate}
        
            \item \textbf{Existence of a principal eigenvalue/eigenfunction and $t$-consistency:}
            
            Fix some $t_0>0$. Since $P_{t_0}^g$ is compact and strongly positive on the Banach lattice $C(K)$,
            the Krein--Rutman theorem (see, e.g., \cite[Chapter~6, Theorem~19.2]{deimling2013nonlinear})
            yields that the spectral radius $r(P_{t_0}^g)>0$ is a simple eigenvalue of $P_{t_0}^g$ and there
            exists $h_g\in C(K)$ with $h_g>0$ on $K$ such that
            \[
            P_{t_0}^g h_g = r(P_{t_0}^g)\,h_g .
            \]
            Write $r(P_{t_0}^g)=e^{t_0\lambda(g)}$ and define $\lambda(g):=\frac1{t_0}\log r(P_{t_0}^g)$.

            \smallskip
            \noindent\emph{Step 1: the same $h_g$ works for all $t\ge0$.}
            For any $t\ge0$, by the semigroup property,
            \[
            P_{t_0}^g\left(P_t^g h_g\right)=P_t^g\left(P_{t_0}^g h_g\right)
            =r(P_{t_0}^g)\,P_t^g h_g .
            \]
            Thus $P_t^g h_g$ is a (strictly positive) eigenfunction of $P_{t_0}^g$ associated with the simple
            eigenvalue $r(P_{t_0}^g)$. By uniqueness up to scaling, there exists a scalar $c(t)>0$ such that
            \[
            P_t^g h_g = c(t)\,h_g \qquad (t\ge0).
            \]
            Again by the semigroup property, $c(t+s)=c(t)c(s)$ and $c(0)=1$.
            Since $(P_t^g)_{t\ge0}$ is strongly continuous on $C(K)$ (as $X_t$ is Feller and $g$ is bounded),
            $t\mapsto P_t^g h_g$ is continuous in $C(K)$. Hence $c(t)$ is continuous, and $c(t)=e^{t\lambda(g)}$ for all $t\ge0$.
            Therefore,
            \[
            P_t^g h_g = e^{t\lambda(g)}h_g,\qquad \text{for any $t\ge0$}.
            \]
            In particular, for every fixed $t>0$, the principal eigenfunction of $P_t^g$ is unique up to normalization;
            thus it coincides with $h_g$ after rescaling and the corresponding exponent satisfies $\lambda_t(g)=\lambda(g)$.

            \smallskip
            \noindent\emph{Step 2: eigenvalue problem for the generator.}
            Since $t_0>0$ and $\mathcal L_J+g$ is uniformly elliptic with smooth coefficients and Neumann boundary condition,
            the kernel $p^g(t_0,x,y)$ is smooth in $(x,y)$.
            Hence $P_{t_0}^g$ is smoothing, and in particular $P_{t_0}^g h_g\in D(\mathcal L_J+g)$
            (see, e.g., \cite[Chapter~3.3, Theorem~3.3.11]{stroock2008partial}).
            From $P_{t_0}^g h_g=e^{t_0\lambda(g)}h_g$ we deduce $h_g\in D(\mathcal L_J+g)$. From $P_t^g h_g=e^{t\lambda(g)}h_g$ and the definition of the generator,
            \[
            (\mathcal L_J+g)h_g
            =\lim_{t\downarrow 0}\frac{P_t^g h_g-h_g}{t}
            =\lim_{t\downarrow 0}\frac{e^{t\lambda(g)}-1}{t}\,h_g
            =\lambda(g)\,h_g,
            \]
            so that $h_g$ is a (strictly positive) eigenfunction of $\mathcal L_J+g$ with eigenvalue $\lambda(g)$.

            \item \textbf{Existence of the limit and independence of $x$:}
            From the definition of the principal eigenvalue, for the strictly positive eigenfunction $h_g$, we have
            $(P_t^g h_g)(x) = e^{\lambda(g)t} h_g(x)$.
            This can be written as 
            $\E^x \left[ h_g(X_t) \exp\left( \int_0^t g(X_s) ds \right) \right] = e^{\lambda(g)t} h_g(x)$.
            Thus, 
            $\frac{1}{t} \log \E^x \left[ \frac{h_g(X_t)}{h_g(x)} \exp\left( \int_0^t g(X_s) ds \right) \right] = \lambda(g)$.
            Since $K$ is compact and $h_g \in C(K)$ with $h_g(x) > 0$, there exist $0 < c_1 \le c_2 < \infty$ such that $c_1 \le h_g(x) \le c_2$ for all $x \in K$.
            Therefore, 
            $\frac{c_1}{c_2} \le \frac{h_g(X_t)}{h_g(x)} \le \frac{c_2}{c_1}$.
            Then,
        \begin{equation}\label{t:to:infty:2}
            \lambda(g) + \frac{1}{t}\log\left(\frac{c_1}{c_2}\right) \le \frac{1}{t} \log \E^x \left[ \exp\left( \int_0^t g(X_s) ds \right) \right] \le \lambda(g) + \frac{1}{t}\log\left(\frac{c_2}{c_1}\right). 
            \end{equation}
            Taking the limit as $t \to \infty$ in \eqref{t:to:infty:2}, we get
            $\lim_{t\to\infty} \frac{1}{t} \log \E^x \left[ \exp\left( \int_0^t g(X_s) ds \right) \right] = \lambda(g)$.
            This limit exists, is equal to the principal eigenvalue of $\calL_J+g$, and is independent of the initial state $x \in K$.
            The finiteness of $\lambda(g)$ follows because $g$ is bounded on the compact set $K$. If $g_{\min} \le g(x) \le g_{\max}$, then
            \begin{align*}
                g_{\min} &= \lim_{t\to\infty} \frac{1}{t} \log \E^x \left[ \exp\left( \int_0^t g_{\min} ds \right) \right] 
                \\
                &\le \lambda(g) \le \lim_{t\to\infty} \frac{1}{t} \log \E^x \left[ \exp\left( \int_0^t g_{\max} ds \right) \right] = g_{\max}.
            \end{align*}
            Thus, $\lambda(g)$ is finite.
        \end{enumerate}
        The results cited (e.g., Krein-Rutman theorem \cite[Chapter 6, Theorem 19.2]{deimling2013nonlinear},  \cite[Chapter 2, Corollary 3.5]{pazy1983semigroups}) are standard tools for analyzing such semigroups on compact domains. Donsker and Varadhan's original work on LDPs for Markov processes on compact spaces also establishes the existence of $\lambda(g)$ as the principal eigenvalue \citep{DV1}.
        This completes the proof.

\subsection{Proof of Lemma~\ref{convexity}}
\label{app:proof_convexity}
    This is a standard consequence of H\"{o}lder's inequality. For any $\alpha \in [0,1]$, $g_1, g_2 \in C(K)$,
    \begin{align}
      \E\left[e^{\int_0^t (\alpha g_1(X_s) + (1-\alpha)g_2(X_s)) ds }\right]
      &= \E\left[\left(e^{\int_0^t g_1(X_s) ds }\right)^\alpha \left(e^{\int_0^t g_2(X_s) ds }\right)^{1-\alpha}\right]\nonumber \\
      &\le \left(\E\left[e^{\int_0^t g_1(X_s) ds }\right]\right)^\alpha \left(\E\left[e^{\int_0^t g_2(X_s) ds }\right]\right)^{1-\alpha}.\label{t:to:infty:3}
    \end{align}
    Taking $\frac{1}{t}\log$ and letting $t\to\infty$ in \eqref{t:to:infty:3} yields: 
    $\lambda(\alpha g_1 + (1-\alpha)g_2) \le \alpha \lambda(g_1) + (1-\alpha)\lambda(g_2)$. 
    This completes the proof.

\subsection{Proof of Lemma~\ref{gateaux}}
\label{app:proof_gateaux}
The proof relies on perturbation theory for linear operators, specifically for isolated simple eigenvalues.
Let $T(g) = \calL_J + M_g$, where $M_g$ is the multiplication operator by $g$.
From the previous proposition (existence of $\lambda(g)$), we know that for each $g \in \CK$, $\lambda(g)$ is a simple, isolated eigenvalue of $T(g)$ with the largest real part.
Let $h_g \in \CK$ be a corresponding strictly positive eigenfunction, and let $h_g^*$ be a corresponding strictly positive eigenfunction of the adjoint operator
$T(g)^* = \calL_J^* + M_g$ (adjoint with respect to the $L^2(K, dx)$ inner product, assuming appropriate domains).
Since $h_g$ and $h_g^*$ are unique up to positive multiplicative constants, we can normalize them such that their $L^2(K, dx)$ inner product is
$\inner{h_g}{h_g^*}_{L^2(K,dx)} = \int_K h_g(x) h_g^*(x) dx = 1$.
Note that since $K$ is compact and $h_g \in \CK$ with $h_g(x)>0$, $h_g$ is bounded from above and below by positive constants on $K$.

\textbf{1. Continuity of $\lambda(g)$:}
We have already established in Proposition~\ref{convexity} that $\lambda(g)$ is convex.
A convex function on a Banach space (such as $\CK$) is continuous if it is bounded on some open set (see, e.g., \cite[Chapter 1, Lemma 2.1]{ekeland1999convex}).
Since $g_{\min} \le \lambda(g) \le g_{\max}$ (where $g_{\min} = \inf_K g(x)$ and $g_{\max}=\sup_K g(x)$), if $g$ is in an open ball of radius $R$ around $0$ in $\CK$, then $\norm{g}_\infty \le R$, which implies $-R \le \lambda(g) \le R$. Thus $\lambda(g)$ is bounded on open sets.
Being convex and bounded on open sets, $\lambda(g)$ is continuous.

\textbf{2. Gateaux-Differentiability of $\lambda(g)$:}
We want to show that for any $g, h \in \CK$, the limit
$D\lambda(g)[h] = \lim_{\varepsilon \to 0} \frac{\lambda(g+\varepsilon h) - \lambda(g)}{\varepsilon}$
exists.
Since $\lambda(g)$ is a simple eigenvalue of $T(g) = \calL_J + M_g$, its first-order variation with respect to a linear perturbation $T(g) + \varepsilon M_h$ is given by standard perturbation theory (see, e.g., \cite[Chapter II, Equation~(2.33)]{kato2013perturbation} for the coefficient of the first-order term in the eigenvalue series expansion).
Let $T(\varepsilon) = T(g+\varepsilon h)$. The eigenvalue is $\lambda(\varepsilon) = \lambda(g+\varepsilon h)$ and the corresponding eigenfunction is $h(\varepsilon) = h_{g+\varepsilon h}$.
The differentiability of $\lambda(\varepsilon)$ and $h(\varepsilon)$ with respect to $\varepsilon$ at $\varepsilon=0$ is assured for such analytic perturbations (see \cite[Chapter VII, Theorem 2.6, Theorem 3.6]{kato2013perturbation}).

The first-order correction to $\lambda(g)$, denoted $\lambda'(0) = \left.\frac{d\lambda(\varepsilon)}{d\varepsilon}\right|_{\varepsilon=0}$, can be found by differentiating the eigenvalue equation $T(\varepsilon)h(\varepsilon) = \lambda(\varepsilon)h(\varepsilon)$ with respect to $\varepsilon$ at $\varepsilon=0$:
\[ M_h h_g + T(g)h'(0) = \lambda'(0)h_g + \lambda(g)h'(0), \]
where $h_g = h(0)$ is the eigenfunction for $T(g)$.
Taking the inner product with the eigenfunction $h_g^*$ of the adjoint $T(g)^*$ (normalized so that $\inner{h_g}{h_g^*}_{L^2(K,dx)} = 1$), and using $T(g)^*h_g^* = \lambda(g)h_g^*$, we obtain:
$\inner{M_h h_g}{h_g^*}_{L^2(K,dx)} = \lambda'(0)$.
Thus,
$\lambda'(0) = \int_K h(x) h_g(x) h_g^*(x) dx$.
This shows that the Gateaux derivative $D\lambda(g)[h]$ exists and is given by
$D\lambda(g)[h] = \int_K h(y) \nu_g(dy)$, 
where the measure $\nu_g$ is defined by
\[
\nu_g(dy)= h_g(y)\,h_g^*(y)\,dy,
\qquad\text{with}\qquad\int_K h_g(y)h_g^*(y)\,dy=1.
\]
In particular, $\nu_g$ is a probability measure, consistent with the eigenfunction representation
used in the main text.

The differentiability of $\lambda(g+\varepsilon h)$ with respect to $\varepsilon$ (and in fact, its analyticity for real $\varepsilon$) is a deep result from the perturbation theory of operators. The family of operators $T(g) = \calL_J + M_g$ is an affine (and thus analytic) family of operators of type (A) in the sense of \cite[Chapter VII, Section 2.1]{kato2013perturbation}, when considered as operators from, say, $H^2(K) \cap \{\text{BCs}\}$ to $L^2(K)$. Since $\lambda(g)$ is a simple isolated eigenvalue, it is an analytic function of $g$ (if $\CK$ is considered as a real Banach space, $\lambda(g)$ is real-analytic). Real analyticity implies Fr{\'e}chet differentiability, which in turn implies Gateaux differentiability.
The proof is complete.

\subsection{Proof of Lemma~\ref{Exponential-Tightness}}
\label{app:proof_exponential_tightness}
    By definition (see, e.g., \cite[Definition 1.2.18]{DZ1998}), a family of probability measures $(\mathbb{Q}_t)_{t \ge 0}$ on a Polish space $\mathcal{X}$ is exponentially tight if for every $A < \infty$, there exists a compact set $\Gamma_A \subset \mathcal{X}$ such that
    $\limsup_{t\to\infty} \frac{1}{t} \log \mathbb{Q}_t(\Gamma_A^c) \le -A$.
    In our case, the Polish space is $\mathcal{X} = \calP(K)$, the space of probability measures on $K$, equipped with the weak topology. The measures are $\mathbb{Q}_t(\cdot) = \Prob(L_t \in \cdot)$.
    
    Since $K$ is a compact metric space, by Prokhorov's theorem (see, e.g., \cite[Theorem 5.1 and Theorem 5.2]{billingsley2013convergence}), the space $\calP(K)$ itself is compact in the weak topology.
    Now, to demonstrate exponential tightness, for any given $A < \infty$, we can choose the compact set $\Gamma_A = \calP(K)$.
    Since $L_t$ is, by its definition as an empirical measure, always an element of $\calP(K)$ (it is a probability measure on $K$), the event $L_t \notin \Gamma_A = \calP(K)$ is an impossible event.
    Therefore, $L_t \notin \calP(K)$ implies $L_t \in \emptyset$.
    Thus,
    $\Prob(L_t \notin \Gamma_A) = \Prob(L_t \notin \calP(K)) = \Prob(L_t \in \emptyset) = 0$.
    Then, for any $t > 0$,
    $\frac{1}{t} \log \Prob(L_t \notin \Gamma_A) = \frac{1}{t} \log 0$. 
    The logarithm of zero is formally $-\infty$. So,
    $\limsup_{t\to\infty} \frac{1}{t} \log \Prob(L_t \notin \Gamma_A) = \limsup_{t\to\infty} (-\infty) = -\infty$.
    Since for any $A < \infty$, we have $-\infty \le -A$, the condition for exponential tightness is satisfied.
    
    This argument is significantly simpler than the one required for non-compact state spaces (such as \cite[Lemma 6.9]{LDP-GG}), where one typically needs to construct a family of precompact sets using a Lyapunov function $\Psi$ and show that the probability of the empirical measure lying outside these precompact sets decays sufficiently fast. The compactness of the entire space $\calP(K)$ makes this step trivial. The proof is complete.

\subsection{Proof of Lemma~\ref{lemma:asymptotic_variance}}
\label{app:proof_asymptotic_variance}
The two formulas are classical and equivalent characterizations of the asymptotic variance for an ergodic Markov process.

(a) The first formula,
$\sigma_{g,J}^2 = 2 \int_0^\infty \E_\mu [g(X_0) g(X_t)] dt$,
is known as the Green-Kubo formula. It is obtained by expanding the definition of the variance of the time-average estimator, $\lim_{T\to\infty} T \cdot \text{Var}_\mu \left[\frac{1}{T}\int_0^T g(X_t) dt\right]$,
and applying the stationarity of the process $(X_t)_{t\ge 0}$. The integrability of the auto-correlation function is guaranteed by the exponential decay of correlations, a consequence of the spectral gap of the process. For a detailed derivation, see, e.g., \cite[Chapter 4]{pavliotis2014stochastic}.

(b) The equivalence to the second formula:
$\sigma_{g,J}^2 = 2 \inner{g}{(-\calL_J)^{-1}g}$,
is established via the operator semigroup theory. The auto-correlation function can be expressed as $\E_\mu[g(X_0) g(X_t)] = \inner{g}{P_t^J g}$, where $P_t^J=e^{t\calL_J}$ is the Markov semigroup generated by $\calL_J$. The Green-Kubo formula then becomes:
$\sigma_{g,J}^2 = 2 \int_0^\infty \inner{g}{P_t^J g} dt = 2 \inner{g}{\left( \int_0^\infty P_t^J dt \right) g}$.
The exchange of the inner product and the integral is justified by Fubini's theorem. The integral of the semigroup is the resolvent of the generator at zero, i.e., $\int_0^\infty P_t^J dt = (-\calL_J)^{-1}$. This identity is fundamental in semigroup theory and connects the integrated time-evolution of the process to the inverse of its generator. The existence and boundedness of $(-\calL_J)^{-1}$ on $L^2_0(\mu)$ is ensured by the spectral gap of $\calL_J$ on the compact domain $K$. This establishes the equivalence.

\section{Derivation of Adjoint and Operator Decompositions under Assumption~\ref{assump:nablaJ}}
\label{app:operator_decomposition_simplified}

In this appendix, we derive the explicit expressions for the adjoint operator $\mathcal{L}_J^*$ and the symmetric and skew-symmetric parts, $\mathcal{L}_S $ and $\mathcal{L}_A $, of the infinitesimal generator $\mathcal{L}_J$ with respect to the invariant measure $\mu(dx) = Z^{-1}e^{-f(x)}dx$. These derivations are performed under Assumptions~\ref{assump:regularity_local}, \ref{assump:boundary}, and crucially, Assumption~\ref{assump:nablaJ}. The infinitesimal generator of $(X_{t})_{t\geq 0}$ in \eqref{eq:sde_local} is given by
$\mathcal{L}_J u(x) = \Delta u(x) - \langle \nabla f(x), \nabla u(x) \rangle - \langle J(x)\nabla f(x), \nabla u(x) \rangle$,
and acts on functions $u\in D(\mathcal{L}_J)$, which satisfy Neumann boundary conditions $\langle \nabla u(x), \mathbf n(x) \rangle = 0$ on $\partial K$ due to Assumption~\ref{assump:boundary} (where $\mathbf n(x)$ is the outward unit normal vector).

Let
$\mathcal{O}_1 u := \Delta u - \langle \nabla f, \nabla u \rangle$
and
$\mathcal{O}_2 u := - \langle J\nabla f, \nabla u \rangle$,
so that $\mathcal{L}_J = \mathcal{O}_1 + \mathcal{O}_2$.

\subsection{Adjoint of \texorpdfstring{$\mathcal{O}_1$}{O1}}
The operator $\mathcal{O}_1 u = \Delta u - \langle \nabla f, \nabla u \rangle$ can be written as $\mathcal{O}_1 u = e^{f(x)} \nabla \cdot (e^{-f(x)} \nabla u(x))$. This is the infinitesimal generator of a reversible Langevin dynamics with invariant measure $\mu$. For $u, v \in D(\mathcal{L}_J)$ (hence satisfying Neumann boundary conditions), $\mathcal{O}_1$ is self-adjoint with respect to $\mu$:
\begin{align*}
    \langle \mathcal{O}_1 u, v \rangle_\mu &= \frac{1}{Z} \int_K \left(e^f \nabla \cdot \left(e^{-f} \nabla u\right)\right) v e^{-f} dx
    = \frac{1}{Z} \int_K \left(\nabla \cdot \left(e^{-f} \nabla u\right)\right) v dx \\
    &= \frac{1}{Z} \left[ -\int_K \left(e^{-f} \nabla u\right) \cdot \nabla v dx + \int_{\partial K} v \left(e^{-f} \nabla u\right) \cdot \mathbf n_{out} dS \right] \\
    &= -\frac{1}{Z} \int_K e^{-f} \left(\nabla u \cdot \nabla v\right) dx \quad (\text{since } \langle \nabla u, \mathbf n_{out} \rangle=0).
\end{align*}
This expression is symmetric in $u$ and $v$, so that $\langle \mathcal{O}_1 u, v \rangle_\mu = \langle u, \mathcal{O}_1 v \rangle_\mu$. Thus, $\mathcal{O}_1^* = \mathcal{O}_1$.

\subsection{Adjoint of \texorpdfstring{$\mathcal{O}_2$}{O2} under Assumption~\ref{assump:nablaJ}}
The operator $\mathcal{O}_2 u = - \langle J\nabla f, \nabla u \rangle$. We seek $\mathcal{O}_2^*$ such that $\langle \mathcal{O}_2 u, v \rangle_\mu = \langle u, \mathcal{O}_2^* v \rangle_\mu$.
\begin{align*}
    \langle \mathcal{O}_2 u, v \rangle_\mu &= -\frac{1}{Z} \int_K \langle J(x)\nabla f(x), \nabla u(x) \rangle v(x) e^{-f(x)} dx \\
    &= -\frac{1}{Z} \int_K \left(v(x) J(x)\nabla f(x) e^{-f(x)}\right) \cdot \nabla u(x) dx.
\end{align*}
Using integration by parts (divergence theorem, $\int_K \mathbf{F} \cdot \nabla u \,dx = -\int_K u (\nabla \cdot \mathbf{F}) \,dx + \int_{\partial K} u (\mathbf{F} \cdot \mathbf n_{out}) \,dS$, with $\mathbf{F}(x) = v(x) J(x)\nabla f(x) e^{-f(x)}$):
\begin{align*}
    \langle \mathcal{O}_2 u, v \rangle_\mu
    & = -\frac{1}{Z} \Bigg[ -\int_K u(x) \nabla \cdot \left(v(x) J(x)\nabla f(x) e^{-f(x)}\right) dx
    \\
    &\qquad\qquad+ \int_{\partial K} u(x) v(x) \left(J(x)\nabla f(x) e^{-f(x)}\right) \cdot \mathbf n_{out}(x) dS \Bigg].
\end{align*}
The boundary term is $\int_{\partial K} u v e^{-f} (J\nabla f \cdot \mathbf n_{out}) dS$.
Since $\mathbf n_{out} = -\mathbf n_{in}$ and by Assumption~\ref{assump:boundary} ($J\mathbf n_{in}=0$), we have
\begin{align*}
J\nabla f \cdot \mathbf n_{out} &= - \left(J\nabla f \cdot \mathbf n_{in}\right) = - \left(\nabla f \cdot J^\top \mathbf n_{in}\right) = - \left(\nabla f \cdot (-J \mathbf n_{in})\right) 
\\
&= (\nabla f \cdot J \mathbf n_{in}) = (\nabla f \cdot \mathbf{0}) = 0.
\end{align*}
Thus, the boundary term vanishes.
Therefore,
\begin{align*}
\langle \mathcal{O}_2 u, v \rangle_\mu = \frac{1}{Z} \int_K u(x) \nabla \cdot \left(v(x) J(x)\nabla f(x) e^{-f(x)}\right) dx.
\end{align*}
The adjoint operator $\mathcal{O}_2^*$ acting on a function $v$ is thus:
\begin{align*}
    \mathcal{O}_2^* v(x) &= e^{f(x)} \nabla \cdot \left(v(x) J(x)\nabla f(x) e^{-f(x)}\right) \\
    &= e^{f(x)} \left[ (\nabla v(x)) \cdot \left(J(x)\nabla f(x)e^{-f(x)}\right) + v(x) \nabla \cdot \left(J(x)\nabla f(x)e^{-f(x)}\right) \right] \\
    &= \langle J(x)\nabla f(x), \nabla v(x) \rangle + v(x) e^{f(x)} \left( \nabla \cdot \left(J(x)\nabla f(x)e^{-f(x)}\right) \right).
\end{align*}
By chain rule and product rule, we can compute that
\begin{align*}
&\nabla \cdot \left(J(x)\nabla f(x)e^{-f(x)}\right) 
\\
&=-\nabla \cdot \left(J(x)\nabla\left(e^{-f(x)}\right)\right) 
=-\sum\nolimits_i \frac{\partial}{\partial_{x_{i}}} \left( \sum\nolimits_k J_{ik}(x) \left(\frac{\partial}{\partial_{x_{k}}} e^{-f(x)}\right) \right)
\\
&=-\sum\nolimits_i  \left(\frac{\partial}{\partial_{x_{i}}}\sum\nolimits_k J_{ik}(x)\right) \left(\frac{\partial}{\partial_{x_{k}}} e^{-f(x)}\right)
-\sum\nolimits_{i}\sum\nolimits_{k}J_{ik}(x)\frac{\partial^{2}e^{-f(x)}}{\partial x_{i}\partial x_{k}}.
\end{align*}

Under Assumption~\ref{assump:nablaJ}, we have $\nabla \cdot J(x) = 0$. Consequently, 
\[
\sum_i  \left(\frac{\partial}{\partial_{x_{i}}}\sum_k J_{ik}(x)\right) \left(\frac{\partial}{\partial_{x_{k}}} e^{-f(x)}\right)=0.
\]
Moreover, $J(x)$ is skew-symmetric, and $\left(\frac{\partial^{2}e^{-f(x)}}{\partial x_{i}\partial x_{k}}\right)_{i,k}$ is a symmetric matrix, 
which implies that
$\sum_{i}\sum_{k}J_{ik}(x)\frac{\partial^{2}e^{-f(x)}}{\partial x_{i}\partial x_{k}}=0$.    
Hence, we conclude that 
$\nabla \cdot \left(J(x)\nabla f(x)e^{-f(x)}\right)=0$,
and the expression for $\mathcal{O}_2^* v(x)$ simplifies to:
\begin{align*}
\mathcal{O}_2^* v(x) = \langle J(x)\nabla f(x), \nabla v(x) \rangle.
\end{align*}

\subsection{Decomposition of \texorpdfstring{$\mathcal{L}_J$}{LJ} under Assumption~\ref{assump:nablaJ}}
Now we can compute $\mathcal{L}_S $ and $\mathcal{L}_A $ using the simplified $\mathcal{O}_2^*$.

\textbf{The Symmetric Part $\mathcal{L}_S $:}
\begin{align*}
    \mathcal{L}_S u &= \frac{1}{2}(\mathcal{L}_J u + \mathcal{L}_J^* u) = \frac{1}{2}( (\mathcal{O}_1 u + \mathcal{O}_2 u) + (\mathcal{O}_1^* u + \mathcal{O}_2^* u) ) \\
    &= \frac{1}{2}( \mathcal{O}_1 u + \mathcal{O}_2 u + \mathcal{O}_1 u + \mathcal{O}_2^* u )= \mathcal{O}_1 u + \frac{1}{2}(\mathcal{O}_2 u + \mathcal{O}_2^* u). \quad (\text{since } \mathcal{O}_1^* = \mathcal{O}_1)
\end{align*}
Substituting the expressions for $\mathcal{O}_1 u$, $\mathcal{O}_2 u = -\langle J\nabla f, \nabla u \rangle$, and the simplified $\mathcal{O}_2^* u = \langle J\nabla f, \nabla u \rangle$:
\begin{align*}
\mathcal{O}_2 u + \mathcal{O}_2^* u = -\langle J(x)\nabla f(x), \nabla u(x) \rangle + \langle J(x)\nabla f(x), \nabla u(x) \rangle = 0.
\end{align*}
Thus, the symmetric part is:
\begin{align*}
\mathcal{L}_S u(x) = \mathcal{O}_1 u(x) = \Delta u(x) - \langle \nabla f(x), \nabla u(x) \rangle.
\end{align*}

\textbf{The Skew-Symmetric Part $\mathcal{L}_A $:}
\begin{align*}
    \mathcal{L}_A u = \frac{1}{2}(\mathcal{L}_J u - \mathcal{L}_J^* u)= \frac{1}{2}( (\mathcal{O}_1 u + \mathcal{O}_2 u) - (\mathcal{O}_1 u + \mathcal{O}_2^* u) ) 
    = \frac{1}{2}(\mathcal{O}_2 u - \mathcal{O}_2^* u).
\end{align*}
Substituting $\mathcal{O}_2 u = -\langle J\nabla f, \nabla u \rangle$ and the simplified $\mathcal{O}_2^* u = \langle J\nabla f, \nabla u \rangle$:
\begin{align*}
    \mathcal{O}_2 u - \mathcal{O}_2^* u = -\langle J(x)\nabla f(x), \nabla u(x) \rangle - \langle J(x)\nabla f(x), \nabla u(x) \rangle
    = -2\langle J(x)\nabla f(x), \nabla u(x) \rangle.
\end{align*}
Thus, the skew-symmetric part is:
$\mathcal{L}_A u(x) = -\langle J(x)\nabla f(x), \nabla u(x) \rangle$.
Under Assumption~\ref{assump:nablaJ}, $\mathcal{L}_A$ is anti-self-adjoint with respect to $\mu$.


\vskip 0.2in
\bibliographystyle{plainnat}
\bibliography{langevin}

\end{document}